\theoremstyle{definition}
\newtheorem{theorem}{Theorem}
\newtheorem*{theorem*}{Theorem}
\newtheorem{definition}{Definition}
\newtheorem{proposition}{Proposition}
\newtheorem*{proposition*}{Proposition}
\newtheorem{lemma}{Lemma}
\newtheorem*{lemma*}{Lemma}
\newtheorem{remark}{Remark}
\renewcommand{\tr}{\text{Tr}}
\newcommand{\V}[1]{\boldsymbol{#1}}
\newcommand{\M}[1]{\mathbf{#1}}
\newcommand{\bome}{\boldsymbol{\Omega}}
\newcommand{\bSig}{\boldsymbol{\Sigma}}
\newcommand{\blam}{\boldsymbol{\lambda}}
\newcommand{\bsig}{\boldsymbol{\sigma}}
\newcommand{\balp}{\boldsymbol{\alpha}}
\newcommand{\bic}{\boldsymbol{I}^c}
\newcommand{\bqc}{\boldsymbol{q}^c}
\newcommand{\diag}{\mathrm{diag}}
\newcommand{\Qing}[1]{\textcolor{red}{#1}}
\newcommand{\Ning}[1]{\textcolor{blue}{#1}}
\DeclareMathOperator*{\argmax}{arg\,max}
\def\threeImages#1#2#3#4#5#6#7#8#9 
\def\twoImages#1#2#3#4#5#6 
\newcommand{\startsupplement}{%
    \setcounter{section}{0}%
    \renewcommand{\thesection}{S.\arabic{section}}%
}
\begin{document}

\title{Community Detection with Heterogeneous Block Covariance Model}

  \author{Xiang Li\\
    Department of Statistics, George Washington University  \\
    and \\
    Yunpeng Zhao \\
    Department of Statistics, Colorado State University \\
    and \\
    Qing Pan \\
    Department of Biostatistics and Bioinformatics, George Washington University \\
    and \\
    Ning Hao \\
    Department of Mathematics, University of Arizona 
    }

\maketitle

\begin{abstract}
Community detection is the task of clustering objects based on their pairwise relationships. Most of the model-based community detection methods, such as the stochastic block model and its variants, are designed for networks with binary (yes/no) edges. In many practical scenarios, edges often possess continuous weights, spanning positive and negative values, which reflect varying levels of connectivity. To address this challenge, we introduce the heterogeneous block covariance model (HBCM) that defines a community structure within the covariance matrix, where edges have signed and continuous weights. Furthermore, it takes into account the heterogeneity of objects when forming connections with other objects within a community. A novel variational expectation-maximization algorithm is proposed to estimate the group membership. The HBCM provides provable consistent estimates of memberships, and its promising performance is observed in numerical simulations with different setups. The model is applied to a single-cell RNA-seq dataset of a mouse embryo and a stock price dataset. Supplementary materials for this article are available online.
\end{abstract}

\noindent%
{\it Keywords:} Variational EM algorithm, weighted network, covariance matrix, label consistency, gene expression data.

\section{Introduction}

Cluster analysis is a statistical technique that groups a set of objects into different clusters, ensuring that objects in the same cluster share more similarities with each other than with those from different clusters \citep{macqueen1967some,jain2010data}. This technique serves as a primary task for exploratory analysis and has found applications in numerous fields, including machine learning, data compression, bioinformatics, image analysis, and computer graphics \citep{xu2005survey,kaufman2009finding,gan2020data}.  

Clustering methods are typically designed for grouping data points in a (possibly high-dimensional) Euclidean space into clusters. For example, K-means~\citep{kmeans} searches for cluster centers and assigns data points to the nearest center such that the sum of squared distances from the data points to the corresponding cluster centers is minimized. Spectral clustering \citep{ng2002spectral} employs a similarity matrix of data points based on a certain distance metric and performs cluster analysis using eigenvectors. 

A large class of clustering methods is based on mixture models, which assume that data points are sampled from different sub-populations. For example, in a Gaussian mixture model \citep{Fraley2002}, such sub-populations are modeled by normal distributions with various parameters. The selection of variables has been studied when the distributions are high-dimensional \citep{pan2007penalized,wang2008variable,bouveyron2014model}.

We consider a different clustering problem: clustering at the feature/variable level, which naturally arises, particularly in the study of gene expression data, but has not received enough attention in the statistics literature. In the context of gene expression data, a data point represents a biological sample, and a variable represents a gene. In bioinformatics applications, it is desirable to find clusters of highly correlated genes whose expression levels are regulated simultaneously, thus working synergistically to achieve the same biophysical function. Consequently, correlations, rather than similarity in means or distance between expression level vectors, best describe this kind of relationship.
Weighted gene co-expression network analysis (WGCNA) \citep{langfelder2008wgcna} is one of the most popular methods for this purpose, constructing a network based on the sample correlation matrix and employing network analysis tools for cluster analysis. However, such construction is usually ad-hoc. For example, it remains unclear how to choose the optimal threshold when converting the correlation matrix to a simple graph (i.e., a network with undirected and unweighted edges), and such conversion may result in information loss.

We approach the feature clustering problem from a different perspective. Instead of performing cluster analysis on an artificially constructed network, we directly model and conduct clustering using the raw data matrix, where rows represent data points (e.g., samples) and columns represent features (e.g., genes). Existing clustering algorithms such as model-based methods (e.g., Gaussian mixture model) and distance-based approaches (e.g., K-means) are not ideal choices for feature clustering. These algorithms are either designed for data points that are assumed to be independently generated or are based on certain distance metrics on the data points. However, we aim to cluster correlated objects, and the cluster structure is encoded in their covariances. Note that spectral clustering can, in principle, be used in feature clustering; for example, \cite{luo2007constructing} applied it to cluster gene co-expression matrices. Nevertheless, the statistical justification remains unclear.

We propose a heterogeneous block covariance model (HBCM) in which the rows of the data matrix are assumed to follow a multivariate normal distribution, and the covariance matrix depends on both the clustering structure and the individual characteristics of each column. Specifically, we use a covariance matrix to model the covariance structure at the cluster level, which serves as the backbone of the feature-level covariance matrix. Additionally, we introduce heterogeneous parameters for columns to reflect the individual characteristics of forming a connection with other columns. These heterogeneous parameters allow features in the same cluster to have different covariances, providing a more flexible covariance structure than a compound symmetric covariance or correlation structure. Moreover, it ensures that our algorithm is scale invariant.
 
The block covariance model (BCM) without the heterogeneous parameters has been studied in the literature. \cite{huang2010correlation} proposed an efficient algorithm to sample data from a multivariate distribution with a blockwise covariance matrix but did not consider clustering or parameter estimation under this model. \cite{liuqing} proposed a profile likelihood method based on the BCM to estimate the unknown group membership and other parameters.  \cite{bunea2020model} established minimax lower bounds for exact partition recovery under a variant of BCM (referred to as the $G$-block covariance model in the paper) and proposed clustering algorithms with guaranteed exact recovery of clusters. However, the more complex covariance structure allowed by the HBCM has not been considered in the literature. The gene-level heterogeneous parameters in the HBCM allow for more gene-specific variation besides the common covariance within clusters. Moreover, we study the HBCM from a likelihood-based perspective. To overcome the computational burden, we introduce an equivalent form of the HBCM by adding a second layer of latent variables, resulting in an efficient variational expectation-maximization (EM) algorithm. Additionally, the covariance structure in the HBCM is different from the block-diagonal covariance structure studied in the literature \citep{sun2014adaptive,tan2015cluster,devijver2018block}, where the off-diagonal blocks (or the expected values in a Bayesian setting) are zero.


 

Covariance-based clustering is related to another widely-studied area – community detection in networks, which aims to partition the node set into cohesive groups with many links within and fewer links between \citep{Newman2004Review, Fortunato2010}. The stochastic block model (SBM) is perhaps the best studied model for community detection \citep{abbe2017community, zhao2017survey}. Recently, the model has been generalized to weighted networks \citep{barrat2004architecture, jog2015information, palowitch2017significance, xu2020optimal}, called weighted stochastic block models (WSBM). With variables as nodes and covariances between variables as edges, the sample covariance/correlation matrix can be considered as a weighted network. As aforementioned, the population covariance matrix in the Heterogeneous Block Covariance Model (HBCM) has a block structure, similar to the WSBM, and the introduction of heterogeneous parameters is inspired by the degree-corrected stochastic block model (DCSBM). However, the data structure and the model (HBCM) considered in this paper are substantially different from those in community detection. The weighted network itself is usually treated as the raw data in community detection, and the edges are assumed to be independently generated. In contrast, the entries in the sample covariance matrix cannot be assumed to be independent of each other and, therefore, cannot be modeled under the WSBM framework.

The contribution of this paper is three-fold. Firstly, we propose the HBCM, which incorporates both individual-level and community-level information in the covariances between features. This makes the model applicable to a wide range of data with community structures on features. Secondly, we develop a novel variational EM algorithm to efficiently optimize SBM-like profile-likelihoods. This algorithm addresses the high computational cost associated with searching for the best label assignment over all possibilities. The key innovation is the introduction of a second layer of latent variables, in addition to group membership, enabling the M-steps to have a closed-form solution and reducing the overall computational complexity to polynomial time. Thirdly, our theoretical study shows the identifiability of the group membership and other parameters in the HBCM and the consistency of the estimated group membership with the true group membership under mild technical conditions. By combining these three contributions, the paper presents a comprehensive and efficient approach for feature clustering and provides theoretical guarantees for the estimated group membership.


\section{Methodology}
\subsection{Model description}

Let $\M{X}=[X_{ij}]$ denote an $N\times P$ data matrix. In the context of gene expression data, each row of $\M{X}$ represents a biological sample, and each column represents a gene. Without loss of generality, we assume that the data have been centered. That is, the rows independently follow a $P$-variate normal distribution $N(0, \M{\Sigma})$, where the covariance structure $\M{\Sigma}$ reflects the regulation of gene expression. Our goal is to cluster the $P$ columns into $K$ non-overlapping communities. On a high level, a key difference between the model described below and most classical statistical models in clustering analysis is that we aim to cluster the correlated features rather than samples, and the community structure is encoded in the covariance matrix of $\M{X}$.

We now describe the model in detail. Throughout the paper, we use $i$ as the row index, $j$ as the column index, and $k, l$ as the indices for communities. Group membership is denoted by $\V{c}=(c_1,...,c_P)$ with $c_j\in [K]$, where $[K]$ is a short notation for the set of integers ${1,...,K}$. Group membership is treated as unknown throughout the paper. The covariance structure at the community level is characterized by a $K\times K$ symmetric matrix $\M{\Omega} =[\omega_{kl}]$. 

We assume that each $X_{ij}$ has mean zero, so the community information is contained in the covariance structure. Moreover, the rows of $\M{X}$ are independently and identically distributed given $\V{c}$. The population covariance between any pair $(X_{ij},X_{ij'})$ given $\V{c}$ is of the form
\begin{equation}\label{eq:1.1}
 \Sigma_{jj'}=\text{Cov}(X_{ij}, X_{ij'}|\V{c}) =
\begin{cases}
\lambda_j\lambda_{j'}\omega_{c_j c_{j'}} & j \neq j' \\
\lambda_j^2\omega_{c_j c_{j}} + \sigma_j^2 &  j = j' 
\end{cases},   
\end{equation}
where $\lambda_j\in \mathbb{R}\backslash\{0\}$ and $\sigma_j^2 \in \mathbb{R}^+$. Let $\V{\lambda}=(\lambda_1,...,\lambda_P)$ and $\V{\sigma}^2=(\sigma_1^2,...,\sigma_P^2)$. 
Note that $\M{\Omega}$ needs to be positive semi-definite to make $\M{\Sigma}$ a well-defined covariance matrix for arbitrary $\V{\sigma}^2$. 
This assumption implies that the within-community connections are generally stronger than the between-community connections, which is a common assumption in community detection \citep{Newman2004Review, Fortunato2010}. Refer to Lemma S.1 in the Supplementary Materials for further  details.

We call the model \eqref{eq:1.1} a heterogeneous block covariance model (HBCM). It is of a similar format as the DCSBM for networks with binary edges \citep{Karrer10, Zhaoetal2012}. The matrix $\M{\Omega}$ plays a similar role as the block matrix, where each element represents the edge probability of two nodes given labels. The parameters in $\V{\lambda}$ play similar roles as the degree parameters, reflecting the propensity of each feature to form relationships with other features. There are three key differences between model \eqref{eq:1.1} and the DCSBM. Firstly, our model is concerned with covariance structure, while the DCSBM depicts the mean structure of a network. Secondly, unlike the degree parameters in the DCSBM, the analogous parameters ${\lambda_j}$ in the HBCM can be either positive or negative. This means that covariances in the same community can potentially have opposite signs. This allows negatively correlated genes to be in the same community, which has been discovered in the bioinformatics literature \citep{dhillon2003diametrical,zeng2010maximization,matsumura2011epigenetic}. Thirdly, the model in \eqref{eq:1.1} contains another set of parameters $\V{\sigma}^2$, which do not have an analog in the DCSBM. Note that without $\V{\sigma}^2$, within-community correlations must be 1 or -1. The parameters $\V{\sigma}^2$ and $\V{\lambda}$ make our model more flexible to handle complex gene expression data.

The HBCM includes two special cases that have been studied before. The model is closely related to the BCM \citep{liuqing} when $\lambda_j/\sigma_j$ is equal to a constant for all $j\in[P]$. In this case, the correlation matrix becomes a block matrix. Additionally, the HBCM is reduced to the $G$-block covariance model studied in \cite{bunea2020model} when $\lambda_j=1$ for all $j\in[P]$. However, in general, the HBCM is equivalent to neither of these two models and accommodates a more flexible community structure on covariances.

Note that two different sets of parameters $\{\V{c}, \bome, \blam, \bsig^2\}$ and $\{\tilde{\V{c}}, \tilde{\bome}, \tilde{\blam}, \tilde{\bsig}^2\}$ may define the same covariance $\bSig$. First of all, the group memberships of two systems may be up to a permutation, which is a common issue and not problematic. Moreover, $\blam$ and $\bome$ are not uniquely defined. For example, $\tilde\blam=\blam/2$ and $\tilde\bome=4\bome$ would give an equivalent model. We will study the identifiability problem and the well-definedness of the group membership in Section \ref{sec:identi}. Specifically, Theorem \ref{thm1} therein ensures that the ambiguity of the parameter system comes from only these two sources. Importantly, the number of communities $K$ and the membership up to a permutation are uniquely defined.

With all the model assumptions above, we can write the log-likelihood function of $\M{X}$ (given labels and parameters) as, up to a constant, 
\begin{equation}
    L(\M{X}|\V{c},\bome,\blam, \bsig^2) =  - \frac{N}{2}\log|\M{\Sigma}|-\frac{1}{2}\tr[\M{S}\M{\Sigma}^{-1}], \label{eq:1.2}
\end{equation}
where 
$\M{S}=\frac{1}{N}\M{X}^{\top}\M{X}$ is the sample covariance matrix of the centered $\M{X}$. 


\subsection{Variational expectation-maximization algorithm}\label{sec:alg}

To perform community detection based on \eqref{eq:1.2}, one can, in principle, employ the profile-likelihood technique \citep{Bickel&Chen2009, liuqing}. This involves treating $\V{c}$ as unknown parameters, rewriting \eqref{eq:1.2} as a profile-likelihood function of $\V{c}$ by maximizing over the other parameters given $\V{c}$, and  searching for the optimal $\V{c}$ that maximizes the profile-likelihood. Due to the intractability of exploring all possible $K^P$ solutions, heuristic search techniques such as Tabu search can be applied \citep{Glover&Laguna1997}. Nevertheless, it is important to note that such algorithms still incur significant computational costs. Additionally, maximizing \eqref{eq:1.2} for a fixed $\V{c}$ poses its own challenges, further complicating the execution of the profile-likelihood technique.

To develop a computationally efficient algorithm for maximizing \eqref{eq:1.2}, we will consider the unknown labels $\V{c}$ as latent random variables. In cluster analysis, the treatment of cluster labels can take two common approaches: they can either be treated as fixed unknown parameters or as latent variables, depending on convenience and specific modeling requirements. For example, the EM algorithm for estimating a Gaussian mixture model presents a soft version of the K-means algorithm by incorporating latent variables to capture the uncertainty associated with cluster assignments \citep{hastie2001}.

In this section, we present an efficient variational EM algorithm for the HBCM, which stands as one of the key contributions in this paper. This algorithm involves introducing an additional set of latent variables alongside the group membership $\V{c}$.


Let each $c_j$ independently follow Multinomial$(1,\V{\pi})$ with $\V{\pi}=(\pi_1,...,\pi_K)$. 
The joint log-likelihood function of $\M{X}$ and $\V{c}$, up to a constant, is 
\begin{align*}
    L(\M{X},\V{c}|\Phi) = \sum_{k=1}^K P_k \log \pi_k  - \frac{N}{2}\log|\M{\Sigma}|-\frac{1}{2}\tr[\M{S}\M{\Sigma}^{-1}],
\end{align*}
where $\Phi$ is the set of all parameters, i.e., $\Phi=\{\V{\pi}, \M{\Omega},\V{\lambda},\V{\sigma}^2 \}$, and $P_k$ is the number of members in group $k$, i.e., $P_k = \sum_{j=1}^P 1(c_j=k)$ with $1(\cdot)$ as the indicator function.

It is natural to apply the EM algorithm to estimate the model when $\V{c}$ is latent. As highlighted in \cite{neal1998view}, the EM algorithm can be interpreted as a coordinate ascent algorithm. In other words, it optimizes the objective function by iteratively updating the parameter estimates and the posterior distributions of latent variables. We introduce this perspective in our context. Let $f(\cdot)$ be the marginal likelihood function of $\M{X}$,  $L(\cdot)$ be the log-likelihood counterpart, and $q(\V{c})$ be a probability measure on $\V{c}$. By Jensen's inequality, $L(\M{X}|\Phi) = \log f(\M{X}|\Phi) =\log \sum_{\V{c} \in [K]^P} q(\V{c})\frac{f(\M{X},\V{c}|\Phi)}{q(\V{c})}  \geq J(q(\V{c}),\Phi)$, where 
\begin{equation} \label{eq:1.3}
\begin{split}
    J(q(\V{c}),\Phi) = \sum_{\V{c} \in [K]^P} q(\V{c})L(\M{X},\V{c}|\Phi) - \sum_{\V{c} \in [K]^P} q(\V{c})\log q(\V{c}). 
\end{split}
\end{equation}
 For any given $\Phi$,  $J(q(\V{c}),\Phi)$ is maximized at $q(\V{c})=f(\V{c}|\M{X},\Phi)$, i.e., equality holds in Jensen's inequality, where $f(\V{c}|\M{X},\Phi)$ is the conditional probability mass function for labels. Therefore, the maximizer of $J(q(\V{c}),\Phi)$ over $\Phi$ is the same as that of $L(\M{X}|\Phi)$. In practice, the EM algorithm iteratively maximizes $J(q(\V{c}),\Phi)$ with respect to the parameters $\Phi$ and the probability measure $q(\V{c})$.


 There are two challenges when directly implementing the classical EM algorithm. Firstly, the EM algorithm for optimizing $L(\M{X},\V{c}|\Phi)$ faces a unique issue: the complete likelihood $L(\M{X}, \V{c}|\Phi)$ does not take the form of $\sum_{m=1}^M T_m(\M{X},\V{c})\eta_m(\Phi)$, i.e., the summation of products between the sufficient statistics of $(\M{X},\V{c})$ and functions of the parameters $\Phi$, where $M$ is the number of terms in the summation. If $L(\M{X}, \V{c}|\Phi)$ can be written as such a form, then up to a constant,
\begin{align*}
\max_{\Phi} J(q(\V{c}),\Phi) = \max_{\Phi} \sum_{m=1}^M \left( \sum_{\V{c} \in [K]^P} q(\V{c})  T_m(\M{X},\V{c}) \right)\eta_m(\Phi).
\end{align*}
Suppose one needs to use a certain numerical algorithm (e.g., Newton–Raphson) to update $\Phi$. In such a scenario, $\sum_{\V{c} \in [K]^P} q(\V{c}) T_m(\M{X},\V{c})$ becomes a constant and only needs to be computed once during an M-step. Without this particular form, the separation between the summation and the maximization cannot be achieved, resulting in a significant increase in computational complexity within the M-step. Secondly, the summation across $K^P$ terms becomes intractable, even for a moderately-sized $P$. This issue is a common challenge encountered when applying the EM algorithm in network community detection contexts \citep{zhao2017survey, amini2013pseudo}.

In the remainder of this section, we delve into the details of how to address the two challenges and provide a comprehensive account of the algorithm. To resolve the first issue, we introduce a second layer of latent variables $\V{\alpha}=[\alpha_{ik}]_{N\times K}$ in conjunction with the latent group membership $\V{c}$. We propose the following data-generating process:
\begin{enumerate}
	\item  For $j=1,...,P$, generate $c_j$ independently and identically from a multinomial distribution with parameter $\V{\pi}=(\pi_1,...,\pi_K)$;
	\item  For $i=1,...,N$, generate $\V{\alpha}_i=(\alpha_{i1},...,\alpha_{iK})$ independently and identically from the $K$-variate normal distribution $N(\V{0},\M{\Omega})$; 
	\item For each $i,j$, conditional on $\V{\alpha}_i$ and $c_j$, let $X_{ij}=\lambda_j \alpha_{ic_j}+ \sigma_j \epsilon_{ij}$, where $\epsilon_{ij}$ is independently drawn from the standard normal distribution $N(0,1)$.

\end{enumerate}
The inclusion of the second latent layer ($\V{\alpha}_i$ in step 2) draws inspiration from random-effects models, where a random-effect term is incorporated to accommodate additional sources of variance beyond the sampling error. It is straightforward to verify that the covariance structure derived from the new  data-generating process is equivalent to the one posited in~\eqref{eq:1.1}. With the introduction of the second latent layer, the complete log-likelihood $L(\M{X},\V{c},\V{\alpha}|\Phi)$ belongs to the exponential family, as shown below:
\begin{align*}
L(\M{X},\V{c},\V{\alpha}|\Phi) 
= \,\, & \sum_{j=1}^P\sum_{k=1}^K 1(c_j=k)\log{\pi_k} -\frac{N}{2}\log|\M{\Omega}|-\frac{1}{2}\sum_{i=1}^N\V{\alpha}_i^\top\M{\Omega}^{-1}\V{\alpha}_i \\
    &  +  \sum_{j=1}^P\sum_{i=1}^N\sum_{k=1}^K 1(c_j=k)\bigg( -\frac{1}{2}\log\sigma_j^2-\frac{(X_{ij}-\lambda_j\alpha_{ik})^2}{2\sigma_j^2}\bigg).
\end{align*}
By Jensen's inequality, the objective function of the EM algorithm becomes
\begin{align*}
J(q(\V{c},\V{\alpha}), \Phi)&=\sum_{\V{c} \in [K]^P}\int_{\V{\alpha}\in \mathbb{R}^{N\times K}} q(\V{c}, \V{\alpha})L(\M{X},\V{c},\V{\alpha}|\Phi)d\V{\alpha} -\sum_{\V{c} \in [K]^P}\int_{\V{\alpha}\in \mathbb{R}^{N\times K}} q(\V{c},\V{\alpha})\log q(\V{c},\V{\alpha}) d\V{\alpha},
\end{align*}
where $q(\V{c},\V{\alpha})$ is the joint posterior distribution of $\V{c}$ and $\V{\alpha}$.

Subsequently, we address the second challenge, namely the intractable summation over $K^P$ terms and the integration over a high-dimensional space involved in computing $q(\V{c},\V{\alpha})$. The variational EM \citep{blei2017variational} algorithm serves as an alternative to the classical EM algorithm when the computational feasibility of the model under consideration is compromised. The variational EM employs an approximate inference technique to estimate the latent variables. To elaborate, the variational EM algorithm introduces constraints on the posterior distribution to render it a particular tractable form. The inference step then seeks the constrained posterior distribution that best approximates the exact posterior distribution. In our scenario, we introduce the variational assumption that the posterior distributions of the two sets of latent variables are independent, i.e., $q(\V{c},\V{\alpha})=q_1(\V{c})q_2(\V{\alpha})$. This assumption is an approximation because the independence between the posterior distributions of $\V{c}$ and $\V{\alpha}$ is not guaranteed. The inspiration for imposing this constraint arises from the latent block model (LBM) \citep{govaert2008block, wang2021efficient}, a bi-clustering model that aims to simultaneously cluster rows and columns. The LBM employs the variational approximation $q(\V{c},\V{z})=q_1(\V{c})q_2(\V{z})$ for column labels $\V{c}$ and row labels $\V{z}$. Although our work does not encompass bi-clustering, the joint distribution of the latent variables $\V{c}$ and $\V{\alpha}$ shares a similar dependency structure as $\V{c}$ and $\V{z}$ in the LBM. The rationale behind the variational approximation is that if the identified communities align with the true structure, the posterior distribution $q_1(\V{c})$ will become a Dirac measure concentrated on the true group membership. As a Dirac measure, it becomes independent of $q_2(\V{\alpha})$.

Given the variational assumption, we write the objective function of the variational EM algorithm as
\begin{align} 
 J(q_1(\V{c}),q_2(\V{\alpha}), \Phi) = &\sum_{\V{c} \in [K]^P}\int_{\V{\alpha}\in \mathbb{R}^{N\times K}} q_1(\V{c})q_2( \V{\alpha})L(\M{X},\V{c},\V{\alpha}|\Phi)d\V{\alpha} \nonumber \\
  & - \sum_{\V{c} \in [K]^P}\int_{\V{\alpha}\in \mathbb{R}^{N\times K}} q_1(\V{c})q_2( \V{\alpha})\log (q_1(\V{c})q_2( \V{\alpha}))d\V{\alpha}. \label{eq:1.4}
\end{align}
The algorithm then endeavors to maximize \eqref{eq:1.4} by iteratively updating $\Phi$, $q_1(\V{c})$, and $q_2(\V{\alpha})$ in alternating steps. The specifics of this process are outlined in Algorithm \ref{algo}, while its justification is provided in the supplementary materials.
 In Algorithm \ref{algo}, notation $\mathbb{E}_{q_1}[\cdot]$ and $\mathbb{E}_{q_2}[\cdot]$ denote the expectations with respect to $q_1(\V{c})$ and $q_2(\V{\alpha})$, respectively. That is, for all $h$,
\begin{align*}
\mathbb{E}_{q_1}[h] & = \sum_{\V{c} \in [K]^P} h q_1(\V{c}), \,\, \mathbb{E}_{q_2}[h]  = \int_{\V{\alpha}\in \mathbb{R}^{N\times K}} h q_2( \V{\alpha})d\V{\alpha}.
\end{align*}

 There are two noteworthy details to highlight. Firstly, as demonstrated in Algorithm~\ref{algo}, both the parameter estimates in the M-step and the updates to the posterior distributions in the E-step possess closed-form expressions, leading to a further reduction in computational expenses. Secondly, the posterior distributions $q_1(\V{c})$ and $q_2(\V{\alpha})$ can be factorized without the need for additional constraints.

 Specifically, given $q_2(\V{\alpha})$ and $\Phi$, $q_1(\V{c})$ can be factorized as the product of $P$ distributions of the columns, i.e. $q_1(\V{c})=\prod_{j=1}^Pq_{1j}(c_j)$. Similarly, given $q_1(\V{c})$ and $\Phi$, $q_{2}(\V{\alpha})$  can be factorized as a product of $N$ distributions of the rows, i.e. $q_2(\V{\alpha})=\prod_{i=1}^N q_{2i}(\V{\alpha}_i)$. Furthermore, each $q_2(\V{\alpha}_i)$ is in fact the density function of a $K$-variate normal distribution $N(\hat{\V{\mu}}_i, \hat{\M{V}})$, where
\begin{align}
    \hat{\V{\mu}}_i & = \left(\hat{\M{\Omega}}^{-1}+\sum_{j=1}^P\M{D}_j\right)^{-1}\left(\sum_{j=1}^P\M{D}_j\V{b}_{ij}\right), \,\, \hat{\M{V}} = \left(\hat{\M{\Omega}}^{-1}+\sum_{j=1}^P\M{D}_j\right)^{-1} \label{post_parameter},
\end{align} 
\begin{align*}
\M{D}_{j}&= \frac{\hat{\lambda}_j^2}{\hat{\sigma}_j^2}\begin{pmatrix}
\mathbb{E}_{q_1}(1(c_j=1)) & 0 & \cdots & 0 \\
0 & \mathbb{E}_{q_1}(1(c_j=2)) & \cdots & 0 \\
\vdots & \vdots & \ddots & \vdots\\
0 & 0 & \cdots & \mathbb{E}_{q_1}(1(c_j=K))
\end{pmatrix}_{K \times K}, \,\,
    \V{b}_{ij} &=\frac{1}{\hat{\lambda_j}}\begin{pmatrix}
    X_{ij}\\
    X_{ij}\\
    \vdots \\
    X_{ij}
    \end{pmatrix}_{K\times 1},
\end{align*}
and ``hat'' represents the current estimate from M-steps. 
The factorization of $q_1(\V{c})$ reduces the number of terms in the sum from $K^P$ to $KP$, and the factorization of $q_2(\V{\alpha})$ changes the integration from over a $NK$-dimensional space to $N$ separate integrations in $K$-dimensional space, which dramatically reduces the computational complexity and makes the computation tractable. The readers are referred to the supplemental materials for detailed justifications.

\begin{algorithm}
\caption{Variational EM Algorithm for HBCM}
\label{algo}
\textbf{Input:} $\M{X}$, $K$, initial values $\hat{\V{c}}^{(0)},\hat{\V{\pi}}^{(0)}, \hat{\M{\Omega}}^{(0)}, \hat{\V{\lambda}}^{(0)}, \hat{\V{\sigma}}^{2(0)}$;

\Repeat{convergence}{
\textbf{E-step:}        
\begin{align*}
& \hat{q}_1(\V{c}) =\prod_{j=1}^P\frac{f_j(c_j)}{\sum_{k=1}^K f_j(k)}, \\
& \textnormal{where }  f_j(k) =\exp \left\{\log \hat{\pi}_{k}-\frac{N}{2}\log{\hat{\sigma}_j^2}- \sum_{i=1}^N\frac{\big(X_{ij}^2-2\hat{\lambda}_jX_{ij}\mathbb{E}_{q_2}[\alpha_{ik}]+\hat{\lambda}_j^2\mathbb{E}_{q_2}[\alpha_{ik}^2]\big)}{2\hat{\sigma}_j^2} \right\}. \\
& \hat{q}_2(\V{\alpha})=\prod_{i=1}^N{\hat{q}_2(\V{\alpha}_i)}, \\
& \textnormal{where $\hat{q}_2(\V{\alpha}_i)$ is the density function of $N(\hat{\V{\mu}}_i,\hat{\M{V}})$ with the parameters given in \eqref{post_parameter}.
}
\end{align*}

\textbf{M-step:}
\begin{align*} 
\hat{\M{\Omega}} & = \frac{1}{N}\sum_{i=1}^N\mathbb{E}_{q_2}(\V{\alpha}_i\V{\alpha}_i^\top),\\
             \hat{\pi}_k & = \frac{\sum_{j=1}^P \mathbb{E}_{q_1}[1(c_j=k)]}{\sum_{j=1}^P\sum_{k'=1}^K \mathbb{E}_{q_1}[1(c_j=k')]}, \quad k=1,...,K,
            \\
             \hat{\lambda}_j & = \frac{\sum_{i=1}^N\sum_{k=1}^K \mathbb{E}_{q_1}[1(c_j=k)]X_{ij}\mathbb{E}_{q_2}[\alpha_{ik}]}{\sum_{i=1}^N\sum_{k=1}^K \mathbb{E}_{q_1}[1(c_j=k)]\mathbb{E}_{q_2}[\alpha_{ik}^2]}, \quad  j= 1,...,P,\\
             \hat{\sigma}_j^2 & = \frac{\sum_{i=1}^N\sum_{k=1}^K \mathbb{E}_{q_1}[1(c_j=k)]
    (X_{ij}^2+\hat{\lambda}_j^2\mathbb{E}_{q_2}[\alpha_{ik}^2]-2\hat{\lambda}_jX_{ij}\mathbb{E}_{q_2}[\alpha_{ik}])}{N},\quad  j= 1,...,P.
\end{align*}
}
$$\hat{c}_j=\underset{k={1,...,K}}{\text{argmax}}\, f_j(k).$$

\textbf{Output:} estimated group membership $\hat{\V{c}}$, parameter estimation $\hat{\Phi}$.
\end{algorithm}

The variational EM algorithm requires initial parameter estimates to initiate the iterative updating process. In practice, our approach involves initially applying spectral clustering \citep{ng2002spectral} to the absolute sample correlation matrix, yielding an initial estimate of the group membership. Here, we view the sample correlation matrix as a signed, weighted graph where edges represent correlation values. Taking the absolute value of correlations avoids negative degrees in the graph.
To add randomness to the initial values, we create a $P \times K$ probability matrix to serve as the initial values of matrix $q_1(\mathbf{c})$. Each row of $q_1(\mathbf{c})$ is a vector of length $K$ representing the probabilities of column $j$ belonging to different clusters, with the sum equal to 1. We randomly assign a probability between 0 and 0.5 to entry $k$ corresponding to the initial assignment from spectral clustering and then randomly distribute the remaining probabilities to other entries.
Subsequently, we set the initial estimate $\hat{\pi}_k^{(0)}$ by averaging the columns of $q_1(\mathbf{c})$, and establish the estimate $\hat{\omega}_{kl}^{(0)}$ as the average sample covariance for all pairs of columns, where one column belongs to group $k$ and the other to group $l$.

For the initial estimates of $\V{\lambda}$ and $\V{\sigma}^2$, we employ a simplified version of Algorithm~\ref{algo}. Specifically, we gather the columns belonging to cluster $k$ and apply the algorithm to update $\hat{q}_{2i}(\balp_i)$ and the corresponding parameters $\hat{\lambda}^{(0)}_j$ (for ${j: c_j=k}$) and $\hat{\sigma}_j^{2(0)}$ (for ${j: c_j=k}$). Given that all the subset columns pertain to a single group, there is no need to update $\hat{q}_{1j}(c_j)$, $\hat{\bome}$, and $\hat{\pi}_k$ within the simplified algorithm. This process is repeated for each group $k\in [K]$. The updated $\hat{\V{\lambda}}^{(0)}$, $\hat{\V{\sigma}}^{2(0)}$, along with $\hat{\V{\pi}}^{(0)}$ and $\hat{\M{\Omega}}^{(0)}$, collectively serve as the initial estimates in Algorithm~\ref{algo}.

\section{Theory}
\subsection{Model Identifiability}\label{sec:identi}
Before studying the asymptotic properties, we clarify the identifiability of our parametrization system. Assume that there are $K$ communities among $P$ features. Let $\M{L}=[L_{jk}]$ be a $P\times K$ label matrix such that $L_{jk}=1$ if $c_j=k$ and $L_{jk}=0$ otherwise. That is, there is a unique ``1'' in each row of $\M{L}$ indicating the community label of the row index. Then covariance structure is written as 
\begin{equation}\label{e6}
\bSig=\diag(\blam)\M{L}\bome\M{L}^{\top}\diag(\blam)+\diag(\bsig^2),    
\end{equation}
where $\blam$, $\bsig^2$, $\bome$ were defined earlier. The covariance matrix is determined by the parameter system $\{\blam,\bome, \M{L}, \bsig^2\}$, where the membership is encoded in the matrix $\M{L}$.

It is meaningful to ask whether we can infer the community label from $\bSig$ given \eqref{e6}, as two different systems $\{\blam,\bome, \M{L}, \bsig^2\}$ and $\{\tilde\blam,\tilde\bome,\tilde{\M{L}},\tilde{\bsig}^2\}$ may lead to a same covariance $\bSig$. Indeed, the community labels of two system may be up to a permutation. It is a well-known issue that can be easily taken care in practice. In addition, the product structure in \eqref{e6} implies that $\blam$ and $\bome$ are defined up to a multiplier. For instance, $\blam/2$ and $4\bome$ would give an equivalent model. A serious question is, given $\bSig$, whether the number of communities $K$ and the membership are uniquely defined. Theorem \ref{thm1} below characterizes all equivalent parameter systems and clarifies the issue, confirming the membership in the HBCM is well-defined. 


Condition 1. For a parameter system $\{\blam,\bome, \M{L}, \bsig^2\}$, we assume that $\blam$ is a $P$-vector with non-zero entries, $\bome$ is a $K\times K$ positive definite matrix, $\M{L}$ is a $P\times K$ matrix with all entries equal to 0 or 1, and $\M{L}\M{1}_K=\M{1}_P$, $\M{L}^{\top}\M{1}_{P}\geq 3\cdot\M{1}_K$ (entry-wise comparison), $\bsig^2$ is a $P$-vector with positive entries. 

The only additional assumption in condition 1 is $\M{L}^{\top}\M{1}_{P}\geq 3\cdot\M{1}_K$, which requires that each community contains at least 3 features.

\begin{theorem}\label{thm1}
Two systems $\{\blam,\bome, \M{L}, \bsig^2\}$ and $\{\tilde\blam,\tilde\bome,\tilde{\M{L}},\tilde{\bsig}^2\}$ under Condition 1 give the same covariance structure $\bSig$ if and only if $\tilde\bome\ =\M{P}^{\top}\diag({\V{d}})\bome\diag({\V{d}})\M{P}$, $\tilde{\M{L}} = \M{L}\M{P}$, $\tilde{\blam} =\blam\circ (\M{L}\V{d}^{-1})$, $\bsig^2=\tilde\bsig^2$ where $\M{P}$ is a $K\times K$ permutation matrix, $\V{d}$ is a $K$ dimensional vector with nonzero entries, $\V{d}^{-1}$ is the vector of entry-wise reciprocals, $\circ$ is entry-wise (Hadamard) product.
\end{theorem}

Theorem \ref{thm1} concludes that the ambiguity of the parameter system comes from only two sources. One is the label permutation, and the other is the scale distribution between $\blam$ and $\bome$. When $\M{P}$ is identity, Theorem \ref{thm1} asserts $\tilde{\M{L}} = \M{L}$, which indicates that the community labels are well-defined and completely determined by the covariance structure.

From now on, we will assume all parameter systems employed in this paper are identical in labels so we ignore the permutation matrix $\M{P}$. Theorem \ref{thm1} indicates $\blam$, $\bome$ are not identifiable in a sense that a reparameterization $\tilde\lambda_j=d_{c_j}^{-1}\lambda_j$ and $\tilde\Omega_{kl}=d_kd_l\Omega_{kl}$ gives the same covariance model. This will not affect the statement of our main result on community detection consistency but makes it troublesome to illustrate intermediate steps. To resolve this issue, we pick the following canonical parameters:
\begin{align*}
    \lambda_j^* &=\frac{1}{P} \left(\sum_{j'=1}^P\lambda_{j'}\omega_{c_{j'}c_j}\right)\lambda_j, \,\, \omega_{c_jc_{j'}}^*  = \frac{\lambda_j\lambda_{j'}\omega_{c_jc_{j'}}}{\lambda_j^*\lambda_{j'}^*} = \frac{P^2\omega_{c_jc_{j'}}}{(\sum_{t=1}^P\lambda_t\omega_{c_jc_t})(\sum_{s=1}^P\lambda_s\omega_{c_{j'}c_s})},
\end{align*}
which are well-defined as long as $\sum_{j'=1}^P\lambda_{j'}\omega_{c_{j'}c_j}\ne 0$ for all $j\in[P]$. We can rewrite the definition by matrix notation
\begin{align*}    \blam^*&=\frac1P\diag(\blam)\M{L}\bome\M{L}^{\top}\diag(\blam)\M{1}_P=\blam\circ(\M{L}\bome\M{L}^{\top}\diag(\blam)\M{1}_P/P) =\blam\circ(\M{L}\V{d}^{-1}),\\
    \bome^*&=\diag({\V{d}})\bome\diag({\V{d}}),
\end{align*}
where $\V{d}=(\bome\M{L}^{\top}\diag(\blam)\M{1}_P/P)^{-1}$.
\begin{proposition}
Assume that $\sum_{j'=1}^P\lambda_{j'}\omega_{c_{j'}c_j}\ne 0$ for all $j\in[P]$. If two systems $\{\blam,\bome, \M{L}, \bsig^2\}$ and $\{\tilde\blam,\tilde\bome,\tilde{\M{L}},\tilde{\bsig}^2\}$ define the same covariance structure $\bSig$, then they lead to the same canonical parameters $\blam^*=\tilde\blam^*$, $\bome=\tilde\bome^*$. 
\end{proposition}
The canonical parameter is a representative in each equivalent class of parameters that define the same covariance structure. If the data are generated from some ``true'' parameters $\blam$ and $\bome$, there is no way to estimate them because of the identifiability issue. Nevertheless, we show later that it is possible estimate the canonical parameter defined above, and consistently estimate the community membership. 

It is a common practice to standardize the features before statistical analysis. Therefore, scale invariance is a desired property a statistical procedure. The following proposition shows that the membership matrix $\M{L}$ remains the same after a scale change. 
\begin{proposition}
(Scale invariance) Let $\V{b}$ be a $p$-dimensional vector with non-zero entries. If $\V{X}_1$ is generated from the current model $\bSig$ with a parameter system $\{\blam,\bome, \M{L}, \bsig^2\}$, then the rescaled $\V{X}_1$ has Cov$(\V{b}\circ\V{X}_1)=\diag(\V{b})\bSig\diag(\V{b})$ that can be generated from the parameter system $\{\V{b}\circ\blam,\bome, \M{L}, \bsig^2\}$. In particular, the membership matrix $\M{L}$ remains unchanged.     
\end{proposition}

\subsection{Asymptotic Properties}
In this section, we establish the consistency of the estimated group membership, which is the consistency of the optimal distribution $q_1(\V{c})$ of $ J(q_1(\V{c}),q_2(\V{\alpha}), \Phi)$ with the Dirac measure of the true labels. We first define a sample version and a population version of the objective function $ J(q_1(\V{c}),q_2(\V{\alpha}), \Phi)$. 
We begin with the sample version. 
In Section \ref{sec:alg}, we demonstrated that $q_1(\V{c})$  can be factorized  given  parameters and $q_2(\V{\alpha})$, and similarly $q_2(\V{\alpha})$ is product of $n$ multivariate normal density functions given parameters and $q_1(\V{c})$.  We restrict our analysis within the same class. That is, $q_1(\V{c})$ can be represented by a matrix $\bqc = [q_{jk}^c]_{P\times K}$, and $q_{2}(\V{\alpha})=\prod_{i=1}^N q_{2i}(\V{\alpha}_i)$. Furthermore, $q_{2i}(\V{\alpha}_i)$ is the density function of $N(\V{\mu}_i,\M{V}_i)$, where $\V{\mu}_i=(\mu_{i1},...,\mu_{iK})$ and $\textnormal{diag}(\M{V}_i)=(v_{i1}^2,...,v_{iK}^2)$. Finally, let $\V{\mu}=[\mu_{ik}]_{N\times K}$. Define
\begin{align*}
   & \hat{J}(q_1(\V{c}), q_2(\V{\alpha}),\V{\pi},\M{\Omega})  = \sum_{j=1}^P\left(\sum_{k=1}^Kq_{jk}^c\log
   \pi_k\right) + \sum_{i=1}^N\left(-\frac{1}{2}\log|\bome| - \frac{1}{2}\tr{\left(\mathbb{E}_{q_2}(\alpha_i\alpha_i^\top)\bome^{-1}\right)}\right) \\
  & \quad \quad  + \sum_{i=1}^N\sum_{j=1}^P\sum_{k=1}^K\left(-\frac{1}{2}\hat{\lambda}_j^2q_{jk}^c (\mu_{ik}^2+v_{ik}^2 )\right) + \sum_{i=1}^N\sum_{j=1}^P\sum_{k=1}^Kq_{jk}^c\hat{\lambda}_jX_{ij}\mu_{ik}\\
  & \quad \quad - \sum_{j=1}^P\sum_{k=1}^Kq_{jk}^c\log q_{jk}^c - \sum_{i=1}^N\int_{\V{\alpha}_i\in \mathbb{R}^{ K}} q_{2i}( \V{\alpha}_i) \log (q_{2i}( \V{\alpha}_i)) d \V{\alpha}_i,
\end{align*}
where $\hat{\lambda}_j = \sum_{j'=1}^P(\sum_{i=1}^NX_{ij}X_{ij'})/(PN)$.

There are two differences between $J(q_1(\V{c}), q_2(\V{\alpha}),\Phi)$ defined in \eqref{eq:1.4} and $\hat{J}(q_1(\V{c}), q_2(\V{\alpha}),\V{\pi},\M{\Omega})$ for the convenience of the theoretical analysis. The first difference is we use the moment estimator $\hat{\lambda}_j$ as aforementioned for the rest of the analysis. The rest of the parameters including $q_1(\V{c})$ and $q_2(\V{\alpha})$ will be maximized over proper compact sets (see Proposition \ref{thm:separation} for details), similar to the standard treatment in asymptotic theory. We give a proposition on the consistency of $\hat{\lambda}_j$ to the canonical form $\lambda_j^* = \sum_{t=1}^P (\lambda_j\lambda_{j'}\omega_{c_jc_{j'}})/P$. 
\begin{proposition}
As $N,P\rightarrow \infty$ with $\log(P)/N = o(1)$ and $\max_j \sigma_j^{*2}=o(P)$,
\begin{align*}
   & \mathbb{P}(|\hat{\lambda}_j - \lambda_j^*|\geq \epsilon | \V{c}^*) \rightarrow 0.
\end{align*}
\end{proposition}
 The second difference between $J(q_1(\V{c}), q_2(\V{\alpha}),\Phi)$ and $\hat{J}(q_1(\V{c}), q_2(\V{\alpha}),\V{\pi},\M{\Omega})$ is that we omit $\{\sigma_j^2\}$ in $\hat{J}(q_1(\V{c}), q_2(\V{\alpha}),\Phi)$ for the convenience of the theoretical analysis. It does not mean that we assume the true standard deviation $\sigma^{*2}_j \equiv 1$ for all $j$ in the model. Instead, the analysis below implies inconsistent estimates of $\{\sigma^{*2}_j\}$ will not affect the label consistency as long as $\{\sigma^{*2}_j\}$ are properly bounded (see Theorem \ref{thm:consistency} for details). 

A key ingredient in the proof is to demonstrate that except for 
\begin{align*}
       & \hat{J}_{\textnormal{core}}(\bqc, \V{\mu}) =\sum_{i=1}^N\sum_{j=1}^P\sum_{k=1}^K\left(-\frac{1}{2}\hat{\lambda}_j^2q_{jk}^c \mu_{ik}^2 \right) + \sum_{i=1}^N\sum_{j=1}^P\sum_{k=1}^Kq_{jk}^c\hat{\lambda}_jX_{ij}\mu_{ik},
\end{align*}
the other terms in $\hat{J}(q_1(\V{c}), q_2(\V{\alpha}),\V{\pi},\M{\Omega})$ are $o(NP)$ under certain regularity conditions, and their contribution in clustering is therefore negligible. 


Let $\V{c}^*$ and $\V{\alpha}^*$ be the true values of $\V{c}$ and $\V{\alpha}$.  
We now define the population version of as
\begin{align*}
    &\bar{J}(q_1(\V{c}), q_2(\V{\alpha}),\V{\pi},\M{\Omega}) =  \sum_{j=1}^P\left(\sum_{k=1}^Kq_{jk}^c\log
   \pi_k\right) + \sum_{i=1}^N\left(-\frac{1}{2}\log|\bome| - \frac{1}{2}\tr{\left(\mathbb{E}_{q_2}(\alpha_i\alpha_i^\top)\bome^{-1}\right)}\right) \\
  & \quad  \quad  +  \sum_{i=1}^N\sum_{j=1}^P\sum_{k=1}^K\left(-\frac{1}{2}\lambda_j^{*2}q_{jk}^c (\mu_{ik}^2+v_{ik}^2 )\right)  + \sum_{i=1}^N\sum_{j=1}^P\sum_{k=1}^K q_{jk}^c\lambda_j^{*}\mathbb{E}(X_{ij}|\V{\alpha}^*,\V{c}^*)\mu_{ik}\\
  & \quad  \quad - \sum_{j=1}^P\sum_{k=1}^Kq_{jk}^c\log q_{jk}^c - \sum_{i=1}^N\int_{\V{\alpha}_i\in \mathbb{R}^{ K}} q_{2i}( \V{\alpha}_i) \log (q_{2i}( \V{\alpha}_i)) d \V{\alpha}_i,
\end{align*}
and define
\begin{align*}
        \bar{J}_{\textnormal{core}}(\bqc, \V{\mu} ) & =
  \sum_{i=1}^N\sum_{j=1}^P\sum_{k=1}^K\left(-\frac{1}{2}\lambda_j^{*2}q_{jk}^c \mu_{ik}^2\right)  + \sum_{i=1}^N\sum_{j=1}^P\sum_{k=1}^K q_{jk}^c\lambda_j^{*}\mathbb{E}(X_{ij}|\V{\alpha}^*,\V{c}^*)\mu_{ik}.
\end{align*}
Denote $\V{I}^{c} = [I_{jk}^{c}]_{P\times K}$ where $I_{jk}^{c} = 1(c^*_j = k)$. 

Let $S_K$ be the set of permutations on $\{1,...,K\}$.  $(\tilde{\V{I}}^c,\tilde{\V{\alpha}})$ is called equivalent to $(\V{I}^{c},\V{\alpha}^*)$ if for some $s \in S_K$,
\begin{align*}
\tilde{I}^c_{jk} & =I_{j,s(k)}^{c}, \,\,  {j=1,...,P,k=1,...,K}, \\
\tilde{\alpha}_{ik} & =\alpha^*_{i,s(k)}, \,\,  {i=1,...,N,k=1,...,K}. 
\end{align*}
Let $\mathcal{E}_{\V{I}^{c},\V{\alpha}^*}$ be the equivalent class of $(\V{I}^{c},\V{\alpha}^*)$ and let $\mathcal{E}_{\V{I}^c}$ be the equivalent class of $\V{I}^c$. The next proposition states that the population version  $\bar{J}_{\textnormal{core}}(\bqc, \V{\mu})$ has a unique optimizer achieved at $\mathcal{E}_{\V{I}^{c},\V{\alpha}^*}$:
\begin{proposition}
 $\bar{J}_{\textnormal{core}}(\bqc, \V{\mu})$ is maximized by $\omega$ if and only if $\omega \in \mathcal{E}_{\V{I}^{c},\V{\alpha}^*}$.
\end{proposition}
Next we give a proposition stating $\bar{J}(q_1(\V{c}), q_2(\V{\alpha}),\V{\pi},\M{\Omega})$ is asymptotically maximized by $\bar{J}_{\textnormal{core}}(\bic, \V{\alpha}^*)$ up to terms of  $o(NP)$, and the maximizer is well-separated. We give more definitions before proceeding. We first define a soft confusion matrix, which generalizes the classical confusion matrix for probabilistic label assignments. 
\begin{definition}[Soft confusion matrix]
For any label assignment matrices $\bqc$ and $\tilde{\V{q}}^c$, let $\M{R}(\bqc,\tilde{\V{q}}^c)$ be a $K\times K$ matrix, where
\begin{align*}
	\M{R}_{ kk'}(\bqc,\tilde{\V{q}}^c) & =\frac{1}{P} \sum_{j=1}^P q^{c}_{jk} \tilde{q}^{c}_{jk'}.
\end{align*}  
\end{definition}
Similarly to a standard confusion matrix, the trace of $\M{R}(\bqc,\tilde{\V{q}}^c)$ measures the similarity between the two probabilistic label assignments.

Let $\mathcal{C}_{\V{\pi}}= \{ \V{\pi}: \V{\pi} \in \mathbb{R}^{K},\, \pi_k \in [\pi_{\textnormal{min}},\pi_{\textnormal{max}}],\, \sum_{k=1}^K\pi_k=1   \}$, where $0<\pi_{\textnormal{min}}<\pi_{\textnormal{max}}<\infty$;  $\mathcal{C}_{\M{\Omega}}=\{ \M{\Omega}: \M{\Omega} \in \mathbb{R}^{P \times P}, \M{\Omega} \succ 0, \lambda_{\textnormal{min}}\leq \lambda(\M{\Omega})\leq \lambda_{\textnormal{max}} \}$, where $\lambda(\M{\Omega})$ is an eigenvalue of $\M{\Omega}$ and $ 0<\lambda_{\textnormal{min}}<\lambda_{\textnormal{max}}<\infty$; $\mathcal{C}_{\V{c}} = \{\bqc: \bqc \in \mathbb{R}^{P\times K}, 0\leq q_{jk}^c \leq 1, \sum_{k=1}^Kq_{jk}^c = 1\}$; $\mathcal{C}_{\V{\alpha}} = \{q_2(\V{\alpha}):  |\mu_{ik}|\leq B_1, v_{ik}^2\leq B_2\}$, where $B_1>0,B_2>0$.
\begin{proposition} \label{thm:separation}
Assume  $\min_j|\lambda_j^* |\geq \gamma_1>0$, $B_3 = \min_{1\leq k < l \leq K}(\omega_{kk}^*+\omega_{ll}^*-2\omega_{kl}^*)>0$, and $|\alpha_{ik}^*| \leq B_1$. Furthermore, assume $\left (\sum_{j=1}^P 1(c_j^*=1)/P,..., \sum_{j=1}^P 1(c_j^*=K)/P \right )\in \mathcal{C}_{\V{\pi}}$. Then for all $\V{\pi} \in \mathcal{C}_{\V{\pi}}$, $\M{\Omega} \in \mathcal{C}_{\M{\Omega}}$, $q_1(\V{c}) \in \mathcal{C}_{\V{c}}$, and $q_2(\V{\alpha}) \in \mathcal{C}_{\V{\alpha}}$,
\begin{align*} 
& \mathbb{P}\left( \left . \bar{J}_{\textnormal{core}}(\bic, \V{\alpha}^*)+c_1 P+c_2 N\log P \right. \right.\\ 
& \quad \quad \left . \left .-  \bar{J}(q_1(\V{c}), q_2(\V{\alpha}),\V{\pi},\M{\Omega}) \geq c_3 NP \min_{\tilde{\V{I}}^c \in \mathcal{E}_{\V{I}^c} } \left (1-\textnormal{Tr} (\M{R}(\bqc,\tilde{\V{I}}^c))\right ) \right | \V{c}^* \right) \rightarrow 1,
\end{align*}
where $c_1$, $c_2$, and $c_3$ are constants.
\end{proposition}

The next result is about the uniform convergence of $\hat{J}_{\textnormal{core}}(\bqc, \V{\mu})$ to its population counterpart $\bar{J}_{\textnormal{core}}(\bqc, \V{\mu})$.

\begin{proposition}
\label{thm:uniform}
 Assume $\lambda_j^*$ satisfying $\min_j|\lambda_j^* |\geq \gamma_1$ and $\max_j|\lambda_j^*| \leq \gamma_2$. Furthermore, assume $\max_j \sigma_j^{*2}=O(1)$ and $\max_{kl} |\omega_{kl}|=O(1)$. Then for all $ \epsilon > 0$, as $N, P \rightarrow \infty$ and $\log(P)/N = o(1)$,
\begin{align*}
    \mathbb{P}\left( \max_{q_1(\V{c})\in \mathcal{C}_{\V{c}}, q_2(\V{\alpha}) \in \mathcal{C}_{\V{\alpha}}}  \left|\hat{J}_{\textnormal{core}}(\bqc, \V{\mu}) - \bar{J}_{\textnormal{core}}(\bqc, \V{\mu}) \right|\geq NP\epsilon  \middle| \V{c}^* \right) \rightarrow 0.
\end{align*}
\end{proposition}
Let $\hat{\V{q}}^{\V{c}}$ be the maximizer of $\hat{J}(q_1(\V{c}), q_2(\V{\alpha}),\V{\pi},\M{\Omega})$ over $q_1(\V{c})$.
Proposition \ref{thm:separation} and Proposition \ref{thm:uniform} imply the  label consistency:
\begin{theorem}\label{thm:consistency}
Under the assumptions of Proposition \ref{thm:separation} and Proposition \ref{thm:uniform}, as $N,P \rightarrow \infty$ and $\log(P)/N = o(1)$, for all $\epsilon >0$,
	\begin{align*}
	& \mathbb{P} \left( \left. \min_{\tilde{\V{I}}^c \in \mathcal{E}_{\V{I}^c} } \left (1-\textnormal{Tr} (\M{R}(\hat{\V{q}}^c,\tilde{\V{I}}^c))\right ) \geq \epsilon \right |\V{c}^* \right) \rightarrow 0.
	\end{align*}
\end{theorem}




\section{Simulation Studies}

In this section, we conduct simulations to assess the performance of the proposed HBCM in clustering and to compare it with spectral clustering. Among the various clustering methods, spectral clustering and K-means stand out as the most popular. However, the latter does not align with our model setup, thus we solely present a comparison between HBCM and spectral clustering. We gauge the clustering performance using the adjusted Rand index (ARI) \citep{rand}, which quantifies the level of agreement between the estimated group labels and the ground truth. The ARI score ranges from zero (random guess) to one (complete agreement), with a higher value indicating greater agreement. We examine multiple simulation scenarios with various parameter configurations and introduce a cross-validation approach to determine the number of communities (denoted as $K$) when it is unknown.

\subsection{Numerical experiments with various sample sizes and dimensionality}

We begin by examining how clustering performance varies with different values of $N$, $P$, and $K$. The simulated data are generated as follows:
\begin{enumerate}
    \item Set $N = 500$ or $N = 1000$. For $N = 500$, use $P = 300$, $500$, and $1000$. For $N = 1000$, use $P = 500$, $1000$, and $1500$. For each setup, set $K = 3$, $5$, and $7$.
    \item Set the community-level covariance matrix $\bome_{K \times K}$ to have diagonal elements equal to 1 and off-diagonal values equal to 0.5. The true label vector $\V{c}$ is generated by $P$ independent draws from Multinomial$(1,\V{\pi})$ with $\V{\pi}=(1/K,...,1/K)$.
    \item For $j=1,...,P$, generate ${\lambda_j}$ from normal distribution $N(0,1)$ and generate ${\sigma_j^2}$ from a chi-square distribution with degree of freedom 2. Add 1 to ${\sigma_j^2}$ to ensure it remains non-zero.
    \item For $i=1,...,N$, generate $\alpha_i$ from $K$-variate normal distribution $N(\V{0}, \bome)$. For each $i,j$, generate $X_{ij}$ by $X_{ij}=\lambda_j \alpha_{ic_j}+ \sigma_j \epsilon_{ij}$, where $\epsilon_{ij}\,\, \text{i.i.d.} \sim N(0,1)$.
\end{enumerate}

Under each $(N, P, K)$ configuration, simulations are replicated 100 times. For each generated dataset, both the HBCM and spectral clustering methods are employed to estimate the the group membership. The average ARI and empirical standard deviations are presented in Table~\ref{tab:ARI}. 
We adopt the absolute sample correlation matrix as the kernel matrix in spectral clustering.

\begin{table}[h!]
\centering
\begin{tabular}{|l|ll|ll|ll|}
     \hline
& \multicolumn{2}{c|}{$P$=300} & \multicolumn{2}{c|}{$P$=500} & \multicolumn{2}{c|} {$P$=1000} \\
 \hline
 & \multicolumn{6}{c|}{$N$=500}\\
 \hline
$K$ & Spectral & HBCM & Spectral & HBCM & Spectral & HBCM  \\
\hline
3 & 0.26 (0.03) & 0.46 (0.14) & 0.25 (0.04) & 0.49 (0.15) & 0.25 (0.07) & 0.49 (0.14) \\
5&  0.38 (0.04) & 0.45 (0.09) & 0.36 (0.03) & 0.46 (0.08) & 0.35 (0.02) & 0.49 (0.06)\\
7& 0.41 (0.04) & 0.43 (0.09) & 0.39 (0.03) & 0.46 (0.12) & 0.38 (0.02) & 0.49 (0.05)\\
\hline
& \multicolumn{2}{c|}{$P$=500} & \multicolumn{2}{c|}{$P$=1000} & \multicolumn{2}{c|} {$P$=1500} \\
 \hline
 & \multicolumn{6}{c|}{$N$=1000}\\
 \hline
$K$ & Spectral & HBCM & Spectral & HBCM & Spectral & HBCM  \\
\hline
3 & 0.31 (0.07) & 0.52 (0.17) & 0.36 (0.13) & 0.60 (0.17) & 0.37 (0.14) & 0.61 (0.16)\\
5 & 0.44 (0.03) & 0.52 (0.12) & 0.40 (0.02) & 0.53 (0.08) & 0.39 (0.02) & 0.53 (0.05) \\
7 & 0.48 (0.03) & 0.57 (0.04) & 0.44 (0.02) & 0.56 (0.05) & 0.43 (0.02) & 0.57 (0.05)\\
\hline
\end{tabular}
\caption{Evaluate and compare the clustering performance of HBCM and spectral clustering under various setups of $N,P,K$, using the adjusted rand index.}\label{tab:ARI}
\end{table}

In Table \ref{tab:ARI}, the HBCM displays better performance compared to spectral clustering, with higher ARI scores across all settings. As $N$ increases, the ARI for both methods generally rises. Additionally, although HBCM uses spectral clustering in initialization, its results can substantially improve beyond those of spectral clustering, especially when the spectral clustering performance is suboptimal.

\subsection{Numerical experiments with various parameters}

We investigate the impact of signal strength, quantified by $\lambda_j/\sigma_j$, as well as the ratio of inter-community covariance to intra-community covariance ($\omega_{kl}/\omega_{kk}$), on the clustering performance of both HBCM and spectral clustering. We consider simulation setups with $N=1000$, $P=1000$, and $K=3$. In Figure~\ref{simParameter}, the left panel plot illustrates the results of the following simulation setup: we set the heterogeneous parameter $\lambda_j$ to be constant across columns and vary the parameter $\sigma_j$ from 1 to 10. This plot reveals that the performance of both HBCM and spectral clustering diminishes as the signal-to-noise ratio decreases. As the ratio becomes lower, it becomes more challenging for both models to discern the underlying cluster patterns. When we maintain the parameter $\lambda_j$ at 1, $\sigma_j$ at 6, and the diagonal $\omega_{kk}$ of the covariance matrix at 1, while progressively increasing the off-diagonal $\omega_{kl}$ from 0.1 to 0.9, the middle plot indicates a decline in performance for both methods as the ratio of inter-community covariance to intra-community covariance approaches 1.

Furthermore, we assess the robustness of our approach when the normal distribution is substituted with a $t$-distribution of varying degrees of freedom for the error term $\epsilon_{ij}$ in Section 1.4.1 (Step 4). The $t$-distribution with $v$ degrees of freedom ($v>2$) has a variance of $v/(v-2)$, which exceeds one. We consider two data generation processes: $\epsilon_{ij}$ is drawn from a $t$-distribution with $v$ degrees of freedom, and $\epsilon_{ij}$ from the $t$-distribution is re-standardized by dividing it by the standard deviation $\sqrt{v/(v-2)}$. In the right panel of Figure~\ref{simParameter}, solid lines and dashed lines correspond to scenarios without and with variance correction, respectively. In general, the plot indicates that the performance of both methods is significantly influenced by the variance of $\epsilon_{ij}$ -- specifically, their performance deteriorates with larger $v$ values. However, the performance is less susceptible to the distributional shape of $\epsilon_{ij}$ -- that is, the ARI remains nearly constant when the variance has been adjusted.

In summary, as anticipated, the clustering performance of both models hinges on the signal strength. Regardless of the simulation scenarios, HBCM (indicated by the blue line) consistently outperforms spectral clustering (represented by the orange line).

\begin{figure}
\begin{center}
\includegraphics[width=6in]{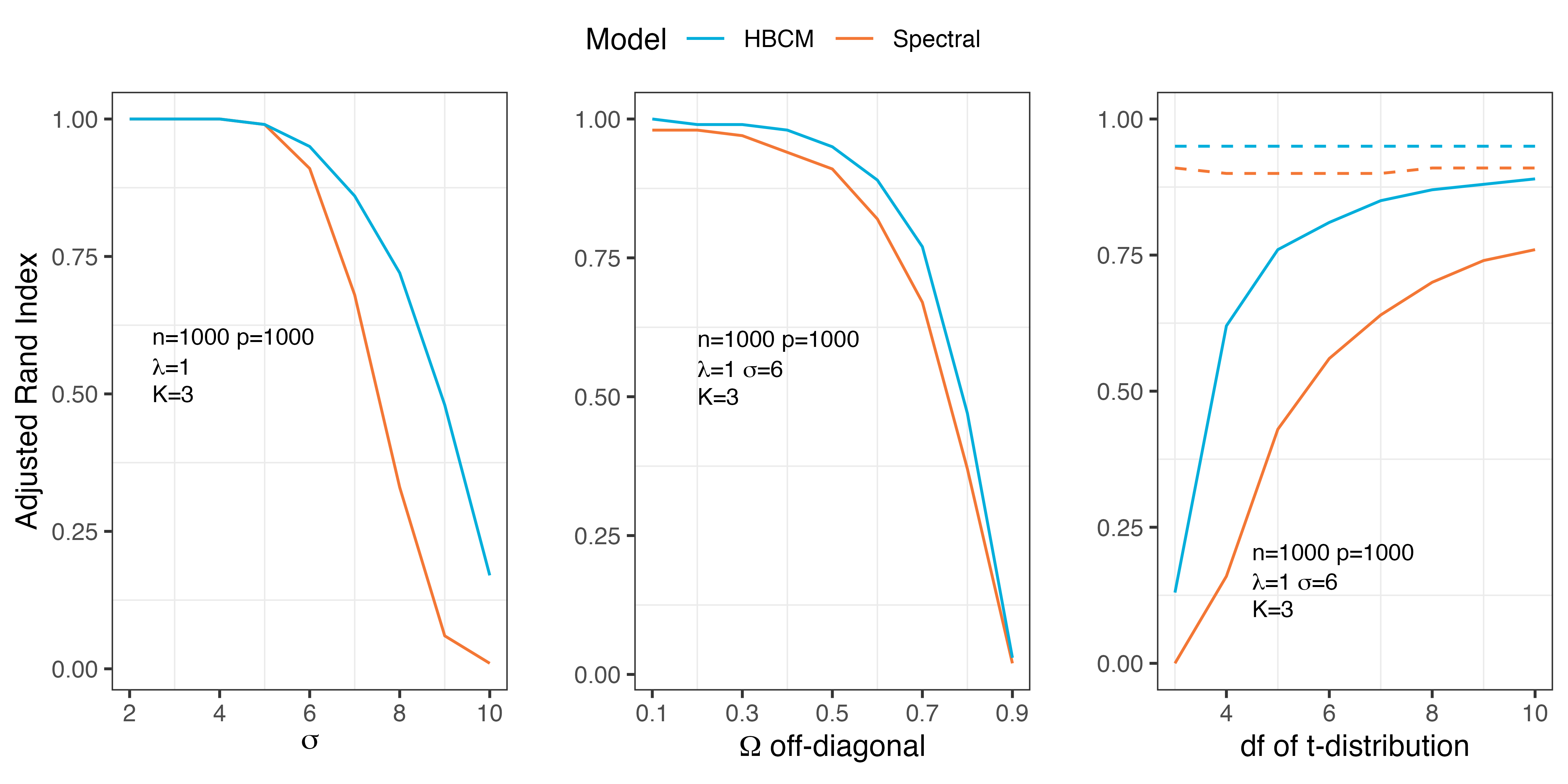}
\end{center}
\vspace{-1cm}
\caption{Compare the clustering performance of HBCM and spectral clustering under various parameter setups. In the right plot, dashed line represents simulation data with variance correction and solid line indicates data without any variance correction.}
\label{simParameter}
\end{figure}

Next, we conduct an additional set of simulations to gain further insights into the factors contributing to HBCM's superior performance over spectral clustering. We consider a setup where $N=1000$, $P=1000$, $K=3$, $\omega_{kk}=1$, $\omega_{kl}=0.5$, and $\lambda_j\in \{1,5,25\}$. The heterogeneous parameters ${\lambda_j}$ take three distinct values: the first 330 $\lambda_j$ values ($j=1,\ldots,330$) are set to 1, the next 330 $\lambda_j$ values ($j=331,\ldots,660$) are set to 5, and the last 340 $\lambda_j$ values ($j=661,\ldots,1000$) are set to 25.
Up to this point, we have considered three sets of labels in the simulations: 1) the true label $L_{true}$, 2) the estimated spectral clustering label $L_{spectral}$, and 3) the estimated HBCM label $L_{HBCM}$. With this new simulation setup, we introduce a fourth set of labels, $L_{mislead}$, based on the heterogeneous $\lambda_j$ values. Specifically, columns that share the same $\lambda_j$ are assigned to the same group.

In Figure~\ref{lambdaMargin}, we plot the ARI against different values of $\sigma_j$. The dashed line indicates the ARI scores between the label $L_{mislead}$ and the estimated labels, while the solid line represents the ARI values between the true label $L_{true}$ and the estimated labels. Ideally, a reliable method should yield label estimates similar to the true labels (high ARI for the solid lines) while remaining unaffected by the presence of heterogeneous $\lambda_j$ values in the misleading labels (low ARI for the dashed lines). It can be observed that as the signal-to-noise ratio decreases, the communities detected by spectral clustering become more aligned with the $L_{mislead}$ label than with the true label. Conversely, the HBCM exhibits no agreement with the $L_{mislead}$ label and demonstrates stronger alignment with the true label. Upon closer inspection of the 1000 labels estimated by spectral clustering, the first 330 columns are indeed categorized into a single group corresponding to $\lambda_j=1$, while the HBCM still distinguishes them into three groups. This observation implies that in scenarios with low signal-to-noise ratios, spectral clustering tends to be influenced by ${\lambda_j}$. By contrast,  the HBCM retains the ability to differentiate between individual parameters ${\lambda_j}$ and group signals $\bome$, resulting in improved clustering performance.

\begin{figure}
\begin{center}
\includegraphics[width=5in]{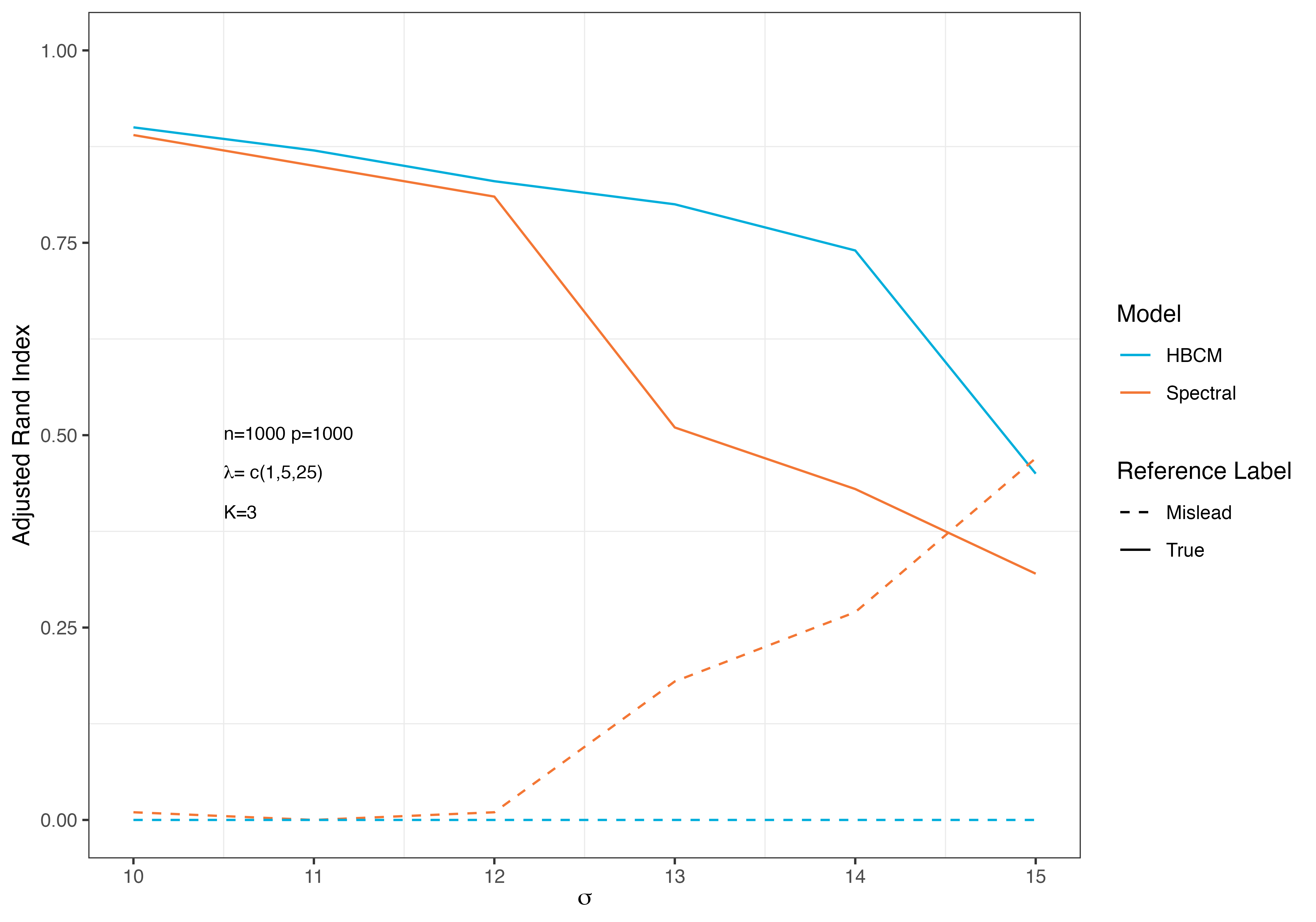}
\end{center}
\vspace{-1cm}
\caption{Spectral clustering is misled by the individual signal $\lambda_j$ as the signal-to-noise ratio decreases. }
\label{lambdaMargin}
\end{figure}

 Finally, we conduct a simulation study to investigate the robustness of HBCM when the population covariance matrix is misspecified. Let $\tilde{\M{\Omega}} = [\tilde{\omega}_{jj'}]$ be a $P \times P$ block matrix generated from $\M{\Omega}$, where $\tilde{\omega}_{jj'} = \omega_{c_j c_{j'}}$.  We add a semi-definite covariance matrix $\M{W} = (1/\sqrt{10}) \sum_{m=1}^{10} \V{z}_m \V{z}_m^\top$ to $\tilde{\M{\Omega}}$, where each $\V{z}_m$ is a column vector with entries independently following a standard normal distribution. It is easy to verify that the off-diagonal entries of $\M{W}$ have a mean of 0 and a variance of 1.  Let $\tilde{\M{\Sigma}} = \textnormal{diag}(\V{\lambda})( \tilde{\M{\Omega}}+r \M{W})\textnormal{diag}(\V{\lambda}) + \textnormal{diag}(\V{\sigma}^2)$. We use $r$ to control the deviation between $\tilde{\M{\Sigma}}$ and the population covariance matrix $\M{\Sigma}$ in the original setup. In this simulation, we set $r$ from 0 to 0.5 in increments of 0.1. The remaining settings for the simulation are as follows: $n=1000$, $p=1000$, $K=3$, and $\bome$ has diagonal elements equal to 1 and off-diagonal elements equal to 0.5. The cluster labels are randomly assigned to three clusters with equal probabilities. The parameters $\lambda_j$ and $\sigma_j$ are fixed at 1 and 6, respectively.

\begin{figure}[!ht]
\begin{center}
\includegraphics[width=3.5in]{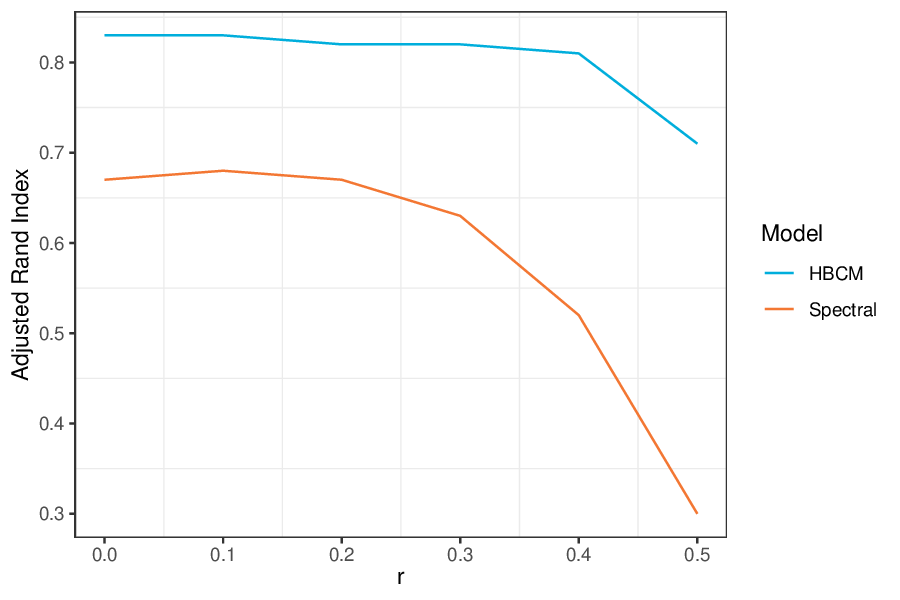}
\end{center}
\vspace{-1cm}
\caption{Compare the clustering performance of HBCM and spectral clustering in the presence of noisy population covariance matrices.}
\label{fig:robust_covariance}
\end{figure}

Figure \ref{fig:robust_covariance} demonstrates that HBCM consistently outperforms spectral clustering for all values of $r$. Moreover, it is interesting to note that HBCM, as a likelihood-based approach, exhibits greater robustness than spectral clustering as the population covariance matrix gradually deviates from the block matrix. It is intriguing to study the theoretical properties of HBCM under misspecified model setups for future work.

\subsection{Cross-validation for selecting the  number of communities} \label{sec:cv}

The number of communities is usually a required input for clustering methods, but it is often unknown in practice. We propose a cross-validation-based method to select the number of communities and evaluate its performance on simulated data. The detailed steps are outlined in Algorithm~\ref{algoCV}. The proposed method resembles a standard 2-fold cross-validation, using ARI as the objective function. However, it differs in that we do not designate one fold of sub-data as training and the other as test data. Instead, we treat both folds of sub-data equally. We apply HBCM to both sets and choose the number of communities that yields the highest agreement between the two sets of estimated labels, as measured by ARI. The rationale is that if the specified number of communities aligns with the truth, the two sets of labels should exhibit greater consistency, resulting in a higher ARI. Readers may find further discussion on clustering stability in \cite{von2010clustering}.

We examine the performance of the proposed cross-validation-based method through a simulation. We set $N=1500$, $P=500$, $K=5$, $\omega_{kk}=1$, $\omega_{kl}=0.2 \,\,(k\neq l)$, $\lambda_j=1$, and $\sigma_j=6$. For each potential number of communities $K$ ranging from 2 to 9, we set $M=20$ and perform Algorithm~\ref{algoCV}. In the left panel of Figure~\ref{KbyCV}, the proposed cross-validation method successfully identifies the true number of communities as $K=5$ with the maximum ARI for the simulated data.

\begin{algorithm}[H]
\caption{Cross-Validation for Selecting $K$}
\label{algoCV}
\textbf{Input:} A sequence of candidate $K$; 

\textbf{Step 1:} For each $K$ in the sequence and for $m = 1,..., M$:
    \begin{enumerate}
        \item Randomly split the data into two parts with equal number of rows.
        \item Apply the clustering method on each subset and obtain the label estimation $\V{c}_1^{(K,m)}, \V{c}_2^{(K,m)}$ respectively.
        \item Calculate the ARI$^{(K,m)}$ based on the degree of agreement between $\V{c}_1^{(K,m)}, \V{c}_2^{(K,m)}$.
    \end{enumerate}

\textbf{Step 2:} For each $K$, calculate the average ARI$^K$ as $\sum_{m=1}^M\text{ARI}^{(K,m)}/M$;

\textbf{Step 3:} Make a line plot of average ARI against the sequence of candidate $K$.

\textbf{Output:} Choose $K$ with the maximum ARI score as the number of communities.
\end{algorithm}

\begin{figure}
\begin{center}
\includegraphics[width=3in]{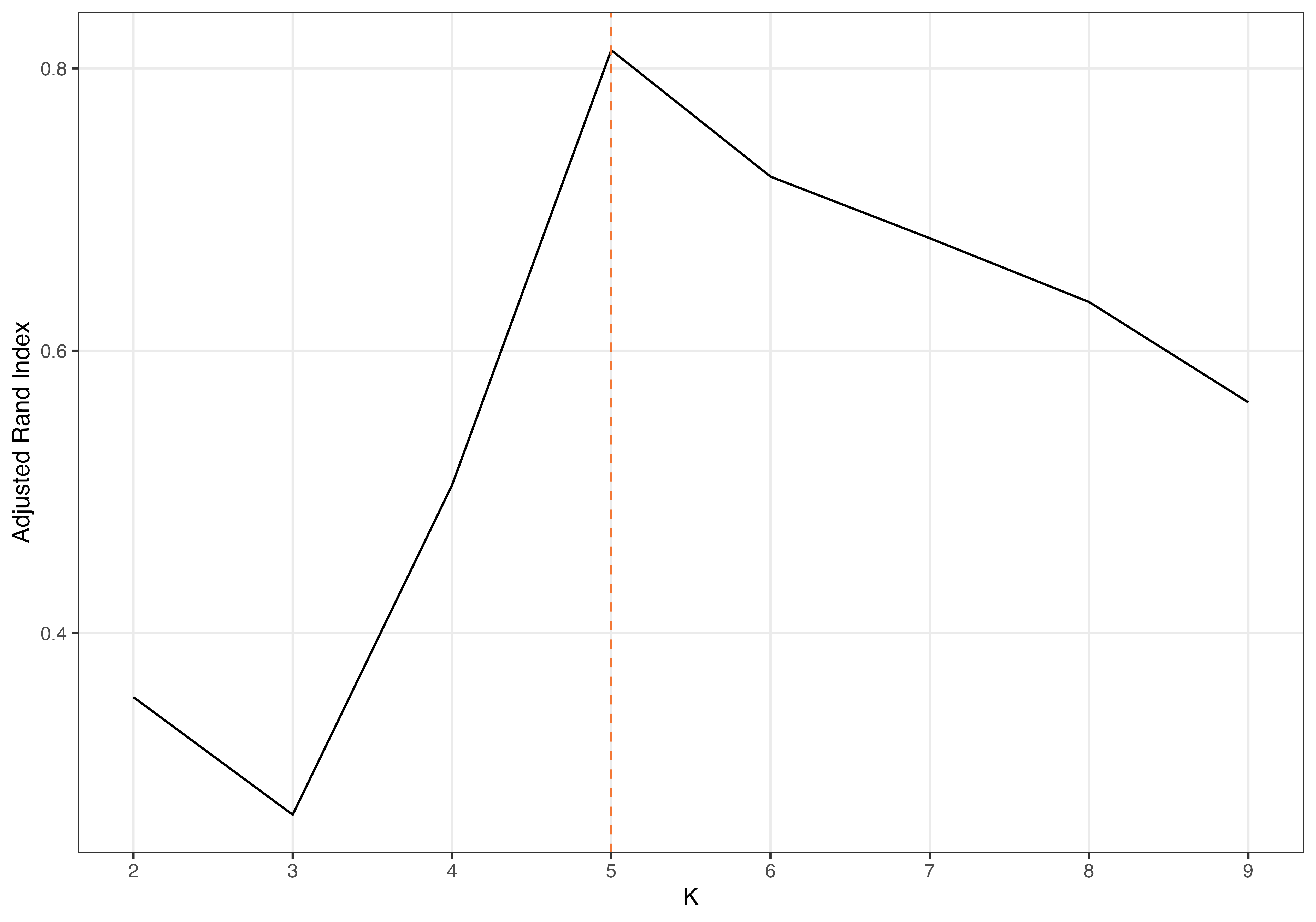}
\end{center}
\vspace{-1cm}
\caption{Cross-validation method for selecting the number of communities. }
\label{KbyCV}
\end{figure}

\section{Application on Mouse Embryo Single-Cell RNA-seq Data}

We analyze single-cell RNA (scRNA) sequencing data from individual mouse embryonic stem (ES) cells, as described in \cite{islam2011characterization}. Single-cell scRNA sequencing is a high-throughput technique that quantifies gene-specific transcript counts at the level of individual cells. To enhance data quality, genes that are undetected in more than 40 of the 92 cells are excluded. After preprocessing, the dataset includes 1423 genes, resulting in a data matrix with $N = 92$ and $P = 1423$, following the notation used in this paper.  Raw data counts were transformed using log-counts-per-million (logCPM), which scales raw counts by converting them to counts per million, followed by a log transformation. 

\begin{figure}[!ht]
	\twoImages{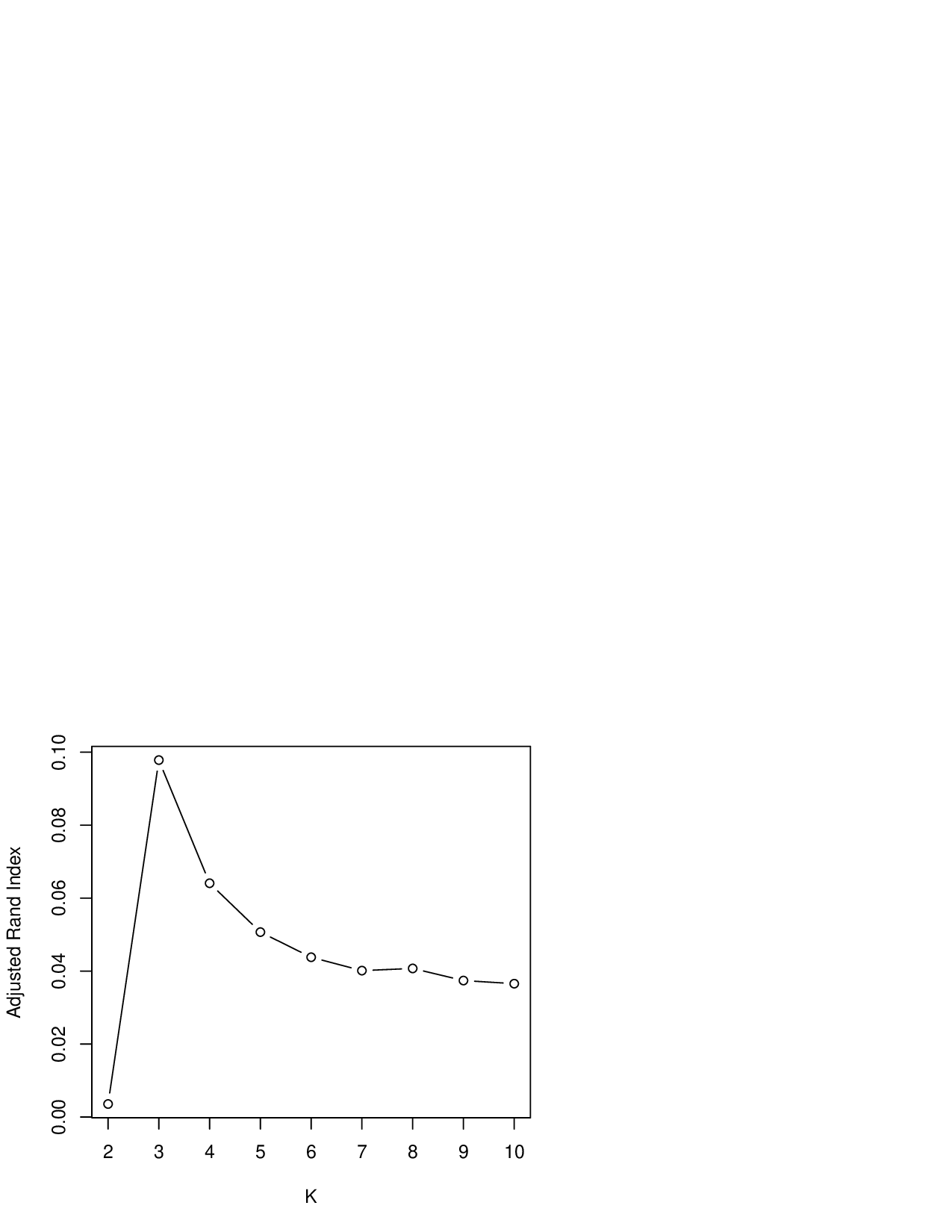}{6cm}{(a) Selecting the number of communities}{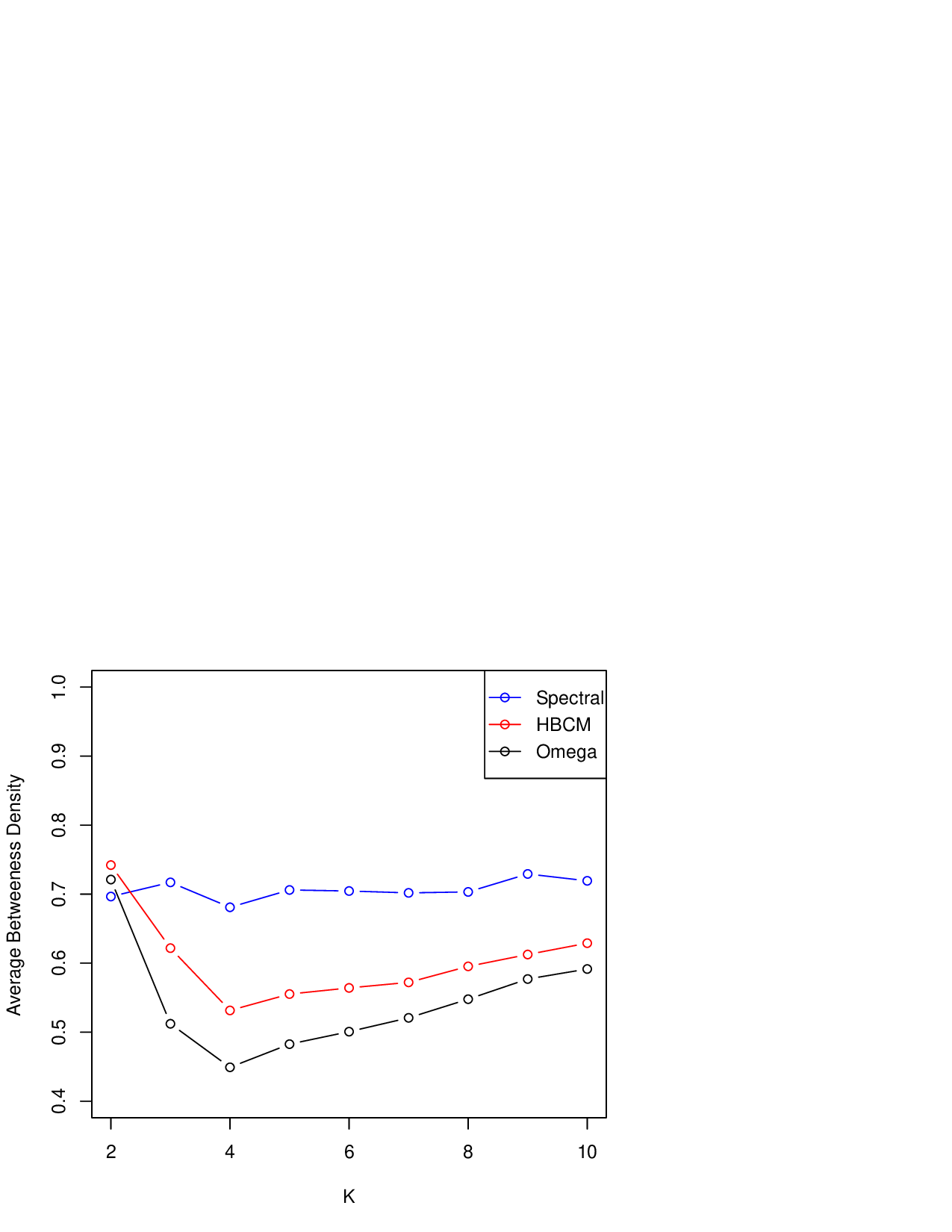}{6cm}{(b) Between-cluster densities}
	\caption{Left panel: Cross-validation method for selecting the number of communities in the scRNA data. Right panel: between-cluster densities of the absolute correlation matrix and $\hat{\M{\Omega}}$. }
	\label{fig:mouse1}
\end{figure}
\begin{figure}[!ht]
	\twoImages{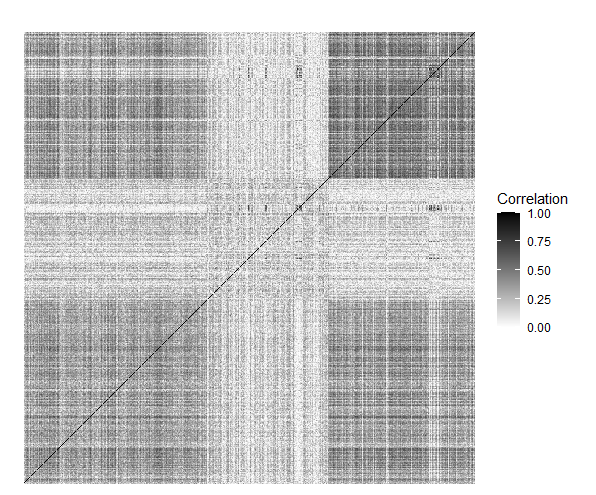}{6cm}{(a) Spectral clustering}{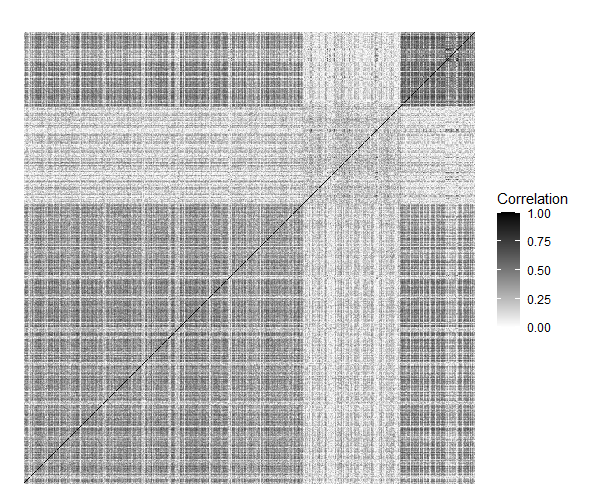}{6cm}{(b) HBCM }
	\caption{Heatmaps of the absolute correlation matrix.}
	\label{fig:mouse_heatmap}
\end{figure}
First, we apply the cross-validation method proposed in Section \ref{sec:cv}
 to select the number of communities, using $M=100$. As shown in the left panel of Figure \ref{fig:mouse_heatmap}, the maximum ARI score is achieved at $K=3$. Next, we apply spectral clustering and HBCM with $K=3$ to the dataset. Figure \ref{fig:mouse_heatmap} displays heatmaps of the absolute correlation matrix, with genes reordered according to the clustering results from each method, respectively. To further compare the two methods, we compute the average values within each block of the absolute correlation matrix, forming a $3 \times 3$ matrix and excluding diagonal entries (which are always 1). The resulting matrices for spectral clustering and HBCM are: 
 \begin{align*}
B_{\textnormal{Spectral}} = \begin{pmatrix}
 0.33 & 0.18 & 0.34 \\
 0.18 & 0.18 & 0.15 \\
 0.34 & 0.15 & 0.43
\end{pmatrix}, \quad 
B_{\textnormal{HBCM}} = \begin{pmatrix}
0.33 & 0.17 & 0.31 \\
0.17 & 0.20 & 0.13 \\
0.31 & 0.13 & 0.42
\end{pmatrix}.
 \end{align*}
From the two matrices, HBCM demonstrates a stronger contrast between between-cluster and within-cluster densities compared to spectral clustering. In $B_{\textnormal{Spectral}}$, the density between clusters 1 and 3 is higher than the within-cluster density of cluster 1, and the density between clusters 1 and 2 is identical to the within-cluster density of cluster 2. On the contrary, $B_{\textnormal{HBCM}}$ presents a clearer contrast. To systematically compare the between-cluster and within-cluster densities, we further summarize matrix $B$ into a single value by standardizing $B$ and averaging its off-diagonal elements. Specifically, let $B_{\textnormal{standardized}} = \diag \left ( \sqrt{B_{11}^{-1}},\dots,\sqrt{B_{KK}^{-1}} \right ) \,\, B \,\, \diag \left ( \sqrt{B_{11}^{-1}},\dots,\sqrt{B_{KK}^{-1}} \right )$ and average its off-diagonal element. In addition, we include the between-cluster and within-cluster densities of $\hat{\M{\Omega}}$ in the comparison. Recall that $\hat{\M{\Omega}}$ characterizes the covariance structure at the community level (eliminating the effect of $\V{\lambda}$) and is identifiable up to multiplication by diagonal matrices on the left and right. Therefore, we standardize $\hat{\M{\Omega}}$ and take the average of the absolute values of its off-diagonal elements as before. These comparisons for $K=2, \dots, 10$ are illustrated in the right panel of Figure \ref{fig:mouse1}. Except for $K=2$, a clear pattern emerges: the between-cluster density of $B_{\textnormal{standardized,Spectral}}$ is greater than that of $B_{\textnormal{standardized,HBCM}}$, which in turn is greater than that of $\hat{\M{\Omega}}$. This indicates that HBCM consistently demonstrates a stronger contrast between between-cluster and within-cluster densities, while $\hat{\M{\Omega}}$ shows an even clearer contrast when the effect of $\lambda$ is isolated. Furthermore, results regarding WGCNA and gene set enrichment analysis (GSEA) are provided in the Supplemental Materials.

\section{Discussion}

We have introduced a community detection model that is designed for weighted networks with continuous links between nodes, while simultaneously accommodating the heterogeneity of individual nodes within the same community. The proposed HBCM can be directly applied to the raw data matrix, where rows represent samples and columns represent genes, rather than relying on a constructed adjacency matrix. To estimate the group membership, we have introduced a novel variational EM algorithm by incorporating a second layer of latent variables and a variational approximation, resulting in a significant reduction in computational complexity. We have established that the group membership estimated by the proposed model are consistent with the true labels under mild conditions. Through extensive simulation studies, we have demonstrated that the proposed model outperforms traditional spectral clustering, even in scenarios where the population covariance structure is misspecified. HBCM also outperforms classical community detection methods including SBM and DC-SBM, when applied to a thresholded matrix (see Section 1 in the Supplementary Materials). Additionally, we have introduced a cross-validation-based method for selecting the number of communities when it is unknown.

We plan to explore several directions for future work. Firstly, our theoretical results assume a fixed number of communities. It is intriguing to investigate the asymptotic behavior of the model as the number of communities increases. Secondly, the HBCM can be extended to encompass a biclustering scenario, wherein both rows and columns of the data matrix are clustered simultaneously. Thirdly, our aim is to develop a more systematic criterion for selecting the number of communities, ideally with potential theoretical justification. Lastly, our analysis is based on the covariance structure of the data matrix, which aligns with typical module detection methods for gene expression data \citep{saelens2018comprehensive}. The mean structure could likewise potentially impact cluster analysis. It would be intriguing to explore a simultaneous analysis of both mean and covariance structures in future work.


\section*{Supplemental Materials}
\addcontentsline{toc}{section}{Supplemental Materials} 
\startsupplement

\section{Additional simulation studies}

\subsection{Comparison of HBCM with SBM and DC-SBM}
An alternative approach to conducting cluster analysis on the covariance or correlation matrix is to transform the matrix into a network with binary edges and apply classical community detection models, such as the stochastic block model (SBM) or the degree-corrected stochastic block model (DC-SBM).
We conduct an additional simulation study using the settings from Table 1 of the main text to compare the clustering results of HBCM with those of the SBM and DC-SBM.

First, we calculate the absolute sample correlation matrix and dichotomize it using the mean values as the threshold. The resulting networks are denser than those typically studied in the community detection literature, as our aim is to retain sufficient information for subsequent analyses. We then fit the dichotomized networks using the pseudo-likelihood method \citep{amini2013pseudo} for both SBM and DC-SBM. We use spectral clustering on the dichotomized networks to provide initial values for both methods.

\begin{table}[h!]
\centering
\begin{adjustbox}{width=1.0\textwidth, center=\textwidth }
\begin{tabular}{|l|lll|lll|lll|}
     \hline
& \multicolumn{3}{c|}{$P$=300} & \multicolumn{3}{c|}{$P$=500} & \multicolumn{3}{c|} {$P$=1000} \\
 \hline
 & \multicolumn{9}{c|}{$N$=500}\\
 \hline
$K$  & HBCM & SBM & DCSBM  & HBCM & SBM & DCSBM  & HBCM & SBM & DCSBM  \\
\hline
3  & 0.46 (0.14) & 0.00 (0.01) & 0.30 (0.14)
 & 0.49 (0.15) & 0.00 (0.00) & 0.34 (0.07)
 & 0.49 (0.14)  & 0.00 (0.00) & 0.36 (0.07)\\
5&  0.45 (0.09) & 0.00 (0.01) & 0.11 (0.07)
&  0.46 (0.08) & 0.00 (0.01) & 0.23 (0.12)
&  0.49 (0.06) & 0.01 (0.01) & 0.22 (0.10)\\
7& 0.43 (0.09) & 0.00 (0.01) & 0.04 (0.03)
& 0.46 (0.12) & 0.00 (0.01) & 0.10 (0.04)
& 0.49 (0.05) & 0.00 (0.01) & 0.14 (0.03)\\
\hline
& \multicolumn{3}{c|}{$P$=500} & \multicolumn{3}{c|}{$P$=1000} & \multicolumn{3}{c|} {$P$=1500} \\
 \hline
 & \multicolumn{9}{c|}{$N$=1000}\\
 \hline
$K$  & HBCM & SBM & DCSBM & HBCM & SBM & DCSBM  & HBCM & SBM & DCSBM  \\
\hline
3  & 0.52 (0.17) & 0.00 (0.00) & 0.12 (0.18)
 & 0.60 (0.17) & 0.00 (0.00) & 0.41 (0.21)
 & 0.61 (0.16) & 0.00 (0.00) & 0.53 (0.23)\\
5  & 0.52 (0.12) & 0.00 (0.01) & 0.06 (0.08)
 & 0.53 (0.08) & 0.01 (0.01) & 0.17 (0.07)
 & 0.53 (0.05) & 0.01 (0.01) & 0.20 (0.03)\\
7  & 0.57 (0.04) & 0.00 (0.00) & 0.01 (0.03) 
 & 0.56 (0.05) & 0.00 (0.00) & 0.09 (0.07)
 & 0.57 (0.05) & 0.00 (0.00) & 0.19 (0.05)\\
\hline
\end{tabular}
\end{adjustbox}
\caption{Evaluate and compare the clustering performance of HBCM, SBM, and DCSBM under various setups of $N,P,K$, using the adjusted rand index.}\label{tab:ARI_SBM}
\end{table}

From Table \ref{tab:ARI_SBM}, the SBM consistently yields ARI values close to 0, indicating that it does not adequately account for degree variation. In contrast, the DC-SBM produces meaningful ARI values, although they are consistently lower than those of the HBCM due to information loss from dichotomization. In particular, the performance of the DC-SBM substantially deteriorates as the number of clusters increases.

\subsection{Comparison of HBCM with WGCNA}
We conduct a simulation study using one setting from Table 1, i.e., $N=500$, $P=300$, and $K=3$, to illustrate the clustering performance of WGCNA. We use the widely applied R package \textsf{WGCNA} to perform the clustering. One caveat of using the package is that there are many customizable parameters that can impact the estimated number of clusters. We select the parameter values to increase the likelihood of clustering the columns into $K=3$ groups. Specifically, we use the power $\beta = 6$ when constructing the weighted adjacency matrix by raising the absolute correlation to a power. The simulation shows that the average ARI of the WGCNA clustering results is 0.01, which is worse than the ARI of Spectral Clustering (0.26) and HBCM (0.48). The main reason this method performs poorly in our scenario is that it relies on hierarchical clustering, which tends to produce a ``core-periphery'' structure in the dendrogram, differing from our assumptions.

\subsection{HBCM with random initial cluster assignments}
We use spectral clustering results as initial values when fitting HBCM in the main text. In this section, we investigate the performance of HBCM with random initial cluster assignments. We conduct a simulation study using the following setup: let $N=1000$, $P=1000$, $K=2$, and $\bome$ have diagonal elements equal to 1 and off-diagonal elements equal to 0.5. The cluster labels are randomly assigned to the two clusters with equal probabilities. Furthermore, $\lambda_j$ for $j=1,\dots,500$ is set to 1, for $j=501,\dots,1000$ it is set to -1, and $\sigma_j$ is assigned a constant value of 8. The simulation is repeated 100 times. For each dataset, we compare three methods: spectral clustering, HBCM with spectral clustering as initial assignments, and HBCM with random initial assignments. For HBCM with random initial assignments, we randomly generate 10 initial values and select the solution that gives the highest value of the objective function $J(q(\V{c},\V{\alpha}), \Phi)$. Figure \ref{HBCM_random} shows the distributions of adjusted Rand indices for the three methods. The figure demonstrates that HBCM with spectral clustering as initial assignments and HBCM with random initial assignments perform similarly, both generally outperforming spectral clustering. However, HBCM with random initial assignments occasionally generates poor clustering results (see outliers in Figure \ref{HBCM_random}), indicating that this method is less stable than when using spectral clustering as initial values. Moreover, HBCM with random initial values takes more iterations to converge. The average number of iterations required by HBCM with spectral clustering as initial values and by HBCM with random initial values is 12.69 and 21.12, respectively. 

\begin{figure}[!ht]
\begin{center}
\includegraphics[width=5in]{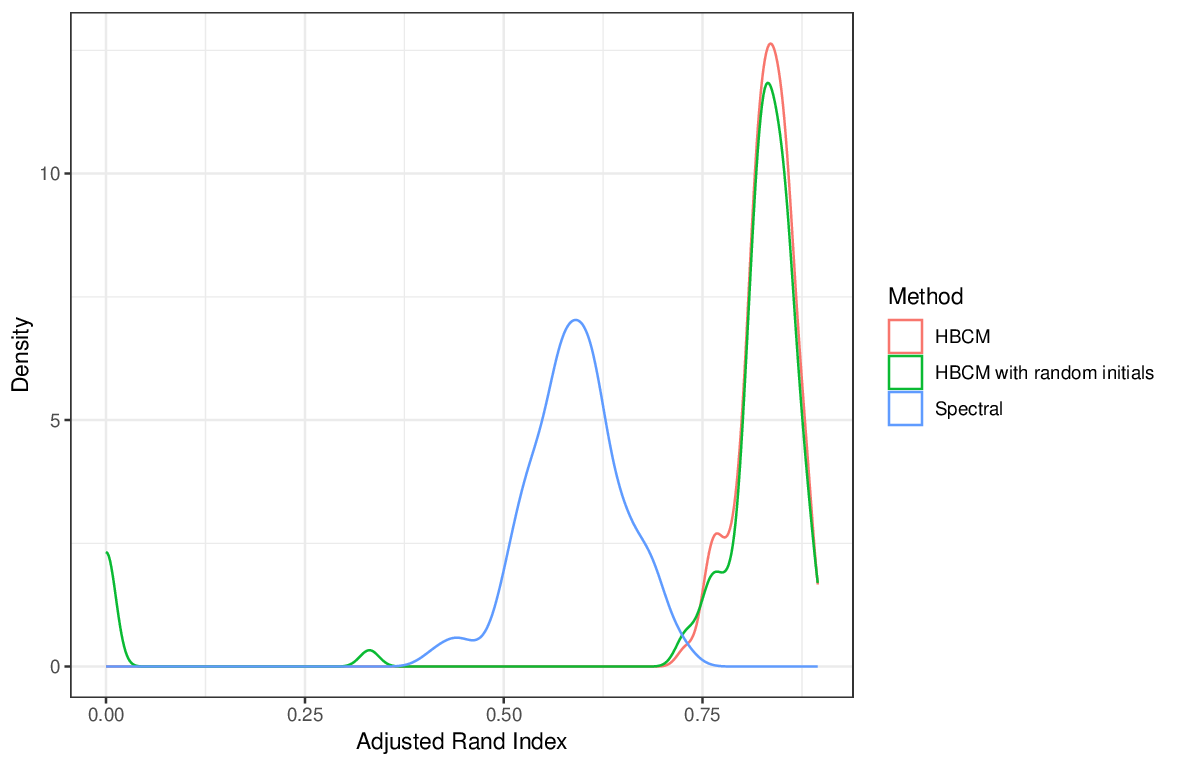}
\end{center}

\caption{Comparison of spectral clustering, HBCM, and HBCM with random initials}
\label{HBCM_random}
\end{figure}

\subsection{Analysis of computation complexity}

We conduct additional simulation studies to assess the computational complexity of the proposed method with respect to $N$ and $P$. In the first study, we fix $P = 1000$ and $K = 3$, and vary $N$ from 500 to 1500 with increments of 200. In the second study, we fix $P = 1000$ and $K = 3$, and vary $P$ from 500 to 1500 with increments of 200. The rest of the setting is the same as in Table 1 in the main text. The simulations were conducted on a research computing cluster with AMD EPYC CPUs. The CPU runtimes (in seconds) are reported in Figure \ref{fig:time}. All simulations are completed within one minute. As shown in the figure,  the algorithm's complexity is nearly linear with respect to both $N$ and $P$, which demonstrates that the proposed algorithm is not computationally demanding. Furthermore, it can be easily verified that the complexity of the E and M steps is $O(NP)$ according to  Algorithm 1 and Eq. (5) in the main text.

\begin{figure}[!ht]
	\twoImages{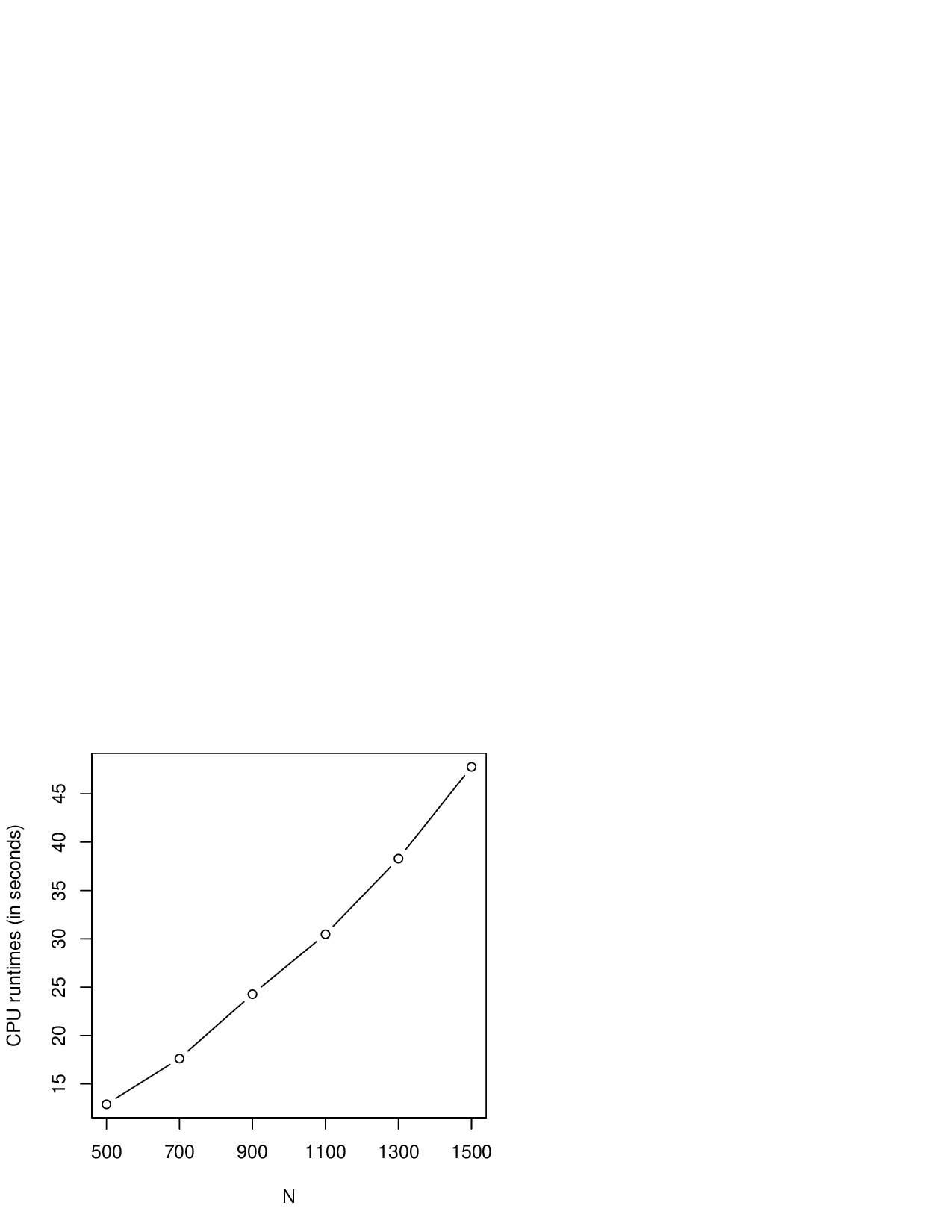}{6cm}{(a) $N$ varies}{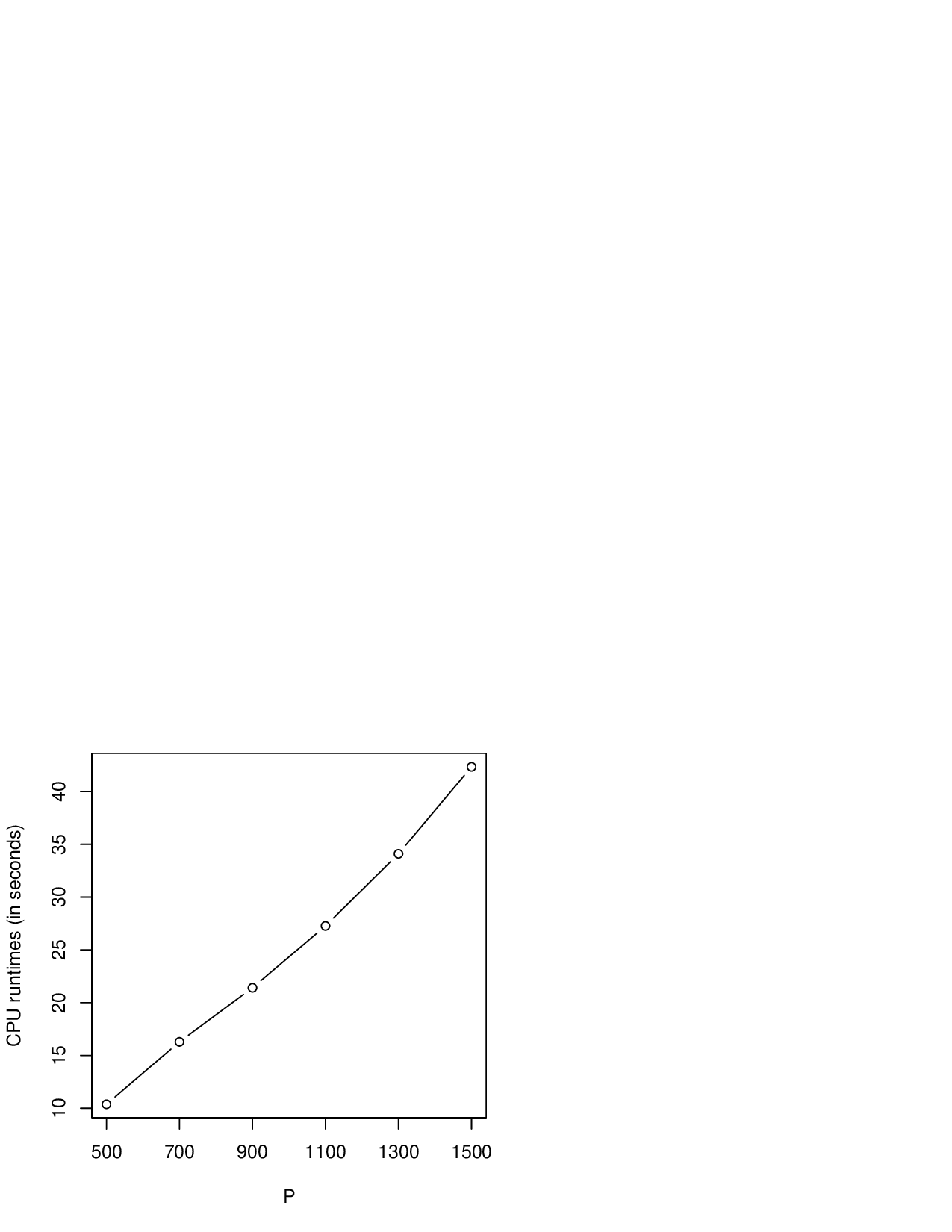}{6cm}{(b) $P$ varies}
	\caption{CPU runtimes (in seconds) of HBCM as $N$ and $P$ vary.}
	\label{fig:time}
\end{figure}

\section{Analysis of stock price data }

We analyze the stock price data described in \cite{liu2012nonparanormal} and \cite{tan2015cluster}. This dataset, available in the R package \textsf{HUGE}, contains daily closing prices for stocks in the S\&P 500 index from January 1, 2003, to January 1, 2008. The stocks are categorized into 10 Global Industry Classification Standard (GICS) sectors: industrials, financials, health care, consumer discretionary, information technology, utilities, materials, consumer staples, telecommunications services, and energy. Following the approach of \cite{tan2015cluster}, we exclude any stocks that were not consistently included in the S\&P 500 index during this period. This results in a total of 1258 daily closing prices for 452 stocks. To reduce the temporal dependency on stock prices over days, we consider the difference between stock prices on consecutive days. Specifically, let $y_{ij}$ be the closing price of the $j$-th stock on the $i$-th day, and $\M{X} = [x_{ij}]$ be a 1257-by-452 data matrix with $x_{ij} = y_{i+1,j} - y_{ij}$. 

To systematically compare HBCM, spectral clustering, and WGCNA, we created submatrices of $\M{X}$ by taking the first $N$ rows, where $N=100, 200, \ldots, 1200, 1257$. Following the standard protocol in this paper, we centralize each submatrix so that each stock has a mean of zero. We perform HBCM, spectral clustering, and WGCNA on the submatrices and report the ARIs by comparing the estimated labels with the true GICS categories in Figure \ref{fig:stock}. We use $K = 10$ for both HBCM and spectral clustering. For WGCNA, we select the tuning parameters to likely partition the data into 10 groups. 

\begin{figure}[!ht]
	\twoImages{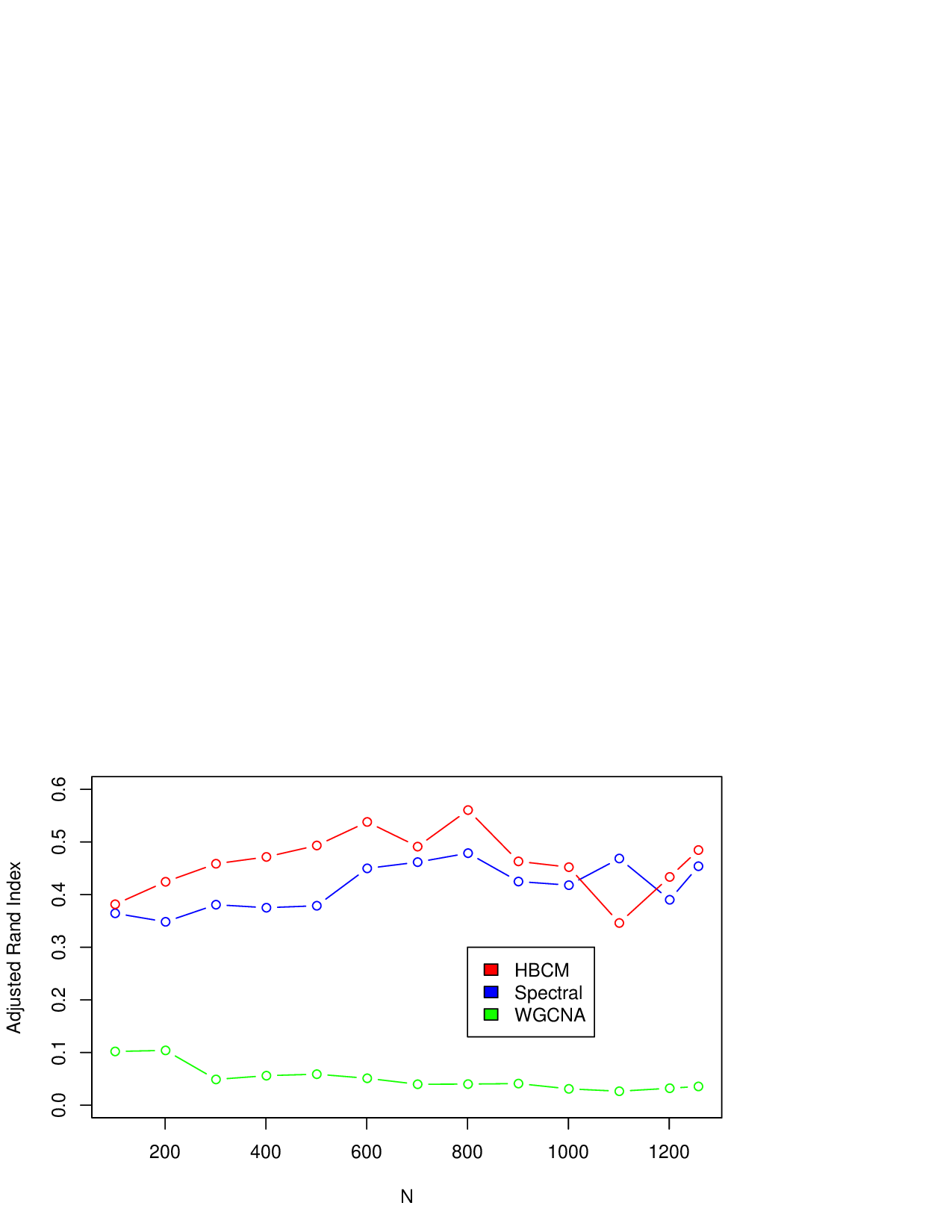}{6cm}{(a) ARI against the true labels}{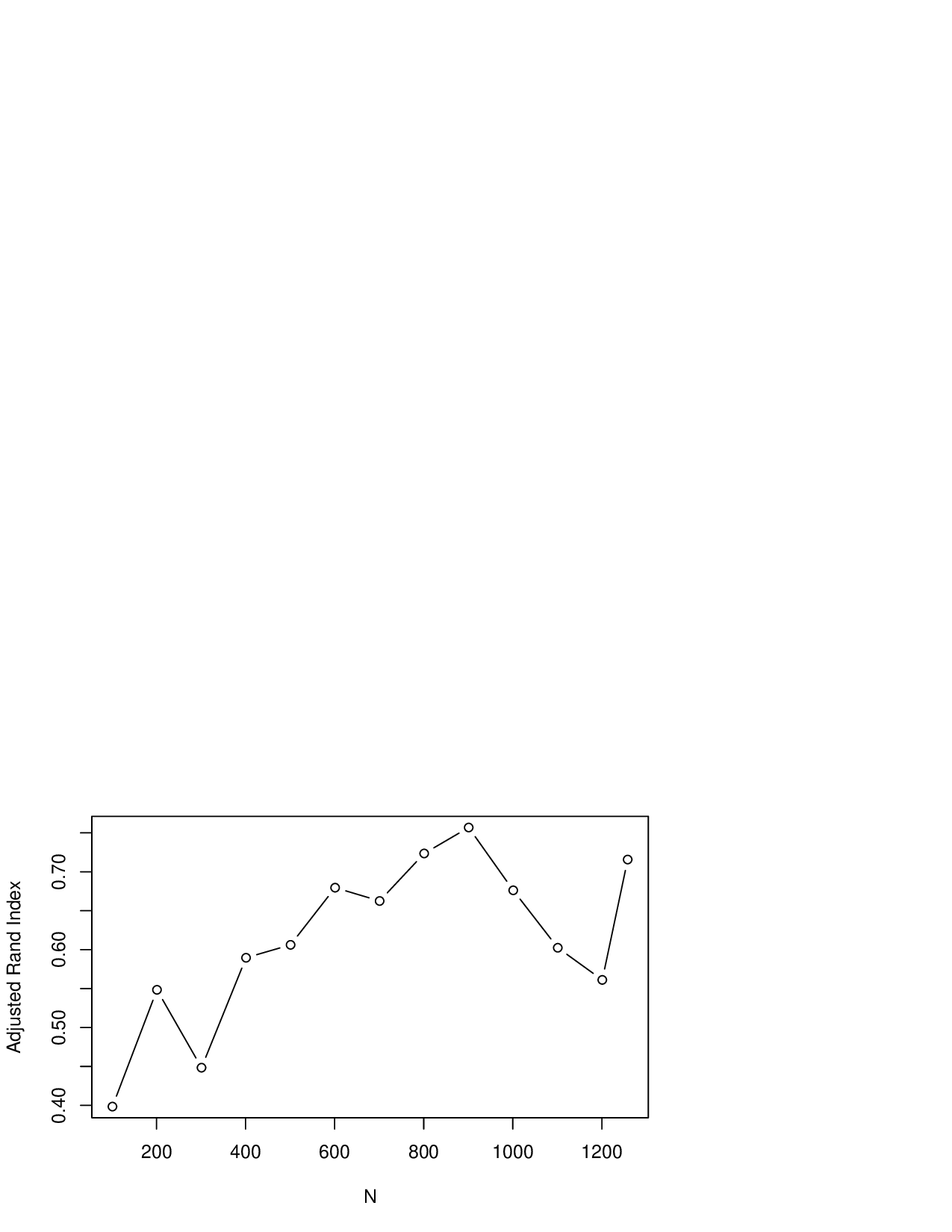}{6cm}{(b) ARI between HBCM and Spectral}
	\caption{ARI for clustering results for stock data. Left panel:  ARI of spectral clustering, WGCNA, and HBCM, each against the true labels. Right panel:  ARI between HBCM and Spectral.}
	\label{fig:stock}
\end{figure}

As shown in the left panel of Figure \ref{fig:stock},  WGCNA consistently yields low ARI scores due to imbalanced clustering results while HBCM outperforms spectral clustering in most cases in terms of ARI, particularly when $N$ is small. Additionally, the right panel reports the mutual ARI between HBCM and spectral clustering, indicating that HBCM can meaningfully deviate from the initial labels estimated by spectral clustering. 

\section{Additional results on mouse embryo single-cell RNA-seq data}
\begin{figure}[!ht]
\begin{center}
\includegraphics[width=3.5in]{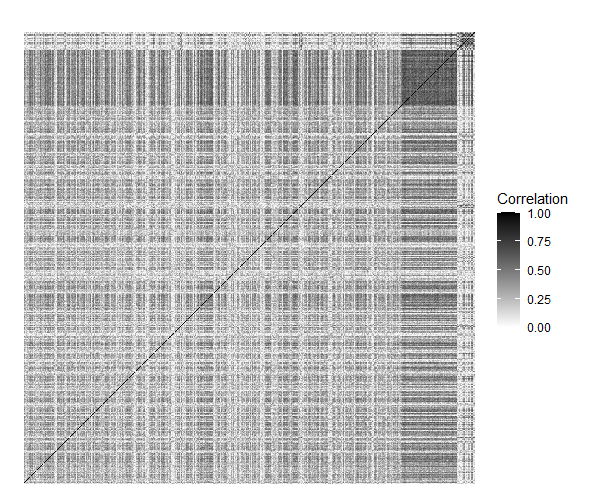}
\end{center}

\caption{Heatmaps of the absolute correlation matrix corresponding to WGCNA ($K=3$). }
\label{fig:mouse_wgcna}
\end{figure}

As in the simulation studies and stock price data analysis, WGCNA results in highly unbalanced cluster label assignments in the single-cell RNA-seq dataset. Figure \ref{fig:mouse_wgcna}
displays heatmaps of the absolute correlation matrix, with genes reordered according to the clusters identified by WGCNA. In the figure, cluster 1 dominates in size, and its within-cluster density is lower than the density between cluster 1 and 2, as further confirmed by the following matrix, which averages each block of the absolute correlation matrix.
\begin{align*}
B_{\textnormal{WGCNA}} = \begin{pmatrix}
0.24 & 0.34 & 0.19 \\
0.34 & 0.58 & 0.31 \\
0.19 & 0.31 & 0.47
\end{pmatrix}.
\end{align*}

Next, we perform gene set enrichment analysis (GSEA) to compare the identified clusters with gene sets associated with known biological functions. We use the MSigDB database (https://www.gsea-msigdb.org/gsea/msigdb) to screen for gene sets with significant overlap. The database only allows input of gene clusters with a maximum size of 500. Therefore, we increase the number of clusters to $K=8$ to ensure that all clusters identified by spectral clustering and WGCNA have a size no greater than 500. However, WGCNA consistently produces a single dominant gene cluster (size 1178 for $K=8$) and is therefore excluded from the study. We screen the clusters identified by spectral clustering and HBCM, reporting the predefined gene sets if the $p$-values are less than $10^{-20}$. The numbers of gene sets overlapping with each of the eight clusters identified by each method are reported in Table \ref{table:GSEA}, with clusters ranked by size. The full list of identified gene sets is provided in a separate Excel spreadsheet. Spectral clustering results overlap with 118 predefined gene sets from the MSigDB database, while HBCM results overlap with 116 gene sets. However, the gene sets overlapping with HBCM results are slightly more dispersed. For example, cluster 8 overlaps with 9 gene sets, which is substantial given the cluster's size.

\begin{table}[ht!]
\centering
\begin{adjustbox}{width=1.0\textwidth, center=\textwidth }
\begin{tabular}{|l|l|l|l|l|l|}
     \hline
\multicolumn{3}{|c|}{Spectral clustering} & \multicolumn{3}{c|}{HBCM}  \\
\hline 
Cluster index & Cluster size & \# Overlapping sets & Cluster index & Cluster size & \# Overlapping sets \\
\hline
1 &	209 	&1    & 1  & 403 &	23
 \\      
2 &	202 &	16   &  2  & 295 &	66
\\
3 &	200 &	1     & 3  & 219 &	2
\\
4 &	198 &	2     & 4  & 169 &	1
 \\
5 &	174 &	85    & 5  & 139 &	1
\\
6 &	172 &	1     & 6  & 98 &	12
\\
7 &	156 &	1     & 7 & 77 &	2
\\
8 &	112 &	11    & 8  & 23	&  9
 \\
\hline 
\end{tabular}
\end{adjustbox}
\caption{Gene set enrichment analysis (GSEA) of clustering results from spectral clustering and HBCM.} \label{table:GSEA}

\end{table}

\section{Derivation of the variational expectation-maximization algorithm}
In this section, we provide details on the factorization of  $q_1(\V{c}), q_2(\V{\alpha})$ and derivation of parameter estimators in closed-forms involved in the variational EM algorithm. Recall that the algorithm maximizes 
\begin{align*}
 J(q_1(\V{c}),q_2(\V{\alpha}), \Phi) = &\sum_{\V{c} \in [K]^P}\int_{\V{\alpha}\in \mathbb{R}^{N\times K}} q_1(\V{c})q_2( \V{\alpha})L(\M{X},\V{c},\V{\alpha}|\Phi)d\V{\alpha} \\
  & - \sum_{\V{c} \in [K]^P}\int_{\V{\alpha}\in \mathbb{R}^{N\times K}} q_1(\V{c})q_2( \V{\alpha})\log q_1(\V{c})q_2( \V{\alpha}) d\V{\alpha},
\end{align*}
over $q_1(\V{c})$, $q_2(\V{\alpha})$ and $\Phi$ iteratively, 
where 
\begin{align*}
 L(\M{X},\V{c},\V{\alpha}|\Phi)
= & \log \big[f(\V{c}|\V{\pi})f(\V{\alpha}|\bome)f(\M{X}|\V{c},\V{\alpha},\{\lambda_j\}, \{\sigma_j\})\big] \\
= & \sum_{j=1}^P\sum_{k=1}^K 1(c_j=k)\log{\pi_k} -\frac{N}{2}\log|\bome|-\frac{1}{2}\sum_{i=1}^N\V{\alpha}_i^\top\bome^{-1}\V{\alpha}_i \\
    &  +  \sum_{j=1}^P\sum_{i=1}^N\sum_{k=1}^K 1(c_j=k)\bigg( -\frac{1}{2}\log\sigma_j^2-\frac{(X_{ij}-\lambda_j\alpha_{ik})^2}{2\sigma_j^2}\bigg).
\end{align*}


\begin{proposition}
Let
\begin{align*}
    f_j(c_j) = \exp\left\{ \ln \pi_{c_j} +  \sum_{i=1}^N\bigg( -\frac{1}{2}\ln\sigma_j^2-\frac{\mathbb{E}_{q_2}(X_{ij}-\lambda_j\alpha_{ic_j})^2}{2\sigma_j^2}\bigg)\right\}.
\end{align*}
Given $\Phi$ and $q_2(\V{\alpha})$, 
\begin{align*}
\argmax_{q_1} J(q_1,q_2,\Phi)
\end{align*}
is achieved at 
\begin{align*}
 q_1(\V{c})=\prod_{j=1}^P\frac{f_j(c_j)}{\sum_{k=1}^K f_j(k)}.
\end{align*}
\end{proposition}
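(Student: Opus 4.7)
The plan is to treat this as a standard mean-field variational optimization: with $\Phi$ and $q_2$ held fixed, the objective $J(q_1, q_2, \Phi)$ becomes a functional of $q_1$ alone (plus a constant in $q_2, \Phi$), and its maximizer can be identified by the classical Gibbs/KL argument. Specifically, I would first absorb all $q_2$-dependent entropy pieces into a constant $C$ and rewrite
\begin{align*}
 J(q_1,q_2,\Phi) \;=\; \sum_{\V{c}} q_1(\V{c})\, \mathbb{E}_{q_2}\!\left[L(\M{X},\V{c},\V{\alpha}|\Phi)\right] \;-\; \sum_{\V{c}} q_1(\V{c}) \log q_1(\V{c}) \;+\; C.
\end{align*}

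Next I would exploit the explicit form of $L$ given in the excerpt to show that $\mathbb{E}_{q_2}[L]$ decomposes, as a function of $\V{c}$, into a sum over $j$. The first line of $L$ contributes $\sum_j \log \pi_{c_j}$, the second line is free of $\V{c}$ and goes into $C$, and the last line contributes $\sum_j \sum_i \bigl(-\tfrac{1}{2}\log \sigma_j^2 - \mathbb{E}_{q_2}[(X_{ij}-\lambda_j\alpha_{ic_j})^2]/(2\sigma_j^2)\bigr)$. Matching this to the definition of $f_j$, we get $\mathbb{E}_{q_2}[L] = \sum_{j=1}^P \log f_j(c_j) + C'$, so that
\begin{align*}
 J(q_1,q_2,\Phi) \;=\; \sum_{\V{c}} q_1(\V{c}) \log \frac{\prod_{j=1}^P f_j(c_j)}{q_1(\V{c})} \;+\; (C+C').
\end{align*}

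Finally, I would invoke Gibbs' inequality (equivalently, nonnegativity of KL divergence) applied to the unnormalized kernel $\prod_j f_j(c_j)$: writing $Z = \sum_{\V{c}\in[K]^P}\prod_j f_j(c_j) = \prod_j \sum_{k=1}^K f_j(k)$ (by Fubini over the product index set), the difference
\begin{align*}
 \log Z - J(q_1,q_2,\Phi) + (C+C') \;=\; \mathrm{KL}\!\left(q_1(\V{c})\,\Big\|\,\frac{\prod_j f_j(c_j)}{Z}\right)
\end{align*}
is nonnegative and vanishes if and only if $q_1(\V{c}) = \prod_j f_j(c_j)/Z = \prod_{j=1}^P f_j(c_j)/\sum_{k=1}^K f_j(k)$, which is the claimed maximizer; the factorization across $j$ is automatic and requires no separate argument.

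I do not anticipate a serious obstacle: the only step worth stating carefully is the factorization $Z = \prod_j \sum_k f_j(k)$, which follows because the kernel factorizes and the sum runs over all of $[K]^P$, and the recognition that the objective, up to additive constants, is $-\mathrm{KL}(q_1 \,\|\, \prod_j f_j / Z) + \log Z$. Once the KL form is isolated, the conclusion is immediate.
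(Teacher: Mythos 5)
Your proposal is correct and follows essentially the same route as the paper's proof: both reduce the problem to maximizing $\sum_{\V{c}} q_1(\V{c})\log\bigl(\prod_j f_j(c_j)/q_1(\V{c})\bigr)$ after showing $\mathbb{E}_{q_2}[L]$ collapses (up to constants) to $\sum_j \log f_j(c_j)$, and then conclude via the same inequality --- the paper phrases it as Jensen's inequality with equality when the ratio is constant, while you phrase it as nonnegativity of the KL divergence, which is the identical argument. The factorization $Z=\prod_j\sum_k f_j(k)$ appears in both as the final normalization step.
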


\begin{proof}
Given $q_2(\V{\alpha})$ and $\Phi$, updating $q_1(\V{c})$ is equivalent to maximizing
\begin{align*}
     J_1(q_1,q_2, \Phi)&=\sum_{\V{c} \in [K]^P}\ q_1(\V{c})\int_{\V{\alpha}\in \mathbb{R}^{N\times K}}q_2( \V{\alpha})L(\M{X},\V{c},\V{\alpha}|\Phi)d\V{\alpha}  - \sum_{\V{c} \in [K]^P} q_1(\V{c})\ln q_1(\V{c}) \\
     & = \sum_{\V{c} \in [K]^P}\ q_1(\V{c})\mathbb{E}_{q_2}(L(\M{X},\V{c},\V{\alpha}|\Phi)) - \sum_{\V{c} \in [K]^P} q_1(\V{c})\ln q_1(\V{c}).
\end{align*}
Up to a constant,
\begin{align*}
   \mathbb{E}_{q_2}(L(\M{X},\V{c},\V{\alpha}|\Phi)) & =\sum_{j=1}^P\sum_{k=1}^K 1(c_j=k)\ln{\pi_k}\\
   & \quad +  \sum_{j=1}^P\sum_{i=1}^N\sum_{k=1}^K 1(c_j=k)\bigg( -\frac{1}{2}\ln\sigma_j^2-\frac{\mathbb{E}_{q_2}(X_{ij}-\lambda_j\alpha_{ik})^2}{2\sigma_j^2}\bigg). 
\end{align*}
Next after rearranging the terms we obtain that
\begin{align*}
\mathbb{E}_{q_2}(L(\M{X},\V{c},\V{\alpha}|\Phi)) & = \sum_{j=1}^P\ln \pi_{c_j} + \sum_{j=1}^P\sum_{i=1}^N\bigg( -\frac{1}{2}\ln\sigma_j^2-\frac{\mathbb{E}_{q_2}(X_{ij}-\lambda_j\alpha_{ic_j})^2}{2\sigma_j^2}\bigg)\\
& =\sum_{j=1}^P\bigg[\ln \pi_{c_j} +  \sum_{i=1}^N\bigg( -\frac{1}{2}\ln\sigma_j^2-\frac{\mathbb{E}_{q_2}(X_{ij}-\lambda_j\alpha_{ic_j})^2}{2\sigma_j^2}\bigg)\bigg]\\
&= \ln \prod_{j=1}^P f_j(c_j).
\end{align*}
Then we have 
\begin{align*}
 J_1(q_1,q_2, \Phi)&= \sum_{\V{c} \in [K]^P}\ q_1(\V{c})\ln \prod_{j=1}^P f_j(c_j) - \sum_{\V{c} \in [K]^P} q_1(\V{c})\ln q_1(\V{c}). \\
 & = \sum_{\V{c} \in [K]^P} q_1(\V{c})\ln \frac{\prod_{j=1}^P f_j(c_j)}{q_1(\V{c})} \\
 & \leq \ln \sum_{\V{c} \in [K]^P}q_1(\V{c})\frac{\prod_{j=1}^P f_j(c_j)}{q_1(\V{c})} \quad \mbox(\text{Jensen's ineqaulity}) \\
 &=\ln \sum_{\V{c} \in [K]^P}\prod_{j=1}^P f_j(c_j).
\end{align*}
The equality holds when $\frac{\prod_{j=1}^P f_j(c_j)}{q_1(\V{c})}$ is constant with respect to  $\V{c}$. Therefore,
\begin{align*}
 q_1(\V{c})=\frac{\prod_{j=1}^P f_j(c_j)}{\sum_{c_1'=1}^K \sum_{c_2'=1}^K \dots \sum_{c_P'=1}^K\prod_{j=1}^P f_j(c_j') }=\prod_{j=1}^P\frac{f_j(c_j)}{\sum_{k=1}^Kf_j(k)}.
\end{align*}
\end{proof}


\begin{proposition}
Let
\begin{align*}
    g_i(\V{\alpha}_i) =\exp\left\{-\frac{1}{2} \V{\alpha}_i^\top\bome^{-1}\V{\alpha}_i-\frac{1}{2}\sum_{j=1}^P\sum_{k=1}^K\frac{\mathbb{E}_{q_1}[1(c_j=k)]}{\sigma_j^2}(X_{ij}-\lambda_j\alpha_{ik})^2\right\}.
\end{align*}
Given $\Phi$ and $q_1(\V{c})$, 
\begin{align*}
\argmax_{q_1} J(q_1,q_2,\Phi)
\end{align*}
is achieved at 
\begin{align*}
q_2(\V{\alpha})=\prod_{i=1}^N\frac{g_i(\alpha_i)}{\int_{\alpha_i'\in \mathbb{R}^K}g_i(\alpha_i')d\alpha_i'}.
\end{align*}
Furthermore, let  
\begin{align*}
\M{D}_{j}&= \frac{\lambda_j^2}{\sigma_j^2}\begin{pmatrix}
\mathbb{E}_{q_1}(1(c_j=1)) & 0 & \cdots & 0 \\
0 & \mathbb{E}_{q_1}(1(c_j=2)) & \cdots & 0 \\
\vdots & \vdots & \ddots & \vdots\\
0 & 0 & \cdots & \mathbb{E}_{q_1}(1(c_j=K))
\end{pmatrix}_{K \times K},\\
    \V{b}_{ij} &=\frac{1}{\lambda_j}\begin{pmatrix}
    X_{ij}\\
    X_{ij}\\
    \vdots \\
    X_{ij}
    \end{pmatrix}_{K\times 1},\\
    \M{A} & =\bome^{-1}+\sum_{j=1}^P\M{D}_j, \,\, \V{B}_i  =\sum_{j=1}^P\M{D}_j\V{b}_{ij}, \,\, \hat{\M{V}}  =\M{A}^{-1}, \\
    \textnormal{and } \hat{\V{\mu}}_i & =\M{A}^{-1}\V{B}_i.
\end{align*}
Then $q_2(\alpha_i)$ is the density function of a $K$-variate normal distribution $N(\hat{\V{\mu}}_i,\hat{\M{V}})$.
\end{proposition}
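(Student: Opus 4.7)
The plan is to mirror the derivation of Proposition 1, substituting the role of $q_1$ with $q_2$ and then identifying the resulting density as Gaussian by completing the square. First, holding $q_1$ and $\Phi$ fixed, I isolate the $q_2$-dependent part of $J$, obtaining
\begin{align*}
J_2(q_1, q_2, \Phi) = \int_{\V{\alpha}} q_2(\V{\alpha})\, \mathbb{E}_{q_1}\!\big[L(\M{X}, \V{c}, \V{\alpha}|\Phi)\big]\, d\V{\alpha} - \int_{\V{\alpha}} q_2(\V{\alpha}) \ln q_2(\V{\alpha})\, d\V{\alpha}.
\end{align*}
Taking the expectation over $q_1$ of $L$, the term $\sum_{j}\sum_k 1(c_j=k)\ln\pi_k$ is independent of $\V{\alpha}$ and drops into the constant. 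Rearranging the remaining terms, the $\V{\alpha}$-dependent piece of $\mathbb{E}_{q_1}(L)$ decouples into a sum over $i=1,\ldots,N$, each summand depending only on $\V{\alpha}_i$. Exponentiating term-by-term yields $\prod_{i=1}^N g_i(\V{\alpha}_i)$ up to a multiplicative constant.

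Next, I apply the same Jensen's-inequality argument as in Proposition 1 (equivalently, a nonnegative-KL argument) to
\begin{align*}
J_2(q_1, q_2, \Phi) = \int_{\V{\alpha}} q_2(\V{\alpha}) \ln \frac{\prod_{i=1}^N g_i(\V{\alpha}_i)}{q_2(\V{\alpha})}\, d\V{\alpha} + \text{const} \le \ln \int_{\V{\alpha}} \prod_{i=1}^N g_i(\V{\alpha}_i)\, d\V{\alpha} + \text{const},
\end{align*}
with equality iff $q_2(\V{\alpha}) \propto \prod_i g_i(\V{\alpha}_i)$. Since $g_i$ depends only on $\V{\alpha}_i$, this product form automatically factorizes $q_2$ into the claimed marginals $q_2(\V{\alpha}_i) = g_i(\V{\alpha}_i) / \int g_i(\V{\alpha}_i') d\V{\alpha}_i'$.

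To identify each marginal as $N(\hat{\V{\mu}}_i, \hat{\M{V}})$, I expand the quadratic in the exponent of $g_i(\V{\alpha}_i)$. The key rearrangement is to note that although $(X_{ij} - \lambda_j \alpha_{ic_j})^2$ originally involves only the single coordinate $\alpha_{ic_j}$, the appearance of $1(c_j=k)$ after taking $\mathbb{E}_{q_1}$ leaves a sum over all $k$, each coordinate weighted by $\mathbb{E}_{q_1}[1(c_j=k)]/\sigma_j^2$. Expanding the square and collecting quadratic terms in $\V{\alpha}_i$ produces $-\tfrac{1}{2}\V{\alpha}_i^\top(\bome^{-1} + \sum_j \M{D}_j)\V{\alpha}_i = -\tfrac{1}{2}\V{\alpha}_i^\top \M{A}\, \V{\alpha}_i$, since the coefficient matrix multiplying $\alpha_{ik}^2$ is precisely the $(k,k)$ entry of $\M{D}_j$. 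Collecting linear terms in $\V{\alpha}_i$ gives $\V{\alpha}_i^\top \sum_j \M{D}_j \V{b}_{ij} = \V{\alpha}_i^\top \V{B}_i$ after inserting the factor $1/\lambda_j$ built into $\V{b}_{ij}$ and the $\lambda_j^2$ in $\M{D}_j$. Completing the square then writes the exponent as $-\tfrac{1}{2}(\V{\alpha}_i - \M{A}^{-1}\V{B}_i)^\top \M{A} (\V{\alpha}_i - \M{A}^{-1}\V{B}_i)$ plus a $\V{\alpha}_i$-independent constant, identifying the distribution as $N(\hat{\V{\mu}}_i, \hat{\M{V}})$.

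The main obstacle is the index bookkeeping in the step where the expectation pushes through the $1(c_j=k)$ indicator and produces the diagonal matrices $\M{D}_j$; once that rearrangement is carried out cleanly, the remaining calculation is a standard Gaussian completion of the square. Positive definiteness of $\M{A}$ (needed for invertibility and normalizability) follows from $\bome^{-1}$ being positive definite and each $\M{D}_j$ being positive semidefinite.
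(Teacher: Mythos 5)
Your proposal is correct and follows essentially the same route as the paper: isolate the $q_2$-dependent part of $J$, observe that $\mathbb{E}_{q_1}(L)$ decouples over $i$ into $\ln\prod_i g_i(\V{\alpha}_i)$, apply the Jensen/KL argument to get $q_2\propto\prod_i g_i$, and complete the square in $\V{\alpha}_i$ to identify $N(\hat{\V{\mu}}_i,\hat{\M{V}})$. The only cosmetic difference is that the paper first rewrites the sum as $\sum_j(\V{\alpha}_i-\V{b}_{ij})^\top\M{D}_j(\V{\alpha}_i-\V{b}_{ij})$ before expanding, while you expand and collect terms directly; your added remark on positive definiteness of $\M{A}$ is a harmless bonus.
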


\begin{proof}
Given $q_1(\V{c})$ and $\Phi$, updating $q_2(\V{\alpha})$ is equivalent to maximizing
\begin{align*}
J_2(q_1,q_2, \Phi)&=\sum_{\V{c} \in [K]^P}\int_{\V{\alpha}\in \mathbb{R}^{N\times K}} q_1(\V{c})q_2( \V{\alpha})L(\M{X},\V{c},\V{\alpha}|\Phi)d\V{\alpha} - \int_{\V{\alpha}\in \mathbb{R}^{N\times K}}q_2( \V{\alpha})\ln q_2( \V{\alpha})d\V{\alpha}\\
& = \int_{\V{\alpha}\in \mathbb{R}^{N\times K}} q_2( \V{\alpha})\mathbb{E}_{q_1}(L(\M{X},\V{c},\V{\alpha}|\Phi))d\V{\alpha}- \int_{\V{\alpha}\in \mathbb{R}^{N\times K}}q_2( \V{\alpha})\ln q_2( \V{\alpha})d\V{\alpha},
\end{align*}
and up to a constant we have
\begin{align*}
\mathbb{E}_{q_1}(L(\M{X},\V{c},\V{\alpha}|\Phi)) & =  -\frac{1}{2}\sum_{i=1}^N\V{\alpha}_i^\top\bome^{-1}\V{\alpha}_i   +  \sum_{j=1}^P\sum_{i=1}^N\sum_{k=1}^K \mathbb{E}_{q_1}[1(c_j=k)]\bigg(-\frac{(X_{ij}-\lambda_j\alpha_{ik})^2}{2\sigma_j^2}\bigg).
\end{align*}

Next after rearranging of the terms we obtain that
\begin{align*}
 \mathbb{E}_{q_1}(L(\M{X},\V{c},\V{\alpha}|\Phi)) & =  \sum_{i=1}^N\bigg[-\frac{1}{2} \V{\alpha}_i^\top\bome^{-1}\V{\alpha}_i-\frac{1}{2}\sum_{j=1}^P\sum_{k=1}^K\frac{\mathbb{E}_{q_1}[1(c_j=k)]}{\sigma_j^2}(X_{ij}-\lambda_j\alpha_{ik})^2\bigg]\\
 & = \ln\prod_{i=1}^Ng_i(\V{\alpha}_i).
\end{align*}

Then we have
\begin{align*}
    J_2(q_1,q_2, \Phi)&=\int_{\V{\alpha}\in \mathbb{R}^{N\times K}} q_2( \V{\alpha})\ln\prod_{i=1}^Ng_i(\V{\alpha}_i)d\V{\alpha}- \int_{\V{\alpha}\in \mathbb{R}^{N\times K}}q_2( \V{\alpha})\ln q_2( \V{\alpha})d\V{\alpha}\\
    & = \int_{\V{\alpha}\in \mathbb{R}^{N\times K}}q_2( \V{\alpha})\ln \frac{\prod_{i=1}^Ng_i(\V{\alpha}_i)}{q_2( \V{\alpha})}d\V{\alpha}\\
    & \leq \ln\int_{\V{\alpha}\in \mathbb{R}^{N\times K}}\prod_{i=1}^Ng_i(\V{\alpha}_i)d\V{\alpha}.
\end{align*}
The equality holds when $\frac{\prod_{i=1}^Ng_i(\V{\alpha}_i)}{q_2(\V{\alpha})}$ is constant with respect to $\V{\alpha}$, i.e.
\begin{align*}
    q_2(\V{\alpha}) = \frac{\prod_{i=1}^Ng_i(\V{\alpha}_i)}{\int_{\V{\alpha}^{'}\in \mathbb{R}^{N\times K}}\prod_{i=1}^Ng_i(\V{\alpha}_i^{'})d\V{\alpha}'} =\prod_{i=1}^N\frac{g_i(\V{\alpha}_i)}{\int_{\V{\alpha}_i\in \mathbb{R}^K}g_i(\V{\alpha}_i^{'})d\V{\alpha}_i^{'}}.
\end{align*}

Lastly we show that  $q_2(\V{\alpha}_i)$ is density function of a $K$-variate normal distribution $N(\hat{\V{\mu}}_i, \hat{\M{V}})$. We  reorganize the terms of $q_2(\V{\alpha}_i)$ as a quadratic function of $\V{\alpha}_i$,
\begin{align*}
Q(\V{\alpha}_i)&= \V{\alpha}_i^\top\bome^{-1}\V{\alpha}_i+\sum_{j=1}^P\sum_{k=1}^K\frac{\mathbb{E}_{q_1}[1(c_j=k)]}{\sigma_j^2}(X_{ij}-\lambda_j\alpha_{ik})^2\\
& = \V{\alpha}_i^\top\bome^{-1}\V{\alpha}_i+\sum_{j=1}^P(\V{\alpha}_i-\V{b}_{ij})^\top \M{D}_j (\V{\alpha}_i-\V{b}_{ij})\\
& \propto \V{\alpha}_i^\top\bome^{-1}\V{\alpha}_i +\sum_{j=1}^P\V{\alpha}_i^\top \M{D}_j\V{\alpha}_i - \left(\sum_{j=1}^P\V{b}_{ij}^\top\M{D}_j\right)\V{\alpha}_i - \V{\alpha}_i^\top\left(\sum_{j=1}^P\M{D}_j\V{b}_{ij}\right)\\
& = \V{\alpha}_i^\top\left(\bome^{-1}+\sum_{j=1}^P\M{D}_j\right)\V{\alpha}_i - \left(\sum_{j=1}^P\V{b}_{ij}^\top\M{D}_j\right)\V{\alpha}_i - \V{\alpha}_i^\top\left(\sum_{j=1}^P\M{D}_j\V{b}_{ij}\right) \\
&\propto  \V{\alpha}_i^\top \M{A}\V{\alpha}_i - \V{B}_i^\top\V{\alpha}_i -\V{\alpha}_i^\top\V{B}_i + \V{B}_i^\top\M{A}^{-1}\V{B}_i\\ & = \V{\alpha}_i^\top \M{A}\V{\alpha}_i - \V{B}_i^\top\M{A}^{-1}\M{A}\V{\alpha}_i -\V{\alpha}_i^\top\M{A}\M{A}^{-1}\V{B}_i + \V{B}_i^\top\M{A}^{-1}\M{A}\M{A}^{-1}\V{B}_i \\
    &\propto (\V{\alpha}_i-\hat{\V{\mu}}_i)^\top\hat{\M{V}}^{-1}(\V{\alpha}_i-\hat{\V{\mu}}_i).
\end{align*}
Therefore $q_2(\V{\alpha}_i)$ follows a $K$-variate normal distribution $N(\hat{\V{\mu}}_i, \hat{\M{V}})$.
\end{proof}
We now turn to the M-step. 
\begin{proposition}
Let
\begin{align*}
J_3(q_1,q_2, \Phi)&=\sum_{\V{c} \in [K]^P}\int_{\V{\alpha}\in \mathbb{R}^{N\times K}} q_1(\V{c})q_2( \V{\alpha})L(\M{X},\V{c},\V{\alpha}|\Phi)d\V{\alpha}
\end{align*}
Then the maximizer of $J_3(q_1,q_2, \Phi)$ over $\Phi$ given $q_1$ and $q_2$ is achieved at
\begin{align*} 
\hat{\M{\Omega}} & = \frac{1}{N}\sum_{i=1}^N\mathbb{E}_{q_2}(\V{\alpha}_i\V{\alpha}_i^\top), \\
             \hat{\pi}_k & = \frac{\sum_{j=1}^P \mathbb{E}_{q_1}[1(c_j=k)]}{\sum_{j=1}^P\sum_{k'=1}^K \mathbb{E}_{q_1}[1(c_j=k')]}, \quad k=1,...,K,
            \\
             \hat{\lambda}_j & = \frac{\sum_{i=1}^N\sum_{k=1}^K \mathbb{E}_{q_1}[1(c_j=k)]X_{ij}\mathbb{E}_{q_2}[\alpha_{ik}]}{\sum_{i=1}^N\sum_{k=1}^K \mathbb{E}_{q_1}[1(c_j=k)]\mathbb{E}_{q_2}[\alpha_{ik}^2]}, \quad  j= 1,...,P,\\
             \hat{\sigma}_j^2 & = \frac{\sum_{i=1}^N\sum_{k=1}^K \mathbb{E}_{q_1}[1(c_j=k)]
    (X_{ij}^2+\hat{\lambda}_j^2\mathbb{E}_{q_2}[\alpha_{ik}^2]-2\hat{\lambda}_jX_{ij}\mathbb{E}_{q_2}[\alpha_{ik}])}{N},\quad  j= 1,...,P.
\end{align*}
\end{proposition}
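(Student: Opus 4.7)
The plan is to observe that $J_3(q_1,q_2,\Phi) = \mathbb{E}_{q_1}\mathbb{E}_{q_2}[L(\M{X},\V{c},\V{\alpha}|\Phi)]$ decomposes additively into four blocks, each depending on a disjoint subset of $\Phi = (\bome, \V{\pi}, \{\lambda_j\}, \{\sigma_j\})$, so each block can be maximized independently. Pulling the expectations inside the finite sums is legitimate because $q_1q_2$ factorizes and all sums/integrals are finite, so Fubini applies.

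First, extract the $\bome$-only part. Using the cyclic identity $\V{\alpha}_i^\top\bome^{-1}\V{\alpha}_i = \tr(\bome^{-1}\V{\alpha}_i\V{\alpha}_i^\top)$, the relevant objective reduces to
\begin{align*}
-\frac{N}{2}\log|\bome| - \frac{1}{2}\tr\!\Bigl(\bome^{-1}\sum_{i=1}^N\mathbb{E}_{q_2}[\V{\alpha}_i\V{\alpha}_i^\top]\Bigr),
\end{align*}
which is the standard zero-mean multivariate-normal log-likelihood with sufficient statistic $\sum_i\mathbb{E}_{q_2}[\V{\alpha}_i\V{\alpha}_i^\top]$. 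Applying the matrix identities $\partial\log|\bome|/\partial\bome = \bome^{-1}$ and $\partial\tr(\bome^{-1}\M{S})/\partial\bome = -\bome^{-1}\M{S}\bome^{-1}$ and setting the gradient to zero yields the stated $\hat{\bome}$.

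Next, for $\V{\pi}$ the only relevant term is $\sum_{j,k}\mathbb{E}_{q_1}[1(c_j=k)]\log\pi_k$ under the simplex constraint $\sum_k\pi_k=1$. A Lagrange-multiplier argument (equivalently, recognizing this as a categorical MLE with ``soft counts'' $\sum_j\mathbb{E}_{q_1}[1(c_j=k)]$) gives $\hat{\pi}_k$ after normalization. For each $j$, I next isolate the block in $(\lambda_j,\sigma_j^2)$. Because $\sum_k 1(c_j=k)=1$, the $\log\sigma_j^2$ terms collapse and the objective becomes
\begin{align*}
-\frac{N}{2}\log\sigma_j^2 \;-\; \frac{1}{2\sigma_j^2}\sum_{i=1}^N\sum_{k=1}^K\mathbb{E}_{q_1}[1(c_j=k)]\,\mathbb{E}_{q_2}[(X_{ij}-\lambda_j\alpha_{ik})^2].
\end{align*}
Since $\sigma_j^2$ merely rescales the quadratic, the optimal $\lambda_j$ does not depend on $\sigma_j^2$. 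Expanding $\mathbb{E}_{q_2}[(X_{ij}-\lambda_j\alpha_{ik})^2] = X_{ij}^2 - 2\lambda_jX_{ij}\mathbb{E}_{q_2}[\alpha_{ik}] + \lambda_j^2\mathbb{E}_{q_2}[\alpha_{ik}^2]$ turns $\partial/\partial\lambda_j=0$ into a linear equation whose solution is the stated $\hat{\lambda}_j$. Substituting $\hat{\lambda}_j$ and then solving $\partial/\partial\sigma_j^2 = 0$ produces $\hat{\sigma}_j^2$ in the claimed closed form.

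Since every block reduces to a standard Gaussian-MLE or categorical-MLE calculation, there is no real conceptual obstacle. The only piece requiring care is bookkeeping: pulling the expectations inside the sums to produce the correct ``soft-count'' weights $\mathbb{E}_{q_1}[1(c_j=k)]$, and linearizing the Gaussian quadratic via the first and second moments $\mathbb{E}_{q_2}[\alpha_{ik}]$ and $\mathbb{E}_{q_2}[\alpha_{ik}^2]$, so that both the $\lambda_j$ and $\sigma_j^2$ updates emerge in the closed forms stated in the proposition.
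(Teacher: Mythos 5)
Your proposal is correct and follows essentially the same route as the paper: the paper's proof simply writes $J_3$ as $\mathbb{E}_{q_1\times q_2}[L(\M{X},\V{c},\V{\alpha}|\Phi)]$ with the moments $\mathbb{E}_{q_1}[1(c_j=k)]$, $\mathbb{E}_{q_2}[\alpha_{ik}]$, $\mathbb{E}_{q_2}[\alpha_{ik}^2]$ pulled inside and then states that the estimators follow by straightforward calculation, which is exactly the block-by-block Gaussian/categorical MLE computation you carry out. Your write-up supplies the details the paper omits (the trace identity for $\hat{\M{\Omega}}$, the Lagrange step for $\hat{\V{\pi}}$, and profiling $\lambda_j$ before $\sigma_j^2$), all of which are correct.
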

\begin{proof}
Note that
\begin{align*}
    J_3(q_1,q_2, \Phi)&=\sum_{\V{c} \in [K]^P}\int_{\V{\alpha}\in \mathbb{R}^{N\times K}} q_1(\V{c})q_2( \V{\alpha})L(\M{X},\V{c},\V{\alpha}|\Phi)d\V{\alpha}\\
    & = \mathbb{E}_{q_1 \times q_2}(L(\M{X},\V{c},\V{\alpha}|\Phi)) \\
 & = \sum_{j=1}^P\sum_{k=1}^K \mathbb{E}_{q_1}[1(c_j=k)]\log{\pi_k} -\frac{N}{2}\log|\M{\Omega}|-\frac{1}{2}\sum_{i=1}^N\V{\alpha}_i^\top\M{\Omega}^{-1}\V{\alpha}_i \\
    & \quad +  \sum_{j=1}^P\sum_{i=1}^N\sum_{k=1}^K \mathbb{E}_{q_1}[1(c_j=k)]\bigg( -\frac{1}{2}\log\sigma_j^2-\frac{X_{ij}^2-2X_{ij}\lambda_j\mathbb{E}_{q_2}[\alpha_{ik}]+\lambda_j^2\mathbb{E}_{q_2}[\alpha_{ik}^2]}{2\sigma_j^2}\bigg).
\end{align*}
The result then follows by straightforward calculation.
\end{proof}

\section{Proofs in Section 3.1}
We start with a lemma on formula (6) and positivity of $\bome$.
\begin{lemma}\label{lemmaS1}
The covariance defined in model (1) can be written as formula (6). Moreover, It is necessary to require $\M{\Omega}$ to be positive semi-definite to make $\M{\Sigma}$ a well-defined positive definite covariance matrix for arbitrary $\V{\sigma}^2$.
\end{lemma}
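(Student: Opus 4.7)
The plan is to establish formula (6) by a direct covariance computation from model (1), and then prove necessity by contrapositive: if $\bome$ fails to be positive semi-definite, construct a test vector $\V{v}$ and take $\V{\sigma}^2$ small enough to push $\V{v}^\top \bSig \V{v}$ below zero.

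First I would rewrite model (1) in matrix form. Let $\M{Z}$ denote the $P \times K$ cluster-membership matrix with $Z_{jk} = 1(c_j = k)$ and put $\M{\Lambda} = \diag(\lambda_1,\ldots,\lambda_P)$. Under model (1), the $i$-th observation row satisfies $\M{X}_{i\cdot}^\top = \M{\Lambda}\M{Z}\V{\alpha}_i + \V{\varepsilon}_i$, with independent $\V{\alpha}_i \sim N(\V{0}, \bome)$ and $\V{\varepsilon}_i \sim N(\V{0}, \diag(\V{\sigma}^2))$. Propagating the covariance through this affine map yields
\begin{align*}
\bSig \;=\; \M{\Lambda}\,\M{Z}\,\bome\,\M{Z}^\top\M{\Lambda} + \diag(\V{\sigma}^2),
\end{align*}
which is precisely formula (6). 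Entrywise this reads $\Sigma_{jj'} = \lambda_j\lambda_{j'}\Omega_{c_j c_{j'}}$ for $j\neq j'$ and $\Sigma_{jj} = \lambda_j^2 \Omega_{c_j c_j} + \sigma_j^2$, as can also be verified by conditioning on $\V{\alpha}_i$ and using independence.

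For the necessity claim, suppose toward contradiction that $\bome$ has a negative eigenvalue, with unit eigenvector $\V{u}\in\mathbb{R}^K$ satisfying $\V{u}^\top\bome\V{u} = -\delta < 0$. Under the standing non-degeneracy of the model (each cluster nonempty and each $\lambda_j \neq 0$), the columns of $\M{\Lambda}\M{Z}$ are linearly independent, so the linear map $\V{v}\mapsto \M{Z}^\top\M{\Lambda}\V{v}$ from $\mathbb{R}^P$ to $\mathbb{R}^K$ is surjective and I can pick $\V{v}\in\mathbb{R}^P$ with $\M{Z}^\top\M{\Lambda}\V{v} = \V{u}$. Then
\begin{align*}
\V{v}^\top\bSig\V{v} \;=\; \V{u}^\top\bome\V{u} + \V{v}^\top\diag(\V{\sigma}^2)\V{v} \;=\; -\delta + \sum_{j=1}^P \sigma_j^2 v_j^2.
\end{align*}
Because the statement quantifies over arbitrary $\V{\sigma}^2$, I may take any $\V{\sigma}^2$ with $\sum_j \sigma_j^2 v_j^2 < \delta$ (e.g.\ all $\sigma_j^2 < \delta/(P\|\V{v}\|_\infty^2)$), which forces $\V{v}^\top\bSig\V{v} < 0$ and contradicts positive definiteness. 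Therefore $\bome \succeq \V{0}$ is necessary.

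The main obstacle is justifying surjectivity of $\V{v}\mapsto \M{Z}^\top\M{\Lambda}\V{v}$. This reduces to showing the $K\times P$ matrix $\M{Z}^\top\M{\Lambda}$ has full row rank $K$: the rows of $\M{Z}^\top$ have disjoint supports (one per cluster) and are nonzero as long as every cluster is nonempty, so they are linearly independent, and right-multiplication by the invertible diagonal $\M{\Lambda}$ preserves rank. Everything else is straightforward bookkeeping of a quadratic form.
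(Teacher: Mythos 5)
Your proposal is correct and follows essentially the same route as the paper's proof: both verify formula (6) by a direct (entrywise or matrix-form) covariance computation, and both prove necessity by pulling back a negative-eigenvalue direction of $\M{\Omega}$ through the surjective map $\V{v}\mapsto \M{L}^{\top}\diag(\V{\lambda})\V{v}$ and then shrinking $\V{\sigma}^2$ to make the quadratic form negative. Your added justification of surjectivity via the disjoint supports of the rows of the membership matrix is a detail the paper leaves implicit, but it is not a different argument.
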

\begin{proof}[Proof of Lemma \ref{lemmaS1}]
Recall formula \eqref{e6} gives $\bSig=\diag(\blam)\M{L}\bome\M{L}^{\top}\diag(\blam)+\diag(\bsig^2)$.
It is straightforward to check that the $(j,j')$-entry of $\bSig$ in formula (6) is equivalent to the equations given in model (1). 

Now we show the necessity of positive semi-definiteness of $\bome$ by contradiction. If $\bome$ has a negative eigenvalue $\zeta$ and $\M{v}$ is the corresponding eigenvector, then we have $\bome\M{v}=\zeta\M{v}$. Because $\M{L}^{\top}\diag(\blam)$ is a full rank matrix that represents a surjection from $\mathbb{R}^P$ to $\mathbb{R}^K$, there exists $\M{v}'\in\mathbb{R}^P$ such that $\M{v}=\M{L}^{\top}\diag(\blam)\M{v}'$. Therefore, we have
\begin{align*}
\M{v}'^{\top}\bSig\M{v}'=&\M{v}'^{\top}\diag(\blam)\M{L}\bome\M{L}^{\top}\diag(\blam)\M{v}'+\M{v}'^{\top}\diag(\bsig^2)\M{v}'\\
=&\M{v}^{\top}\bome\M{v}+\M{v}'^{\top}\diag(\bsig^2)\M{v}'\\
=&\zeta +\M{v}'^{\top}\diag(\bsig^2)\M{v}'.
\end{align*}
As $\zeta<0$, any tiny positive scalar matrix like $\diag(\bsig^2)=\zeta'\M{I}$ with $\zeta'<|\zeta|/\|\M{v}'\|^2$ will lead to a negative value of $\M{v}'^{\top}\bSig\M{v}'$. Since we require $\bSig$ to be positive definite covariance matrix, $\bome$ has to be positive semi-definite. 
\end{proof}
\begin{proof}[Proof of Theorem 1]
For the ``if'' part, it suffices to show
\[\diag(\blam)\M{L}\bome\M{L}^{\top}\diag(\blam)=\diag(\tilde{\blam})\tilde{\M{L}}\tilde{\bome}\tilde{\M{L}}^{\top}\diag(\tilde{\blam}).\]
Direct calculation shows
\begin{align*}
    &\diag(\tilde{\blam})\tilde{\M{L}}\tilde{\bome}\tilde{\M{L}}^{\top}\diag(\tilde{\blam})\\
   =&\diag(\tilde{\blam}) \M{L}\M{P}\M{P}^{\top}\diag({\V{d}})\bome\diag({\V{d}})\M{P} \M{P}^{\top}\M{L}^{\top}\diag(\tilde{\blam})\\     
   =&\diag(\tilde{\blam}) \M{L}\diag({\V{d}})\bome\diag({\V{d}})\M{L}^{\top}\diag(\tilde{\blam}).\\
\end{align*}
Therefore, the Lemma follows the fact 
\[\diag(\tilde{\blam}) \M{L}\diag({\V{d}})=\diag(\blam\circ (\M{L}\V{d}^{-1})) \M{L}\diag({\V{d}}) = \diag(\blam)\diag(\M{L}\V{d}^{-1})) \M{L}\diag({\V{d}})=\diag(\blam)\M{L}.\]

Now we show the ``only if'' part. If two systems $\{\blam,\bome, \M{L}, \bsig^2\}$ and $\{\tilde\blam,\tilde\bome,\tilde{\M{L}},\tilde{\bsig}^2\}$ satisfy condition 1 and define the same covariance, then 
\[\diag(\blam)\M{L}\bome\M{L}^{\top}\diag(\blam)+\diag(\bsig^2)=\diag(\tilde{\blam})\tilde{\M{L}}\tilde{\bome}\tilde{\M{L}}^{\top}\diag(\tilde{\blam})+\diag(\tilde\bsig^2).\] It leads to
\[\diag(\blam)\M{L}\bome\M{L}^{\top}\diag(\blam)-\diag(\tilde{\blam})\tilde{\M{L}}\tilde{\bome}\tilde{\M{L}}^{\top}\diag(\tilde{\blam})=\diag(\tilde\bsig^2)-\diag(\bsig^2),\]
which implies
\begin{equation}\label{temp6}
\M{L}\bome\M{L}^{\top} -\diag(\blam^{-1}\circ\tilde{\blam})\tilde{\M{L}}\tilde{\bome}\tilde{\M{L}}^{\top}\diag(\tilde{\blam}\circ\blam^{-1})=\diag(\blam^{-1})(\diag(\tilde\bsig^2)-\diag(\bsig^2))\diag(\blam^{-1}).    
\end{equation}


Let $\{\M{e}_j=(0,...,0,1,0,...,0)^{\top}\}_{j=1}^P$ be the standard coordinate basis of $\mathbb{R}^P$. If $j$ and $j'$ belong to the same group in the first system, i.e., $c_j=c_{j'}$, then $L_{jk}=L_{j'k}$ for all $k\in[K]$, which implies $\M{L}^{\top}\M{e}_j=\M{L}^{\top}\M{e}_{j'}$. Therefore, $\M{L}\bome\M{L}^{\top}(\M{e}_j-\M{e}_{j'})=0$.

Now pick $j$ and $j'$ with $c_j=c_{j'}$. Multiplying $\M{e}_j-\M{e}_{j'}$ on both sides of \eqref{temp6} gives
\[-\diag(\blam^{-1}\circ\tilde{\blam})\tilde{\M{L}}\tilde{\bome}\tilde{\M{L}}^{\top}\diag(\tilde{\blam}\circ\blam^{-1})(\M{e}_j-\M{e}_{j'})=\frac{\tilde{\sigma}^2_j-\sigma^2_j}{\lambda_j^2}\M{e}_j-\frac{\tilde{\sigma}^2_{j'}-\sigma^2_{j'}}{\lambda_{j'}^2}\M{e}_{j'},\]
which implies
\begin{equation}
\tilde{\M{L}}\tilde{\bome}\tilde{\M{L}}^{\top}\diag(\tilde{\blam}\circ\blam^{-1})(\M{e}_j-\M{e}_{j'})=-\frac{\tilde{\sigma}^2_j-\sigma^2_j}{\lambda_j\tilde{\lambda}_{j}}\M{e}_j+\frac{\tilde{\sigma}^2_{j'}-\sigma^2_{j'}}{\lambda_{j'}\tilde{\lambda}_{j'}}\M{e}_{j'}.    \label{temp7}
\end{equation}

Now we show the $K$-dimensional vector $\M{w}=\tilde{\bome}\tilde{\M{L}}^{\top}\diag(\tilde{\blam}\circ\blam^{-1})(\M{e}_j-\M{e}_{j'})$ must be $\M{0}$. If not, say, $w_k\ne0$, then $(\tilde{\M{L}}\M{w})_t=w_k\ne0$ for all $t$ with $\tilde L_{tk}=1$. By Condition 1, there are at least 3 features in each community, which means there are at least 3 nonzero entries in vector $\tilde{\M{L}}\M{w}$. This contradicts with \eqref{temp7}, where the right hand side contains at most 2 nonzero entries. Thus, $\M{w}=\M{0}$ and $\tilde{\sigma}^2_j=\sigma^2_j$.

Since $\tilde{\bome}>0$, $\M{w}=\M{0}$ implies $\tilde{\M{L}}^{\top}\diag(\tilde{\blam}\circ\blam^{-1})(\M{e}_j-\M{e}_{j'})=0$. Therefore, $\tilde{c}_j=\tilde{c}_{j'}$, and $\tilde{\lambda}_j\lambda_j^{-1}=\tilde{\lambda}_{j'}\lambda_{j'}^{-1}$. 

In summary, we have shown that (1) $c_j=c_{j'}$ implies $\tilde{c}_j=\tilde{c}_{j'}$; (2) $\tilde{\sigma}^2_j=\sigma^2_j$; (3) $\tilde{\lambda}_j\lambda_j^{-1}=\tilde{\lambda}_{j'}\lambda_{j'}^{-1}$ whenever $c_j=c_{j'}$. In particular, (1) indicates that any two parameter systems share the same number of communities and the same membership (up to a permutation) for all the features. By (3), we can define a vector $\V{d}$ with $d_k=\lambda_j\tilde{\lambda}_j^{-1}$ when $c_j=k$.  

\end{proof}

\begin{proof}[Proof of Proposition 1]
By Theorem 1 and our convention ignoring the label permutation, the two systems $\{\blam,\bome, \M{L}, \bsig^2\}$ and $\{\tilde\blam,\tilde\bome,\tilde{\M{L}},\tilde{\bsig}^2\}$ satisfy $\tilde\bome\ =\diag({\V{d}'})\bome\diag({\V{d}'})$, $\tilde{\blam} =\blam\circ (\M{L}\V{d}'^{-1})$, $\M{L}=\tilde{\M{L}}$, $\bsig^2=\tilde\bsig^2$ for some $\V{d}'$. 

By definition of the canonical parameters, the canonical system obtained from $\{\blam,\bome, \M{L}, \bsig^2\}$, denoted by $\{\blam^*,\bome^*, \M{L}, \bsig^2\}$, is up to a multiplier vector $\V{d}=(\bome\M{L}^{\top}\diag(\blam)\M{1}_P/P)^{-1}$. That is, 
  $\blam^*= \blam\circ(\M{L}\V{d}^{-1})$, $\bome^*=\diag({\V{d}})\bome\diag({\V{d}})$. 
  
Similarly,  the canonical system $\{\tilde\blam^*,\tilde\bome^*, \tilde{\M{L}}, \tilde\bsig^2\}$ obtained from $\{\tilde\blam,\tilde\bome, \tilde{\M{L}}, \tilde\bsig^2\}$ satisfies $\tilde\blam^*= \tilde\blam\circ(\tilde{\M{L}}\tilde{\V{d}}^{-1})$, $\tilde\bome^*=\diag({\tilde{\V{d}}})\tilde\bome\diag(\tilde{\V{d}})$, where  $\tilde{\V{d}}=(\tilde\bome\tilde{\M{L}}^{\top}\diag(\tilde\blam)\M{1}_P/P)^{-1}$. 

Now we show $\tilde\blam^*=\blam^*$ and $\tilde\bome^*=\bome^*$. It suffices to show $\V{d}=\V{d}'\circ\tilde{\V{d}}$. Direct calculation shows
\begin{align*}
   \tilde{\V{d}}^{-1} &=\tilde\bome\tilde{\M{L}}^{\top}\diag(\tilde\blam)\M{1}_P/P\\
   &=\diag({\V{d}'})\bome\diag({\V{d}'}) \M{L}^{\top}\diag(\blam\circ (\M{L}\V{d}'^{-1}))\M{1}_P/P\\
   &=\V{d}'\circ\bome\diag({\V{d}'}) \M{L}^{\top}\diag(\M{L}\V{d}'^{-1}))\diag(\blam) \M{1}_P/P\\
    &=\V{d}'\circ\bome\M{L}^{\top}\diag(\blam)\M{1}_P/P\\
     &=\V{d}'\circ\V{d}^{-1}.
\end{align*}
That is, $\V{d}=\V{d}'\circ\tilde{\V{d}}$. 
\end{proof}

\begin{proof}[Proof of Proposition 2]
The conclusion can be verified directly.
\end{proof}

\section{Proofs in Section 3.2}
\begin{proof}[Proof of Proposition 3]
Without loss of generality, we assume $0<\epsilon<1$. Given the model assumption that $\text{Cov}(X_{j}, X_{j'}|\V{c}) = \lambda_j\lambda_{j'}\omega_{c_j c_{j'}} +  \sigma_j^2 1(j=j')$, we have
\begin{align*}
&   \mathbb{P}(|\hat{\lambda}_j - \lambda_j^*|\geq \epsilon| \V{c}^*)\\
= \,\, & \mathbb{P}(|\hat{\lambda}_j - \mathbb{E}(\hat{\lambda}_j) + \mathbb{E}(\hat{\lambda}_j) - \lambda_j^*|\geq \epsilon| \V{c}^*)\\
\leq \,\, & \mathbb{P}(|\hat{\lambda}_j - \mathbb{E}(\hat{\lambda}_j)|\geq \epsilon/2| \V{c}^*) + \mathbb{P}(| \mathbb{E}(\hat{\lambda}_j) - \lambda_j^*|\geq \epsilon/2|\V{c}^*)\\
= \,\, & \mathbb{P}\left(\left|\sum_{j'=1}^P\frac{\sum_{i=1}^NX_{ij}X_{ij'}}{PN} - \sum_{j'=1}^P \frac{\lambda_j\lambda_{j'}\omega_{c_jc_{j'}}}{P} - \frac{\sigma_j^{*2}}{P}\right|\geq \epsilon/2 \middle| \V{c}^*\right) + \mathbb{P}\left(\left| \frac{\sigma_j^{*2}}{P}\right|\geq \epsilon/2 \middle | \V{c}^*\right)\\
\leq \,\, & \sum_{j'=1}^P\mathbb{P}\left(\left|\frac{\sum_{i=1}^NX_{ij}X_{ij'}}{N} - \lambda_j\lambda_{j'}\omega_{c_jc_{j'}} - \sigma_j^{*2}1(j=j')\right|\geq \epsilon/2 \middle | \V{c}^*\right) + o(1) \\ 
\leq \,\, & PC_1\exp(-C_2N\epsilon^2) + o(1),
\end{align*}
where $C_1$ and $C_2$ are constants. The last inequality is due to the tail bound of the product of two sub-gaussian variables being sub-exponential, and can be obtained by Lemma B.4 in \cite{hao2014interaction}. Therefore the conclusion holds as $\log(P)/N = o(1)$.

\end{proof}

\begin{proof}[Proof of Proposition 4]
The difference between $\bar{J}_{\textnormal{core}}(\V{I}^{c}, \V{\alpha}^* )$ and $\bar{J}_{\textnormal{core}}(\bqc, \V{\mu} )$ is:
\begin{align*}
&  \bar{J}_{\textnormal{core}}(\V{I}^{c}, \V{\alpha}^* )-  \bar{J}_{\textnormal{core}}(\bqc, \V{\mu} ) \\
& =  \sum_{i=1}^N\sum_{j=1}^P\sum_{k=1}^K\left(-\frac{1}{2}\lambda_j^{*2}I_{jk}^c \alpha_{ik}^{*2}\right)  + \sum_{i=1}^N\sum_{j=1}^P\sum_{k=1}^K I_{jk}^c\lambda_j^{*}\lambda_j^{*}\alpha^*_{ic_j^*}\alpha^*_{ik}  \\
& \quad - \sum_{i=1}^N\sum_{j=1}^P\sum_{k=1}^K\left(-\frac{1}{2}\lambda_j^{*2}q_{jk}^c \mu_{ik}^2\right)  - \sum_{i=1}^N\sum_{j=1}^P\sum_{k=1}^K q_{jk}^c\lambda_j^{*}\lambda_j^{*}\alpha^*_{ic_j^*}\mu_{ik} \\
& = \sum_{i=1}^N\sum_{j=1}^P \frac{1}{2} \lambda_j^{*2} \left [-\sum_{k=1}^K I_{jk}^c \alpha_{ik}^{*2} +2\sum_{k=1}^K I_{jk}^c\alpha^*_{ic_j^*}\alpha^*_{ik}+\sum_{k=1}^K q_{jk}^c \mu_{ik}^2 -2\sum_{k=1}^K q_{jk}^c\alpha^*_{ic_j^*}\mu_{ik} \right ] \\
& = \sum_{i=1}^N\sum_{j=1}^P \frac{1}{2} \lambda_j^{*2} \left [-\sum_{k=1}^K I_{jk}^c \alpha_{ik}^{*2} +2\sum_{k=1}^K I_{jk}^c\left (\sum_{l=1}^K I_{jl}^c \alpha_{il}^* \right )\alpha^*_{ik}-2\sum_{k=1}^K q_{jk}^c \left (\sum_{l=1}^K I_{jl}^c \alpha_{il}^* \right )\mu_{ik}+\left ( \sum_{k=1}^K q_{jk}^c \mu_{ik} \right )^2 \right.  \\
& \quad \quad \quad \quad \quad \quad \quad \left . +\sum_{k=1}^K q_{jk}^c \mu_{ik}^2 -\left ( \sum_{k=1}^K q_{jk}^c \mu_{ik} \right )^2 \right ]  \\
& = \sum_{i=1}^N\sum_{j=1}^P \frac{1}{2} \lambda_j^{*2} \left [\sum_{k=1}^K I_{jk}^c\left (\sum_{l=1}^K I_{jl}^c \alpha_{il}^* \right )\alpha^*_{ik}-2\sum_{k=1}^K q_{jk}^c \left (\sum_{l=1}^K I_{jl}^c \alpha_{il}^* \right )\mu_{ik}+\left ( \sum_{k=1}^K q_{jk}^c \mu_{ik} \right )^2 \right.  \\
& \quad \quad \quad \quad \quad \quad \quad \left . +\sum_{k=1}^K q_{jk}^c \mu_{ik}^2 -\left ( \sum_{k=1}^K q_{jk}^c \mu_{ik} \right )^2 \right ] \\
& =\sum_{i=1}^N\sum_{j=1}^P \frac{1}{2} \lambda_j^{*2} \left [ \left ( \sum_{k=1}^K I_{jk}^c \alpha_{ik}^*-\sum_{k=1}^K q_{jk}^c \mu_{ik} \right )^2+\sum_{k=1}^K q_{jk}^c \mu_{ik}^2 -\left ( \sum_{k=1}^K q_{jk}^c \mu_{ik} \right )^2 \right ] \tag{7} \label{eq.s1}
\end{align*}
Equation (\ref{eq.s1}) is equal to zero if and only if the followings conditions hold:
\begin{itemize}
    \item[(C-1)] $\sum_{k=1}^KI_{jk}^c\alpha_{ik}^* = \sum_{k=1}^Kq_{jk}^c\mu_{ik}$, for $1\leq i \leq N, 1 \leq j \leq P$.
    \item[(C-2)]
    $(\sum_{k=1}^Kq_{jk}^c\mu_{ik}^2) - (\sum_{k=1}^Kq_{jk}^c\mu_{ik})^2 = 0$, for  $1\leq i \leq N, 1 \leq j \leq P$.
\end{itemize}
By Jensen's inequality, the equality in (C-2) holds only if either row $q_{j\cdot}^c$ has only $\{0,1\}$ as entries with row summation equal to one, or $\mu_{ik}$'s have constant values across $k$ at which $q_{jk}^c$ are non-zero for all $i$.

Recall that $\bic, \balp^*, \bqc, \V{\mu}$ denote  $[I_{jk}^c]_{P\times K}$, $[\alpha_{ik}^*]_{N\times K}$, $[q_{jk}^c]_{P\times K}$, and $[\mu_{ik}]_{N\times K}$, respectively. Then (C-1) can be written as $\bic\balp^{*T} = \bqc \V{\mu}^\top$, with condition that $\bic\balp^{*T}$ is of full rank $K$, i.e. $\bic, \balp^*$ are of full rank $K$ respectively, which guarantees that three is no missing class and no collinearity among columns of $\balp^*$. Consequently, $\bqc\V{\mu}^\top$ is also of full rank $K$ with each component matrix being full rank $K$.

Observing the fact that $\V{\mu}$ is of full rank $K$, we use the proof of contradiction to show that $\bqc$ is a $P\times K$ matrix with only $\{0,1\}$ entries, with each row summing to one.

Suppose for the row $j$ of $\bqc$, there are two non-zero entries, i.e. $q_{jk}^c\neq 0, q_{jk'}^c\neq 0\,\, (k\neq k')$. Then by condition (C-2),  $\mu_{ik} = \mu_{ik'}$ for all $i$,  which implies that there are duplicate columns in $\V{\mu}$, contradicting to the statement  that $\V{\mu}$ is of full rank $K$. Therefore $\bqc$ is a $P\times K$ matrix with only $\{0,1\}$ entries and each row sums to one. It is straightforward to show that $\bqc = \bic $ and $\V{\mu} = \balp^*$ is the solution to equation (\ref{eq.s1}) equal to zero. 

Finally, we prove that the solution is unique up to the permutation of labels. 
Let $\alpha_{\cdot k}^*, \mu_{\cdot k}$ denote the column $k$ of the corresponding matrix. Then the $j$-th row of matrix $\bic \balp^{*T}$ is
\begin{align*}
    [\bic \balp^{*T}]_{j\cdot} & = I^c_{j1}(\alpha_{\cdot 1}^*)^\top + \cdots+I^c_{jK}(\alpha_{\cdot K}^*)^\top
\end{align*}
together with the condition that $1 = \sum_{k=1}^KI_{jk}^c$, it
implies that the $j$-th row of matrix $\bic \balp^{*T}$ is the $k$-th column of $\balp^*$ where $I_{jk}^c = 1$. Therefore, the rows of $\bic \balp^{*T}$ is made of the columns of $\balp^*$. The number of times that the $k$-th column of $\balp^*$ appearing in the rows of $\bic \balp^{*T}$ depends on the number of objects that belong to the group $k$. Similarly, since $\bqc$ only has $\{0,1\}$ entries and each row sums to one, the rows of $\bqc\V{\mu}^\top$ consist of columns of $\V{\mu}$. Consequently, condition  $\bic\balp^\top = \bqc\V{\mu}^\top$ holds if and only if that after a column permutation $\V{P}_{K\times K}$, $\V{\mu}\V{P} = \balp^*$ and  $\bqc\V{P} = \bic$. Therefore, $\bqc = \bic $ and $\V{\mu} = \balp^*$ is the unique solution up to the permutation of labels. 
\end{proof}

\begin{proof}[Proof of Proposition 5]
Let $C_l=\{j: c_j^*=l \}$.
From equation \eqref{eq.s1},
\begin{align*}
&  \bar{J}_{\textnormal{core}}(\V{I}^{c}, \V{\alpha}^* )-  \bar{J}_{\textnormal{core}}(\bqc, \V{\mu} ) \\
& =\sum_{i=1}^N\sum_{j=1}^P \frac{1}{2} \lambda_j^{*2} \left [ \left ( \sum_{k=1}^K I_{jk}^c \alpha_{ik}^*-\sum_{k=1}^K q_{jk}^c \mu_{ik} \right )^2+\sum_{k=1}^K q_{jk}^c \mu_{ik}^2 -\left ( \sum_{k=1}^K q_{jk}^c \mu_{ik} \right )^2 \right ] \\
& \geq \min_j\left\{\frac{1}{2}\lambda_j^{*^2}\right\}\sum_{i=1}^N\sum_{j=1}^P \left [ \left ( \sum_{k=1}^K I_{jk}^c \alpha_{ik}^*-\sum_{k=1}^K q_{jk}^c \mu_{ik} \right )^2+\sum_{k=1}^K q_{jk}^c \mu_{ik}^2 -\left ( \sum_{k=1}^K q_{jk}^c \mu_{ik} \right )^2 \right ] \\
& = \frac{1}{2} \gamma_1^2 \sum_{i=1}^N\sum_{l=1}^K\sum_{j\in C_l}\bigg[\left(\alpha_{il}^* - \sum_{k=1}^Kq_{jk}^c \mu_{ik}\right)^2 +\left(\sum_{k=1}^Kq_{jk}^c\mu_{ik}^2\right) - \left(\sum_{k=1}^Kq_{jk}^c\mu_{ik}\right)^2  \bigg]\\
& = \frac{1}{2} \gamma_1^2 \sum_{i=1}^N\sum_{l=1}^K\sum_{j\in C_l}\bigg[\sum_{k=1}^Kq_{jk}^c\left(\alpha_{il}^* - \mu_{ik}\right)^2 \bigg]\\
& = \frac{1}{2} \gamma_1^2 \sum_{l=1}^K\sum_{j\in C_l}\sum_{k=1}^Kq_{jk}^c \| \alpha_{\cdot l}^* - \mu_{\cdot k}\|^2.
\tag{8} \label{eq.s2}
\end{align*}
Let 
$d_{\min} =  \min_{l\neq l'}\|\alpha_{\cdot l}^*-\alpha_{\cdot l'}^*\|$. Note that for any $\mu_{\cdot k}$ there exists at most one $\alpha^*_{\cdot l}$ such that $\|\alpha_{\cdot l}^*-\mu_{\cdot k}\|< d_{\min}/2$. 

We consider two possible cases:
\begin{itemize}
\item [Case 1:] For each $\alpha^*_{\cdot l}$, there exists one and only one $\mu_{\cdot k}$ such that $\|\alpha_{\cdot l}^*-\mu_{\cdot k}\|< d_{\min}/2$. 
\item [Case 2:] There exists some $\alpha^*_{\cdot l}$ such that no $\mu_{\cdot k}$ is within its $d_{\min}/2$-radius.
\end{itemize}
Technically, another possibility is there exists some $\alpha^*_{\cdot l}$ such that at least two  $\mu_{\cdot k}$ are within its $d_{\min}/2$-radius.
But it has been included in Case 2: there must be some $\alpha^*_{\cdot l}$ which does not have any $\mu_{\cdot k}$ nearby since one $\mu_{\cdot k}$ cannot be within  the  $d_{\min}/2$-radius of two $\alpha_{\cdot l}^*$.
We now prove in either Case 1 or Case 2, $\sum_{l=1}^K\sum_{j\in C_l}\sum_{k=1}^Kq_{jk}^c\norm{\alpha_{\cdot l}^* - \mu_{\cdot k}}^2$ can be bounded from below by the misclassification rate up to constants.

The one-to-one correspondence in Case 1 induces a permutation $s$ on $\{1,...,K\}$.  Case 1 implies $\norm{\alpha_{\cdot l}^* - \mu_{\cdot k}}_2^2 \geq d_{\min}^2/4$ for $l \neq s(k)$.
\begin{align*}
& \sum_{l=1}^K\sum_{j\in C_l}\sum_{k=1}^Kq_{jk}^c\norm{\alpha_{\cdot l}^* - \mu_{\cdot k}}^2 \\
\geq \,\, & \sum_{k=1}^K \sum_{l\neq s(k) }\sum_{j\in C_l}q_{jk}^c\norm{\alpha_{\cdot l}^* - \mu_{\cdot k}}^2 \\
\geq \,\, & \frac{d_{\min}^2}{4} \sum_{k=1}^K \sum_{l\neq s(k) }\sum_{j\in C_l} q_{jk}^c \\
= \,\, & \frac{d_{\min}^2}{4} P \left (1- \frac{1}{P}  \sum_{k=1}^K \sum_{j=1}^P q_{jk}^c I^c_{j,s(k)}  \right ).
\end{align*}
In Case 2, let $s$ be any permutation on $\{1,...,K\}$, and $l$ be the class label such that $\|\alpha_{\cdot l}^*-\mu_{\cdot k}\|^2\geq d_{\min}^2/4$ for all $k$. 
\begin{align*}
& \sum_{l=1}^K\sum_{j\in C_l}\sum_{k=1}^Kq_{jk}^c\norm{\alpha_{\cdot l}^* - \mu_{\cdot k}}^2 \\
\geq \,\, & \sum_{j\in C_l}\sum_{k=1}^Kq_{jk}^c\norm{\alpha_{\cdot l}^* - \mu_{\cdot k}}^2 \\
\geq \,\, & \frac{d_{\min}^2}{4} \sum_{j\in C_l}\sum_{k=1}^Kq_{jk}^c  \\ 
\geq \,\, & \frac{d_{\min}^2}{4} \pi_{\min} \frac{1}{K} P \left (1- \frac{1}{P}  \sum_{k=1}^K \sum_{j=1}^P q_{jk}^c I^c_{j,s(k)}  \right ).
\end{align*}

Next we give a lower bound for $d_{\min}$. According to the model assumption that $\balp_i^*\sim N(\bf{0}, \bome^*)$, we have
\begin{align*}
   & \alpha_{ik}^* -\alpha_{il}^*\sim N(0, \omega_{kk}^*+\omega_{ll}^*-2\omega_{kl}^*),\\
   & \sum_{i=1}^N(\alpha_{ik}^* -\alpha_{il}^*)^2 \sim  (\omega_{kk}^*+\omega_{ll}^*-2\omega_{kl}^*)\chi_N^2,
\end{align*}
where $\chi_N^2$ is a chi-squared distribution with $N$ degrees of freedom. Let $A_{kl} = \sum_{i=1}^N(\alpha_{ik}^* -\alpha_{il}^*)^2$ and $B_3 = \min_{1\leq k < l \leq K}(\omega_{kk}^*+\omega_{ll}^*-2\omega_{kl}^*)$. By the sub-exponential tail bounds of chi-squared random variable,
\begin{align*}
    & \mathbb{P}\left( \left .A_{kl} < \frac{N B_3}{2} \right | \V{c}^* \right)\leq
    \mathbb{P}\left( \left . A_{kl} \leq \frac{N(\omega_{kk}^*+\omega_{ll}^*-2\omega_{kl}^*)}{2} \right | \V{c}^*\right) \leq \exp \left (-\frac{N}{16} \right ).
\end{align*}
The last inequality follows the tail bound $P(\chi^2_N-N<N/2)<\exp(-N/16)$. See Theorem 2 in \cite{ghosh2021exponential}. 
Then
\begin{align*}
   & \mathbb{P}\left(\left . d_{\min}^2 > \frac{NB_3}{2}\right | \V{c}^*\right) \\
    = \,\, &  \mathbb{P}\left(\left . \min_{1\leq k<l\leq K}A_{kl} > \frac{NB_3}{2}\right | \V{c}^*\right)\\
   \geq \,\, &  1- \bigcup_{1\leq k<l\leq K}\mathbb{P}\left( \left. A_{kl} < \frac{NB_3}{2}\right | \V{c}^*\right)\\
    \geq \,\, & 1- \frac{K(K-1)}{2}\exp\left (-\frac{N}{16} \right).
\end{align*}
Therefore as $N,P\rightarrow \infty$, 
\begin{align*} 
& \mathbb{P}\left( \left . \bar{J}_{\textnormal{core}}(\bic, \V{\alpha}^*)- \bar{J}_{\textnormal{core}}(\bqc, \V{\mu} )  \geq c_3 NP \min_{\tilde{\V{I}}^c \in \mathcal{E}_{\V{I}^c} } \left (1-\textnormal{Tr} (\M{R}(\bqc,\tilde{\V{I}}^c))\right )  \right | \V{c}^* \right) \rightarrow 1,
\end{align*}
where $c_3$ is a constant. 

By the assumptions on $\mathcal{C}_{\V{\pi}}, \mathcal{C}_{\M{\Omega}},\mathcal{C}_{\V{c}}$, and $\mathcal{C}_{\V{\alpha}}$, 
\begin{align*}
& \sum_{j=1}^P\left(\sum_{k=1}^Kq_{jk}^c\log
   \pi_k\right) = O(P), \\
&  \sum_{i=1}^N\left(-\frac{1}{2}\log|\bome| - \frac{1}{2}\tr{\left(\mathbb{E}_{q_2}(\alpha_i\alpha_i^\top)\bome^{-1}\right)}\right)=O(N), \\
&  - \sum_{j=1}^P\sum_{k=1}^Kq_{jk}^c\log q_{jk}^c = O(P).
\end{align*}
Furthermore,
\begin{align*}
& - \sum_{i=1}^N\int_{\V{\alpha}_i\in \mathbb{R}^{ K}} q_{2i}( \V{\alpha}_i) \log (q_{2i}( \V{\alpha}_i)) d \V{\alpha}_i \\
= \,\, &  \frac{1}{2} \sum_{i=1}^N \log |\M{V}_i|+\frac{NK}{2}(1+\ln(2\pi)) \\
 \leq \,\,  & \frac{1}{2} \sum_{i=1}^N \sum_{k=1}^K \log (v_{ik}^2) + \frac{NK}{2}(1+\ln(2\pi)).
\end{align*}
The inequality holds because the determinant of a correlation matrix is at most 1. Finally,
$- (1/2)\sum_{j=1}^P\sum_{k=1}^K\lambda_j^{*2}q_{jk}^c v^2_{ik}+(1/2)  \sum_{k=1}^K \log (v^2_{ik})$
is maximized at
$v^2_{ik} = \min \left ( \frac{1}{\sum_{j=1}^P\lambda_j^{*2} q_{jk}^c}, B_2 \right )$.
Therefore, 
\begin{align*}
& \quad - \frac{1}{2}\sum_{i=1}^N\sum_{j=1}^P\sum_{k=1}^K\lambda_j^{*2}q_{jk}^c v^2_{ik}+\frac{1}{2} \sum_{i=1}^N \sum_{k=1}^K \log (v^2_{ik})=O(N \log P).
\end{align*}
Putting all terms together, we have proved
\begin{align*} 
& \mathbb{P}\left( \left . \bar{J}_{\textnormal{core}}(\bic, \V{\alpha}^*)+c_1 P+c_2 N\log P \right. \right.\\ 
& \quad \quad \left . \left .-  \bar{J}(q_1(\V{c}), q_2(\V{\alpha}),\V{\pi},\M{\Omega}) \geq c_3 NP \min_{\tilde{\V{I}}^c \in \mathcal{E}_{\V{I}^c} } \left (1-\textnormal{Tr} (\M{R}(\bqc,\tilde{\V{I}}^c))\right ) \right | \V{c}^* \right) \rightarrow 1,
\end{align*}
where $c_1$, $c_2$, and $c_3$ are constants.
\end{proof}

\begin{remark}
The model assumption of HBCM is related to classical cluster analysis based on cluster centroids, such as in k-means. In fact, a key step in the proof of Proposition 5 was analyzing k-means on the true label variables $\V{\alpha}^*$.  However, a direct application of k-means to the data matrix $\M{X}$ does not work due to the variation in $\V{\lambda}$ and $\V{\sigma}^2$.
\end{remark}

\begin{proof}[Proof of Proposition 6]
Let 
\begin{align*}
\hat{J}_1(\bqc, \V{\mu}) & =\sum_{i=1}^N\sum_{j=1}^P\sum_{k=1}^K\left(-\frac{1}{2}\hat{\lambda}_j^2q_{jk}^c \mu_{ik}^2 \right), \\
\bar{J}_1(\bqc, \V{\mu}) & =\sum_{i=1}^N\sum_{j=1}^P\sum_{k=1}^K\left(-\frac{1}{2}\lambda^{*2}_jq_{jk}^c \mu_{ik}^2 \right), \\
\hat{J}_2(\bqc,\V{\mu}) & = \sum_{i=1}^N\sum_{j=1}^P\sum_{k=1}^Kq_{jk}^c\hat{\lambda}_jX_{ij}\mu_{ik}, \\
\bar{J}_2(\bqc,\V{\mu}) & =\sum_{i=1}^N\sum_{j=1}^P\sum_{k=1}^K q_{jk}^c\lambda_j^{*}\mathbb{E}(X_{ij}|\V{\alpha}^*,\V{c}^*)\mu_{ik}. 
\end{align*}
We first prove the convergence of $\hat{J}_1(\bqc, \V{\mu})$. Since
\begin{align*}
    &\left|\hat{J}_1(\bqc, \V{\mu}) -\bar{J}_1(\bqc, \V{\mu})\right|  = \left |\sum_{i=1}^N\sum_{j=1}^P\sum_{k=1}^K q_{jk}^c\mu_{ik}^2\left(\hat{\lambda}_j^2 -\lambda_j^{*2}\right) \right|\leq NKB_1^2\sum_{j=1}^P|\hat{\lambda}_j -\lambda_j^{*}| |\hat{\lambda}_j +\lambda_j^{*}|,
\end{align*}
by Proposition 3,
\begin{align*}
    \mathbb{P}\left( \max_{q_1(\V{c})\in \mathcal{C}_{\V{c}}, q_2(\V{\alpha}) \in \mathcal{C}_{\V{\alpha}}}  \left|\hat{J}_{1}(\bqc, \V{\mu}) - \bar{J}_{1}(\bqc, \V{\mu}) \right|\geq NP\epsilon  \middle| \V{c}^* \right) \rightarrow 0.
\end{align*}
Next we prove the convergence of $\hat{J}_2(\bqc, \V{\mu})$. Write
\begin{align*}
    &\left|\hat{J}_2(\bqc, \V{\mu}) -\bar{J}_2(\bqc, \V{\mu})\right| \\
     = \,\, & \left| \sum_{i=1}^N\sum_{j=1}^P\sum_{k=1}^K\left(\hat{\lambda}_jq_{jk}^c\mu_{ik}X_{ij}-\lambda_j^{*}q_{jk}^c\mu_{ik}\mathbb{E}(X_{ij}|\balp^*,\V{c}^*)\right) \right| \\
   \leq \,\, &  \sum_{k=1}^K \left| \sum_{i=1}^N\sum_{j=1}^P\left(\hat{\lambda}_jX_{ij} -\lambda_j^{*}\mathbb{E}(X_{ij}|\balp^*,\V{c}^*) \right)q_{jk}^c\mu_{ik} \right|\\
    = \,\, &  \sum_{k=1}^K \bigg| \sum_{i=1}^N\sum_{j=1}^P\bigg(\lambda_j^*(X_{ij} - \mathbb{E}(X_{ij}|\balp^*,\V{c}^*)) +(\hat{\lambda}_j-\lambda_j^*)(X_{ij} - \mathbb{E}(X_{ij}|\balp^*,\V{c}^*)) \\
    & + (\hat{\lambda}_j-\lambda_j^*) \mathbb{E}(X_{ij}|\balp^*,\V{c}^*) \bigg)q_{jk}^c\mu_{ik} \bigg|\\
     \leq \,\, &  \underbrace{\sum_{k=1}^K \left| \sum_{i=1}^N\sum_{j=1}^P\left(\lambda_j^*(X_{ij} - \mathbb{E}(X_{ij}|\balp^*,\V{c}^*))\right)q_{jk}^c\mu_{ik} \right|}_{(a)} +\\
    &  \underbrace{\sum_{k=1}^K \left| \sum_{i=1}^N\sum_{j=1}^P\left((\hat{\lambda}_j-\lambda_j^*)(X_{ij} - \mathbb{E}(X_{ij}|\balp^*,\V{c}^*))\right)q_{jk}^c\mu_{ik} \right|}_{(b)} + \\
    &  \underbrace{\sum_{k=1}^K \left| \sum_{i=1}^N\sum_{j=1}^P\left((\hat{\lambda}_j-\lambda_j^*) \mathbb{E}(X_{ij}|\balp^*,\V{c}^*) \right)q_{jk}^c\mu_{ik} \right|}_{(c)}.
\end{align*}
We first prove the following statement that
\begin{align*}
 & \underset{q_1\in \mathcal{C}_{\V{c}}, q_2 \in \mathcal{C}_{\V{\alpha}}}{\max} \left| \sum_{i=1}^N\sum_{j=1}^P\left(X_{ij} -\mathbb{E}(X_{ij}|\balp^*,\V{c}^*) \right)q_{jk}^c\mu_{ik}  \right|\\
   = \,\, & \underset{q^c_{jk}\in \{0,1\}, \mu_{ik} \in \{-B_1, B_1\}}{\max} \left| \sum_{i=1}^N\sum_{j=1}^P\left(X_{ij} -\mathbb{E}(X_{ij}|\balp^*,\V{c}^*) \right)q_{jk}^c\mu_{ik} \right|.
\end{align*}
Let $f(\V{u},\V{v}) = \sum_{i=1}^n\sum_{j=1}^p(X_{ij}- E_{ij})u_iv_j$, where $E_{ij}=\mathbb{E}(X_{ij}|\balp^*,\V{c}^*), u_i\in [0,1], v_j \in [-B_1, B_1]$. If the maximum occurs at $(\V{u},\V{v})$ with $v_1\in(-B_1, B_1)$, consider $\V{v}^{'},\V{v}^{''}$ with $v_1^{'}=-B_1,v_1^{''}=B_1$ and the other entries are identical to that of $\V{v}$. Then we can write $f(\V{u},\V{v}) =\frac{B_1-v_1}{2B_1}f(\V{u},\V{v}^{'})+\frac{v_1+B_1}{2B_1}f(\V{u},\V{v}^{''})$. Since $f(\V{u},\V{v})$ is a linear function of $v_1$, the value of $f(\V{u},\V{v})$ is between $f(\V{u},\V{v}^{'})$ and $f(\V{u},\V{v}^{''})$, which implies that both $(\V{u},\V{v}^{'})$ and $(\V{u},\V{v}^{''})$ are maximizers and we take either $(\V{u},\V{v}^{'})$ or $(\V{u},\V{v}^{''})$ instead of $(\V{u},\V{v})$. If there are other entries of $\V{v}$ strictly between $(-B_1,B_1)$, we can use the same argument until we find a maximizer with all entries equal to $-B_1$ or $B_1$. The same argument applies to $\V{u}$ being an maximizer with all entries equal to 0 or 1. Next we provide the upper bounds of the three components (a), (b) and (c).

The first term (a) satisfies
\begin{align*}
    & \mathbb{P}\left( \left. \max_{q_1\in \mathcal{C}_{\V{c}}, q_2 \in \mathcal{C}_{\V{\alpha}}}
    \sum_{k=1}^K \left| \sum_{i=1}^N\sum_{j=1}^P\left(\lambda_j^*X_{ij} -\lambda_j^*\mathbb{E}(X_{ij}|\balp^*,\V{c}^*) \right)q_{jk}^c\mu_{ik}  \right|\geq NP\epsilon \right |\V{c}^* \right)\\
    = \,\, & \mathbb{P}\left ( \left. \max_{q^c_{jk}\in \{0,1\}, \mu_{ik} \in \{-B_1, B_1\}}
    \sum_{k=1}^K \left| \sum_{i=1}^N\sum_{j=1}^P\left(\lambda_j^*X_{ij} -\lambda_j^*\mathbb{E}(X_{ij}|\balp^*,\V{c}^*) \right)q_{jk}^c\mu_{ik}  \right|\geq NP\epsilon \right |\V{c}^* \right)\\
    \leq \,\, & \sum_{k=1}^K\mathbb{P}\left(\left.\max_{q^c_{jk}\in \{0,1\}, \mu_{ik} \in \{-B_1, B_1\}}
    \left| \sum_{i=1}^N\sum_{j=1}^P\left(  \lambda_j^*X_{ij} -\lambda_j^*\mathbb{E}(X_{ij}|\balp^*,\V{c}^*) \right)q_{jk}^c\mu_{ik} \right|\geq NP\frac{\epsilon}{K} \right |\V{c}^* \right)\\
    \leq \,\, & K2^{N+P} 2\exp \left (-\frac{NP^2\epsilon^2}{2(B_1K)^2\sum_{j=1}^P\lambda_j^{*2}\sigma_j^{*2}} \right ).
\end{align*}
The second term (b) satisfies
{\small
\begin{align*}
     & \mathbb{P}\left(\left.\max_{q_1\in \mathcal{C}_{\V{c}}, q_2 \in \mathcal{C}_{\V{\alpha}}}
    \sum_{k=1}^K \left| \sum_{i=1}^N\sum_{j=1}^P\left((\hat{\lambda}_j-\lambda_j^*)(X_{ij} - \mathbb{E}(X_{ij}|\balp^*,\V{c}^*))\right)q_{jk}^c\mu_{ik} \right|
  \geq NP\epsilon \right |\V{c}^* \right)\\   
  = \,\, & \mathbb{P}\left( \left. \max_{q_1\in \mathcal{C}_{\V{c}}, q_2 \in \mathcal{C}_{\V{\alpha}}}
    \sum_{k=1}^K \left| \sum_{i=1}^N\sum_{j=1}^P\left((\hat{\lambda}_j-\lambda_j^*)(X_{ij} - \mathbb{E}(X_{ij}|\balp^*,\V{c}^*))\right)q_{jk}^c\mu_{ik} \right|
  \geq NP\epsilon, \exists j, \left|\hat{\lambda}_j -\lambda_j^{*}\right| \geq \epsilon_1 \right |\V{c}^* \right) \\
  & + \mathbb{P}\left(\left. \max_{q_1\in \mathcal{C}_{\V{c}}, q_2 \in \mathcal{C}_{\V{\alpha}}}
    \sum_{k=1}^K \left| \sum_{i=1}^N\sum_{j=1}^P\left((\hat{\lambda}_j-\lambda_j^*)(X_{ij} - \mathbb{E}(X_{ij}|\balp^*,\V{c}^*))\right)q_{jk}^c\mu_{ik} \right|
  \geq NP\epsilon,   \forall j, \left|\hat{\lambda}_j -\lambda_j^{*}\right| < \epsilon_1 \right |\V{c}^* \right)\\
  \leq \,\, & \sum_{j=1}^P\mathbb{P}\left(\left. \left|\hat{\lambda}_j -\lambda_j^{*}\right| \geq \epsilon_1 \right |\V{c}^* \right) +  \mathbb{P}\left( \left. \max_{q^c_{jk}\in \{0,1\}, \mu_{ik} \in \{-B_1, B_1\}}
    \sum_{k=1}^K \left| \sum_{i=1}^N\sum_{j=1}^P\left(X_{ij} - \mathbb{E}(X_{ij}|\balp^*,\V{c}^*)\right)q_{jk}^c\mu_{ik} \right|\epsilon_1
  \geq NP\epsilon \right |\V{c}^* \right)\\
  \leq \,\, & \sum_{j=1}^P\mathbb{P}\left( \left. \left|\hat{\lambda}_j -\lambda_j^{*}\right| \geq \epsilon_1 \right |\V{c}^* \right) +   \sum_{k=1}^K\mathbb{P}\left( \left.\max_{q^c_{jk}\in \{0,1\}, \mu_{ik} \in \{-B_1, B_1\}}
    \left| \sum_{i=1}^N\sum_{j=1}^P\left(X_{ij} - \mathbb{E}(X_{ij}|\balp^*,\V{c}^*)\right)q_{jk}^c\mu_{ik} \right|
  \geq \frac{NP\epsilon}{K\epsilon_1} \right |\V{c}^* \right)\\
  \leq \,\, &  P^2C_1\exp(-C_2N \epsilon_1^2) + K2^{N+P} 2\exp \left(-\frac{NP^2\epsilon^2}{2(B_1K)^2\epsilon_1^2\sum_{j=1}^P\sigma_j^{*^2}} \right).
\end{align*}
}%

The third term (c) satisfies 
{\small
\begin{align*}
    & \mathbb{P}\left(\left. \max_{q_1\in \mathcal{C}_{\V{c}}, q_2 \in \mathcal{C}_{\V{\alpha}}}
    \sum_{k=1}^K \left| \sum_{i=1}^N\sum_{j=1}^P\left((\hat{\lambda}_j-\lambda_j^*) \mathbb{E}(X_{ij}|\balp^*,\V{c}^*) \right)q_{jk}^c\mu_{ik} \right|
  \geq NP\epsilon \right |\V{c}^* \right)\\
  \leq \,\, & \sum_{j=1}^P\mathbb{P}\left( \left. \left|\hat{\lambda}_j -\lambda_j^{*}\right| \geq \epsilon_1 \right |\V{c}^* \right)+ \mathbb{P}\left( \left. \max_{q^c_{jk}\in \{0,1\}, \mu_{ik} \in \{-B_1, B_1\}}
    \sum_{k=1}^K \left| \sum_{i=1}^N\sum_{j=1}^P\mathbb{E}(X_{ij}|\balp^*,\V{c}^*)q_{jk}^c\mu_{ik} \right|\epsilon_1
  \geq NP\epsilon \right |\V{c}^* \right)\\
  \leq  \,\, &  \sum_{j=1}^P\mathbb{P}\left( \left. \left|\hat{\lambda}_j -\lambda_j^{*}\right| \geq \epsilon_1 \right |\V{c}^* \right)+2^N \sum_{k=1}^K \mathbb{P}\left( \left. \max_{q^c_{jk}\in \{0,1\}}
    \left|\sum_{i=1}^N\alpha_{ik}^*\right| \left| \sum_{j=1}^P \lambda_j^*q_{jk}^c \right|
  \geq \frac{NP\epsilon}{B_1\epsilon_1} \right |\V{c}^* \right)\\
  \leq\,\,  &   \sum_{j=1}^P\mathbb{P}\left( \left. \left|\hat{\lambda}_j -\lambda_j^{*}\right| \geq \epsilon_1 \right |\V{c}^* \right) + 
  2^N\sum_{k=1}^K\mathbb{P}\left( \left.
   \left|\sum_{i=1}^N\alpha_{ik}^*\right| P\gamma_2
  \geq \frac{NP\epsilon}{B_1\epsilon_1} \right |\V{c}^* \right)\\
 \leq \,\, & 
  P^2C_1\exp(-C_2N \epsilon_1^2) + K2^N2\exp\left(-\frac{N\epsilon^2}{2B_1^2\gamma_2^2\epsilon_1^2 \max_{k} \omega_{kk}}\right).
\end{align*}
}
Putting everything together,
\begin{align*}
 & \mathbb{P}\left( \max_{q_1\in \mathcal{C}_{\V{c}}, q_2 \in \mathcal{C}_{\V{\alpha}}}  \left|\hat{J}_{\textnormal{core}}(\bqc, \V{\mu}) - \bar{J}_{\textnormal{core}}(\bqc, \V{\mu}) \right|\geq NP\epsilon  \middle| \V{c}^* \right)\\   
  \leq \,\, & \mathbb{P}\left(\max_{q_1\in \mathcal{C}_{\V{c}}, q_2 \in \mathcal{C}_{\V{\alpha}}}
    \sum_{k=1}^K \left| \sum_{i=1}^N\sum_{j=1}^P\left(\lambda_j^*X_{ij} -\lambda_j^*\mathbb{E}(X_{ij}|\balp^*,\V{c}^*) \right)q_{jk}^c\mu_{ik}  \right|\geq NP\epsilon/3  \middle|\V{c}^* \right)\\
    & + \mathbb{P}\left(\max_{q_1\in \mathcal{C}_{\V{c}}, q_2 \in \mathcal{C}_{\V{\alpha}}}
    \sum_{k=1}^K \left| \sum_{i=1}^N\sum_{j=1}^P\left((\hat{\lambda}_j-\lambda_j^*)(X_{ij} - \mathbb{E}(X_{ij}|\balp^*,\V{c}^*))\right)q_{jk}^c\mu_{ik} \right|
  \geq NP\epsilon/3  \middle|\V{c}^* \right)\\
  & + \mathbb{P}\left(\max_{q_1\in \mathcal{C}_{\V{c}}, q_2 \in \mathcal{C}_{\V{\alpha}}}
    \sum_{k=1}^K \left| \sum_{i=1}^N\sum_{j=1}^P\left((\hat{\lambda}_j-\lambda_j^*) \mathbb{E}(X_{ij}|\balp^*,\V{c}^*) \right)q_{jk}^c\mu_{ik} \right|
  \geq NP\epsilon/3  \middle|\V{c}^* \right)\\
  \leq \,\, & K2^{N+P} 2\exp\left (-\frac{NP^2\epsilon^2}{18(B_1K)^2\sum_{j=1}^P\lambda_j^{*2}\sigma_j^{*2}}\right ) \\
  &  + P^2C_1\exp(-C_2N\epsilon_1^2/9) + K2^{N+P} 2\exp \left(-\frac{NP^2\epsilon^2}{18(B_1K)^2\epsilon_1^2\sum_{j=1}^P\sigma_j^{*^2}} \right) \\
  &  + P^2C_1\exp(-C_2N\epsilon_1^2/9) + K2^N2\exp\left(-\frac{N\epsilon^2}{18B_1^2\gamma_2^2\epsilon_1^2 \max_k \omega_{kk}}\right).
\end{align*}
The theorem holds by letting $\epsilon_1^2  = \frac{\epsilon^2}{36 B_1^2\gamma^2 \max_k \omega_{kk}\ln2}$.
\end{proof}
\begin{proof}[Proof of Theorem 2]
Recall that  $\hat{\V{q}}^{\V{c}}$ is the maximizer of $\hat{J}(q_1(\V{c}), q_2(\V{\alpha}),\V{\pi},\M{\Omega})$ over $q_1(\V{c})$.
More specifically, denote 
\begin{align*}
(\hat{\V{q}}^{\V{c}},\hat{q}_2(\V{\alpha}),\hat{\V{\pi}},\hat{\V{\Omega}})=\max_{q_1(\V{c})\in \mathcal{C}_{\V{c}},q_2(\V{\alpha})\in \mathcal{C}_{\V{\alpha}},\V{\pi}\in \mathcal{C}_{\V{\pi}},\V{\Omega}\in \mathcal{C}_{\V{\Omega}}} \hat{J}(q_1(\V{c}), q_2(\V{\alpha}),\V{\pi},\M{\Omega}).
\end{align*}
Let $\check{q}_{2i}(\V{\alpha}_i)$ be the density function of  $N(\V{\alpha}^*_i,\check{\M{V}})$ where $\check{\M{V}}=\textnormal{diag}(1/P,...,1/P)$, and let $\check{q}_2(\V{\alpha})=\prod_{i=1}^N \check{q}_{2i}(\V{\alpha}_i)$. Note that 
\begin{align*}
- \sum_{i=1}^N\int_{\V{\alpha}_i\in \mathbb{R}^{ K}} \check{q}_{2i}( \V{\alpha}_i) \log (\check{q}_{2i}( \V{\alpha}_i)) d \V{\alpha}_i =O(N \log P).
\end{align*}
For all $\epsilon>0$,
\begin{align*}
& \mathbb{P} \left( \left. \min_{\tilde{\V{I}}^c \in \mathcal{E}_{\V{I}^c} } \left (1-\textnormal{Tr} (\M{R}(\hat{\V{q}}^c,\tilde{\V{I}}^c))\right )\geq \epsilon  \right |\V{c}^* \right) \\
= \,\, & \mathbb{P} \left( \left. \min_{\tilde{\V{I}}^c \in \mathcal{E}_{\V{I}^c} } \left (1-\textnormal{Tr} (\M{R}(\hat{\V{q}}^c,\tilde{\V{I}}^c))\right )\geq \epsilon, \bar{J}_{\textnormal{core}}(\bic,  \V{\alpha}^*)+c_1 P+c_2 N\log P \right. \right.\\ 
& \quad \quad \left . \left .-  \bar{J}(\hat{\V{q}}^{\V{c}}, \hat{q}_2(\V{\alpha}),\hat{\V{\pi}},\hat{\M{\Omega}}) \geq  c_3 NP \min_{\tilde{\V{I}}^c \in \mathcal{E}_{\V{I}^c} } \left (1-\textnormal{Tr} (\M{R}(\bqc,\tilde{\V{I}}^c))\right ) \right |\V{c}^* \right)\\
 \,\, & +\mathbb{P} \left( \left. \min_{\tilde{\V{I}}^c \in \mathcal{E}_{\V{I}^c} } \left (1-\textnormal{Tr} (\M{R}(\hat{\V{q}}^c,\tilde{\V{I}}^c))\right )\geq \epsilon, \bar{J}_{\textnormal{core}}(\bic,  \V{\alpha}^*)+c_1 P+c_2 N\log P \right. \right.\\ 
& \quad \quad \left . \left .-  \bar{J}(\hat{\V{q}}^{\V{c}}, \hat{q}_2(\V{\alpha}),\hat{\V{\pi}},\hat{\M{\Omega}}) <  c_3 NP \min_{\tilde{\V{I}}^c \in \mathcal{E}_{\V{I}^c} } \left (1-\textnormal{Tr} (\M{R}(\bqc,\tilde{\V{I}}^c))\right ) \right |\V{c}^* \right) \\
\leq \,\, & \mathbb{P} \left( \left.  \bar{J}_{\textnormal{core}}(\bic,  \V{\alpha}^*)+c_1 P+c_2 N\log P -  \bar{J}(\hat{\V{q}}^{\V{c}}, \hat{q}_2(\V{\alpha}),\hat{\V{\pi}},\hat{\M{\Omega}}) \geq c_3 NP \epsilon  \right |\V{c}^* \right) +o(1) \\
\leq \,\, & \mathbb{P} \left( \left.  \bar{J}_{\textnormal{core}}(\bic,  \V{\alpha}^*)+c_1 P+c_2 N\log P - \hat{J}(\V{I}^c, \check{q}_2(\V{\alpha}),\hat{\V{\pi}},\hat{\M{\Omega}}) \right. \right. \\
& \quad \quad \left . \left .+\hat{J}(\hat{\V{q}}^{\V{c}}, \hat{q}_2(\V{\alpha}),\hat{\V{\pi}},\hat{\M{\Omega}}) -\bar{J}(\hat{\V{q}}^{\V{c}}, \hat{q}_2(\V{\alpha}),\hat{\V{\pi}},\hat{\M{\Omega}}) \geq c_3 NP \epsilon  \right |\V{c}^* \right) +o(1) \\
 \leq \,\, & \mathbb{P} \left( \left.  \bar{J}_{\textnormal{core}}(\bic,  \V{\alpha}^*) -\hat{J}_{\textnormal{core}}(\bic,  \V{\alpha}^*) \geq \frac{1}{2 }c_3 NP \epsilon -\tilde{c}_1 P-\tilde{c}_2 N\log P \right |\V{c}^* \right)  \\
& \quad  +\mathbb{P}\left ( \left. \hat{J}(\hat{\V{q}}^{\V{c}}, \hat{q}_2(\V{\alpha}),\hat{\V{\pi}},\hat{\M{\Omega}}) -\bar{J}(\hat{\V{q}}^{\V{c}}, \hat{q}_2(\V{\alpha}),\hat{\V{\pi}},\hat{\M{\Omega}}) \geq  \frac{1}{2 }c_3 NP  \epsilon  \right |\V{c}^* \right) +o(1) \rightarrow 0,
\end{align*}
where $c_1,c_2,c_3,\tilde{c}_1$, and $\tilde{c}_2$ are constants.
\end{proof}



 \newcommand{\noop}[1]{}

\end{document}